\newtheorem{thm}{Theorem}
\newtheorem{assumption}{Assumption}
\newtheorem{remark}{Remark}
\begin{document}
%
\title{Investigating Task-driven Latent Feasibility \\for Nonconvex Image Modeling}

\author{Risheng Liu,~\IEEEmembership{Member,~IEEE,}
	Pan Mu, Jian Chen, 
	Xin Fan,~\IEEEmembership{Senior Member,~IEEE,}
	and~Zhongxuan Luo
	\thanks{
		This work was supported by the National Natural Science Foundation of China (Nos. 61922019, 61672125, 61733002 and 61772105), LiaoNing Revitalization Talents Program (XLYC1807088) and the Fundamental Research Funds for the Central Universities.}
		 	\thanks{Risheng Liu is with the DUT-RU International School of Information Science $\&$ Engineering, Dalian University of Technology, Dalian 116024, China, and also with the Peng Cheng Laboratory, Shenzhen 518055, China. (Corresponding author, e-mail: rsliu@dlut.edu.cn).}
	\thanks{
		Pan Mu is with the School of Mathematical Sciences, Dalian University of Technology, Dalian 116024, China. (e-mail: muyifan11@mail.dlut.edu.cn)}
	\thanks{
		Jian Chen and Zhongxuan Luo are with the School of Software, Dalian University of Technology, Dalian 116024, China. (e-mail:875277780@mail.dlut.edu.cn, zxluo@dlut.edu.cn)}
	\thanks{
		Xin Fan is with the DUT-RU International School of Information Science $\&$ Engineering, Dalian University of Technology,  
		Dalian 116024, China. (e-mail:xin.fan@dlut.edu.cn)}
	\thanks{Risheng Liu, Xin Fan and Zhongxuan Luo are also with the Key Laboratory for Ubiquitous Network and Service Software of Liaoning Province, Dalian 116024, China.}
}

\markboth{Journal of \LaTeX\ Class Files,~Vol.~14, No.~8, August~2015}%
{Shell \MakeLowercase{\textit{et al.}}: Bare Demo of IEEEtran.cls for IEEE Journals}
%



\maketitle

\begin{abstract}
Properly modeling latent image distributions plays an important role in a variety of image-related vision problems. Most exiting approaches aim to formulate this problem as optimization models (e.g., Maximum A Posterior, MAP) with handcrafted priors. In recent years, different CNN modules are also considered as deep priors to regularize the image modeling process. However, these explicit regularization techniques require deep understandings on the problem and elaborately mathematical skills. In this work, we provide a new perspective, named Task-driven Latent Feasibility (TLF), to incorporate specific task information to narrow down the solution space for the optimization-based image modeling problem. Thanks to the flexibility of TLF, both designed and trained constraints can be embedded into the optimization process. By introducing control mechanisms based on the monotonicity and boundedness conditions, we can also strictly prove the convergence of our proposed inference process. We demonstrate that different types of image modeling problems, such as image deblurring and rain streaks removals, can all be appropriately addressed within our TLF framework. Extensive experiments also verify the theoretical results and show the advantages of our method against existing state-of-the-art approaches.
\end{abstract}

\begin{IEEEkeywords}
Low-level vision, nonconvex image modeling, task-driven feasibility, maximum a posterior. 
\end{IEEEkeywords}

%
\IEEEpeerreviewmaketitle

\section{Introduction}

\begin{figure}[t]
	\centering
	\begin{tabular}{c}
		\includegraphics[height=0.210\textwidth]{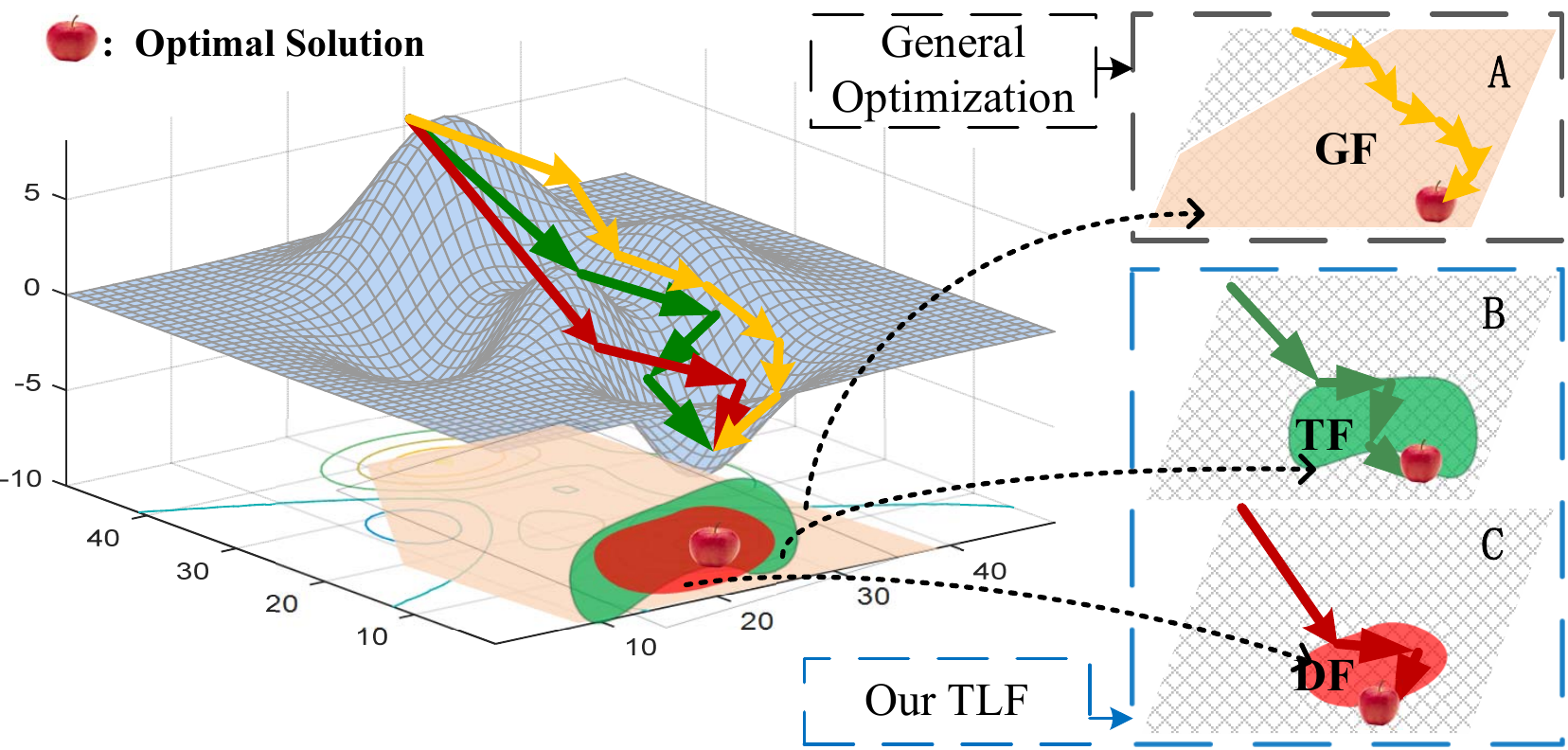}	
	\end{tabular}
	\caption{The illustration of our TLF paradigm. This illustrates and compares the general optimization strategy (i.e., ``A'', with generally feasible regions, abbreviated as GF) and our proposed ones (i.e., TLF in ``B'' and DTLF in ``C'' with task-driven feasible regions (TF for short) and data-driven extension feasible regions (DF for short), respectively). We will demonstrate that our task-driven feasibility paradigm can significantly improve the optimization process. Especially, the data-driven extension scheme can investigate task-driven feasibility form with training data, thus more suitable for real-world applications.}
	\label{fig:demo}
\end{figure}

\IEEEPARstart{M}{any} image-based low-level computer vision problems require to properly formulate the latent image distributions. 
The most widely used paradigm is to formulate this task within the Maximum A Posterior (MAP) estimation framework with conditional probability $p(\mathbf{b}|\mathbf{x})$ and prior distribution $p(\mathbf{x})$, i.e., $p(\mathbf{x}|\mathbf{b})\propto p(\mathbf{b}|\mathbf{x})p(\mathbf{x})$. 
Solving the above MAP problem is equivalent to dealing with the following minimization problem 
\begin{equation}\label{eq:model_orginal}
\begin{array}{c}\min\limits_{\mathbf{x}}F(\mathbf{x})=f(\mathbf{x})+\psi(\mathbf{x}),
\end{array}
\end{equation}
where $f(\mathbf{x})=-\log p(\mathbf{b}|\mathbf{x})$ and $\psi(\mathbf{x})=- \log p(\mathbf{x})$ capture the loss of data fitting and the regularization, respectively. In this work, we assume that the loss $f(\mathbf{x})$ is smooth, while the regularization $\psi(\mathbf{x})$ could be nonconvex and nonsmooth. Over the past decades, both numerical optimization techniques~\cite{beck2009fasta,liu2019surface,xu2012structure,cheng2019image,krishnan2009fast}, and deep learning approaches~\cite{liu2016learning,romano2017little,zhang2017learning,chen2017trainable,gregor2010learning,mu2018learning,liu2019knowledge,Fang2020soft} have been developed to address the MAP inference tasks.

In view of the ill-posed nature of these image modeling tasks, it is necessary to design priors for getting desired solutions. For example, many image restoration tasks utilize a sparsity prior as the regularization term~\cite{beck2009fasta,rudin1992nonlinear}. In particular, Eq.~\eqref{eq:model_orginal} can be minimized by a broad class of general numerical optimization methods, among which Proximal Gradient (PG)~\cite{bertsekas2015convex}, Block Coordinate Descent (BCD)~\cite{tseng2001convergence}, Half Quadratic Splitting (HQS)~\cite{nikolova2005analysis} and Alternating Direction Method of Multipliers (ADMM)~\cite{boyd2011distributed} are proven to be the most reliable methods. Over the past decades, many efforts have been devoted to these schemes. For example, by integrating Nesterov's accelerated gradient method~\cite{nesterov1983method} into the fundamental PG scheme, APG is initially developed for convex models~\cite{beck2009fasta,beck2009fast}. Subsequently, other typical APGs are derived solving problem~\eqref{eq:model_orginal}, including monotone APG (mAPG)~\cite{li2015accelerated}, inexact APG (niAPG)~\cite{yao2016efficient}, and momentum APG for nonconvex problem (APGnc)~\cite{li2017convergence}, etc. Optimization designed priors strategies provide a mathematical understanding of their behaviors with well-defined regularization properties. To construct and solve the flexible, exact proper prior is challenging. However, simple regularizer performs poorly when compared with state-of-the-art methodologies in real-world applications. This is because these methods can not adequately exploit the particular structures of the image processing tasks at hand and the data distributions. These limits make it difficult to solve the problems in a purely optimized manner with designed priors.
  
In recent years, various deep learning strategies~\cite{liu2016learning,chen2017trainable,gregor2010learning} have been proposed to address image modeling problems. These discriminative learning methods aim to combine the classical numerical solvers and the collected training data to obtain some task-specific iterations. Similar to this view, plug-in schemes replace the regularization term by using task-related operator and have been extensively studied with a great empirical success for vision problems~\cite{venkatakrishnan2013plug,zhang2017learning,Wang2017Parameter,chan2017plug,zhang2019deep}. Indeed, these algorithms perform better than some state-of-the-art methods in real-world applications. Unfortunately, these plug-in schemes with implicit regularization term may break the properties and structures of the objective in Eq.~\eqref{eq:model_orginal}. Thus, the existing proofs only demonstrate that the iteration sequences converge to a fixed point without knowing the relationship between it and the optimal solutions. By introducing spectral normalization technique for more accurately constraining deep learning-based denoisers, the fixed-point theory is established in~\cite{ryu2019plug}. Whereas this theory result is effective only under the strongly convex condition of function $f(\mathbf{x})$ which is unsatisfied in plenty of vision problems, for example when the data fitting term $f(\mathbf{x})$ is setting as $\frac{1}{2}\|\mathbf{Ax}-\mathbf{b}\|_2^2$ with non-column full rank matrix $\mathbf{A}$, the strong convex property is unattainable. In contrast to these implicit plug-and-play methods, an explicit plug-in scheme is developed in~\cite{romano2017little}, named Regularization by Denoising (RED), with explicit Laplacian-based regularization functional. While the regularization term is required to be symmetric which is unsatisfied in many state-of-the-art methods, such as NLM~\cite{Kostadin2007Image}, RF~\cite{unser1991recursive} and BM3D~\cite{Kostadin2007Image} etc. Further work has been discussed in~\cite{hong2019acceleration,reehorst2019regularization}. Optimal condition-based approaches, such as~\cite{liu2019deep,liu2019convergence} have also been developed for solving Eq.~\eqref{eq:model_orginal}. However, their conditions often rely on estimating the sub-differential, which can only be implicitly attained. Moreover, there is no mechanism to enforce constraints for their inference processes.

As discussed above, existing approaches often aim to introduce complex priors to Eq.~\eqref{eq:model_orginal} and then solve the optimization model by numerical iterations, deep network architectures or their combinations. However, designing efficient regularization often requires deep understandings on the problem and elaborately mathematical skills. Moreover, it is indeed challenging to strictly investigate the mechanisms and analyze the behaviors for these prior modules.
To address the above issues, we propose a new framework, named Task-driven Latent Feasibility (TLF), to investigate the task related information for image modeling from the perspective of latent constraints. Specifically, we introduce an energy-based model as our problem constraint and build a simple bilevel formulation for MAP inference (please see Fig.~\ref{fig:demo} for illustration). In this way, we can utilize rich task information to help narrow down the solution space for our optimization process. We can also introduce data-driven modules to further integrate training data for latent feasibility investigation. Theoretically, we prove that both the proposed TLF and its data-driven extension can converge to a critical point of the original problem in Eq.~\eqref{eq:model_orginal}. We also demonstrate how to apply TLF to address different real-world image processing applications and extensive experimental results verify the efficiency of our method on all the tested problems. In summary, the contributions of this paper mainly include: 
\begin{itemize}
	\item TLF provides a new perspective to understand and formulate task-driven latent feasibility for nonconvex and nonsmooth MAP-based inference tasks. By further embedding a series of data-driven architectures, TLF can also exploit rich data distribution and specific structure information for image modeling.
	\item By investigating the monotonicity and boundedness properties of the iterations, we establish new control mechanisms to guide our MAP-based image propagation towards the desired solutions. We also strictly prove the convergence of TLF and its data-driven extension.
	\item On the application side, we demonstrate how to utilize TLF as a flexible framework to integrate a variety of knowledge-based and data-driven modules to address real-world vision applications. Extensive experiments show the superiority of TLF on the tested problems.
\end{itemize}

\section{The Proposed Algorithmic Framework}\label{sec-proposed-algorithm}

We first introduce a generic feasibility $\mathcal{G}$ to the nonconvex image modeling problem in Eq.~\eqref{eq:model_orginal}, i.e.,
\begin{equation}\label{eq:model}
\min\limits_{\mathbf{x}} F(\mathbf{x}),\ s.t. \ \mathbf{x} \in\mathcal{G}.
\end{equation}
In conventional approaches, $\mathcal{G}$ is commonly formulated as explicit feasible regions and/or (in)equations~\cite{bao2014l0,li2016rain}. However, it is indeed challenging to utilize these straightforward constraints to characterize exact solution space for complex real-world vision applications. Therefore, in this work, we would like to propose a new viewpoint to understand and formulate latent feasibility for image modeling. 

\subsection{Task-driven Latent Feasibility (TLF)}\label{subsec:knowledge}
Specifically, we first consider the solution space of the following component energy model as our latent feasibility to Eq.~\eqref{eq:model_orginal}, i.e.,
\begin{equation}\label{eq:inner}
\mathcal{G}(\mathbf{x})\in\arg\min\limits_{\mathbf{x}}\{g(\mathbf{x}) + \phi(\mathbf{x})\},
\end{equation}
where $g(\mathbf{x})$ is differentiable and $\phi(\mathbf{x})$ is nonsmooth and possibly nonconvex. We assume that $g(\mathbf{x})$ and $\phi(\mathbf{x})$ are proper and lower semi-continuous. By specifying $\mathcal{G}$ as that in Eq.~\eqref{eq:inner}, we actually obtain a simple bilevel optimization model. Here we would like to utilize the proximal average technique \cite{sabach-2017:first,mu2020image} to solve this problem. Specifically, we first apply the standard PG rule on Eq.~\eqref{eq:model_orginal}, i.e.,
$\mathcal{F}_t(\mathbf{x})\in\mathtt{prox}_{t\psi}(\mathbf{x}-t\nabla f(\mathbf{x})),$
where $\mathtt{prox}_{t\psi}(\cdot)=\arg\min_{\mathbf{x}}\left\{\psi(\mathbf{x})+\frac{1}{2}\|\mathbf{x}-\cdot\|^2\right\}$, and $t$ denotes step size. Then by averaging ${\mathcal{G}(\mathbf{x}^k)}$ and $\mathcal{F}_{t_k}(\mathbf{x}^k)$ with parameter sequence $\{\alpha^k\}$, we have the following aggregated updating scheme with temporal variable $\mathbf{v}^{k}$:
\begin{equation}
	\mathbf{v}^{k} =(1-\alpha^k)\mathcal{F}_{t_k}(\mathbf{x}^k) + \alpha^k \mathcal{G}(\mathbf{x}^k).
\end{equation}
It can be seen that the above formulation actually provides a way to integrate information from both the original objective and the latent feasibility model. Since the constraint module $\mathcal{G}$ may introduce uncertainty to the propagation, we further design a correction step, named Monotone Descent Updating Scheme ($\text{MDUS}$ for short and stated in Alg.~\ref{alg:MDUF}), to guarantee the monotonicity of the image propagation, i.e., $F(\mathbf{x}^{k+1})\leq F(\mathbf{x}^k_{\mathcal{F}})$. So the complete TLF iteration scheme can be summarized in Alg.~\ref{alg:TLF}. 
\begin{algorithm}[t]
	\caption{Task-driven Latent Feasibility (TLF)}\label{alg:TLF}
	\begin{algorithmic}[1]
		\REQUIRE The input $\mathbf{x}^0$, $\alpha^0$, parameters $t_k\in(0,1/L^f)$ and $\gamma\in(0,1)$.
		\WHILE{not converged}
		\STATE $\mathbf{x}^k_{\mathcal{F}}=\mathcal{F}_{t_k}(\mathbf{x}^k)$ 
		and 
		$\mathbf{x}_{\mathcal{G}}^k=\mathcal{G}(\mathbf{x}^k)$.\label{step:x_sub}
		\STATE $\mathbf{v}^{k}=\alpha^{k}\mathbf{x}_{\mathcal{G}}^{k}+(1-\alpha^{k})\mathbf{x}_{\mathcal{F}}^{k}$.\label{z-update-ex}
		\STATE $\mathbf{x}^{k+1}=\text{MDUS}(\mathbf{v}^k,\mathbf{x}^k_{\mathcal{F}};\alpha^k,\gamma)$.
		\ENDWHILE
	\end{algorithmic}
\end{algorithm}
\begin{algorithm}[t]
	\caption{$\mathbf{x}^{k+1}=\text{MDUS}(\mathbf{v}^k,\mathbf{x}^k_{\mathcal{F}};\alpha^k,\gamma)$}\label{alg:MDUF}
	\begin{algorithmic}[1]
		\IF{$F(\mathbf{v}^{k})\leq F(\mathbf{x}_{\mathcal{F}}^{k}) $}
		\STATE  $\mathbf{x}^{k+1}=\mathbf{v}^{k}$.
		\ELSE
		\STATE $\mathbf{x}^{k+1}=\mathbf{x}_{\mathcal{F}}^{k}$.
		\ENDIF
		\STATE $\alpha^{k+1}=\gamma\alpha^k$.
	\end{algorithmic}
\end{algorithm}

\subsection{Data-driven TLF (DTLF)}
As complex data distribution in real-world applications will affect the energy-based optimization problem, a data-driven feasibility module (denoted as $\mathcal{G}^{\mu}$) is introduced to incorporate designed/trained architectures to optimize Eq.~\eqref{eq:model_orginal}. Specifically, the network-based building block at the $k$-th iteration can be denoted as $\mathcal{N}(\cdot;\bm{\mathcal{W}}_T^k)$, where $\bm{\mathcal{W}}_T^k=\{\mathcal{W}_t^k\}_{t=0}^T$ is the set of learnable parameters with $T$-th training stage\footnote{Please refer to the next section for the detailed structures of $\mathcal{N}$ in real-world applications.}. Practically, the network parameters $\bm{\mathcal{W}}_T^k$ are updated to adapt different task (e.g., in image deblurring application, $\mathcal{N}(\cdot;\bm{\mathcal{W}}_T^k)$ can be designed/trained to adapt different noise level) in each iteration step. 
We denote the temporary variable at the $k$-th iteration as $\tilde{\mathbf{x}}^{k}=\mathcal{N}(\mathbf{x}^k,\bm{\mathcal{W}}_T^k)$. By further considering data-driven extension scheme as a proximal approximation of Eq.~\eqref{eq:inner} with parameter $\mu_k>0$ at the $k$-th iterative, $\mathcal{G}^{\mu}$ can be formulated as the following form
\begin{equation}\label{eq:DF}
\mathcal{G}^{\mu}(\mathbf{x},\tilde{\mathbf{x}}^k)\in\arg\min\limits_{\mathbf{x}}\{g(\mathbf{x}) + \phi(\mathbf{x}) + \mu_k/2\|\mathbf{x}-\tilde{\mathbf{x}}^k\|^2\}.
\end{equation}

By embedding designed/trained architectures to task-driven feasibility module, we then develop Data-driven TLF (DTLF) to optimize the minimization problem described in Eq.~\eqref{eq:model} with the extension scheme (i.e., Eq.~\eqref{eq:DF}). Indeed, $\tilde{\mathbf{x}}^k$ is the output of network $\mathcal{N}(\mathbf{x}^k,\mathcal{W}_T^k)$ which is used to approximate task-driven module $\mathcal{G}(\mathbf{x})$ at the $k$-th iteration. In other words, the network-based building blocks aim to learn task behaviors during iterations. Similar to TLF, it is still free for selecting a method to solve the constraint-based subproblem in Eq.~\eqref{eq:DF}. While, to control the network-based iteration sequence, we introduce a relative loose boundness condition about $\mathcal{G}^{\mu}(\mathbf{x},\tilde{\mathbf{x}}^k)$. This strategy prevents improperly designed/trained architectures, which may deflect our iterative trajectory towards unwanted solutions. The monitor is obtained by checking the boundedness of $\|\mathbf{x}^k_{\mathcal{G}_{\mu}}-\mathbf{x}^k\|$. We summarize the complete DTLF iterations and Boundedness-based Updating Scheme (BUS) in Alg.~\ref{alg:DTLF} and Alg.~\ref{alg:bound}, respectively. The convergence behaviors will be analyzed in the following section.

\begin{remark}
In fact, both MDUS and BUS can help us automatically recognize proper modules to generate our TLF iterations. Specifically, in Alg.~\ref{alg:TLF}, we introduce MDUS (i.e., Alg.~\ref{alg:MDUF}) to guarantee that the proximal averaged propagation can always decrease the objective. If the monotonicity of the objective cannot be guaranteed, TLF will reject this module and perform standard numerical updating to correct the iterations. 	
Moreover, in Alg.~\ref{alg:DTLF}, we further design BUS (i.e., Alg.~\ref{alg:bound}) to introduce a boundedness criterion to prevent improper data-driven architectures. That is, these data-driven modules will be rejected if they do not satisfy our boundedness condition stated in Alg.~\ref{alg:bound}. Please notice that we still perform MDUS to guide the overall iteration for DTLF.
\end{remark}

\begin{algorithm}[t]
	\caption{Data-driven TLF (DTLF)}\label{alg:DTLF}
	\begin{algorithmic}[1]
		\REQUIRE The input $\mathbf{x}^0$, $\mu^0$, $\alpha^0$, parameters $t_k\in(0,1/L^f)$,  $\beta,\ \gamma\in(0,1)$ and $C>0$.
		\WHILE{not converged}
		\STATE $\mathbf{x}^k_{\mathcal{F}}=\mathcal{F}_t(\mathbf{x}^k)$, $\tilde{\mathbf{x}}^k=\mathcal{N}(\mathbf{x}^k,\bm{\mathcal{W}}^k_T)$.
		\STATE
		$\mathbf{x}_{\mathcal{G}_{\mu}}^k=\mathcal{G}_{\text{DF}}^{\mu}(\mathbf{x},\tilde{\mathbf{x}}^k)$.
		\STATE $\mathbf{z}^{k}=\alpha^{k}\mathbf{x}_{\mathcal{G}_{\mu}}^{k}+(1-\alpha^{k})\mathbf{x}_{\mathcal{F}}^{k}$.
		\STATE $\mathbf{u}^k=\text{BUS}(\mathbf{x}^k,\mathbf{x}^k_{\mathcal{G}},\mathbf{x}^k_{\mathcal{G}_{\mu}},\mathbf{z}^k;\mu^k,\beta)$.
		\STATE $\mathbf{x}^{k+1}=\text{MDUS}(\mathbf{u}^k,\mathbf{x}^k_{\mathcal{F}};\alpha^k,\gamma)$.\label{eq:Alg2Condition1}
		\ENDWHILE
	\end{algorithmic}
\end{algorithm}

\begin{algorithm}[t]
	\caption{$\mathbf{x}^{k+1}=\text{BUS}(\mathbf{x}^k,\mathbf{x}^k_{\mathcal{G}},\mathbf{x}^k_{\mathcal{G}_{\mu}},\mathbf{z}^k;\mu^k,\beta)$}\label{alg:bound}
	\begin{algorithmic}[1]
		\IF{$\|\mathbf{x}_{\mathcal{G}_{\mu}}^{k}-\mathbf{x}^{k}\|\leq C \|\mathbf{x}_{\mathcal{G}}^{k}-\mathbf{x}^{k}\|$}\label{eq:AlgNetCondition}
		\STATE $\mathbf{u}^k = \mathbf{z}^k$.
		\ELSE  
		\STATE $\mathbf{u}^k = \alpha^{k}\mathbf{x}_{\mathcal{G}}^{k}+(1-\alpha^{k})\mathbf{x}_{\mathcal{F}}^{k}$, $\mu^{k+1}=\beta\mu^k$.
		\ENDIF
	\end{algorithmic}
\end{algorithm}

\section{Convergence Analysis}\label{subsec:convergence}
In this part, we would like to discuss the convergent behaviors for the proposed TLF and its extension form (i.e., DTLF) under some loose conditions. We suggest readers to refer to~\cite{rockafellar2009variational} for some definitions in variational analysis, such as proper, lower-semicontinuous, coercive and the limiting subdifferential which will be useful in the following analysis. Our convergence results are also based on the following fairly loose assumptions.
 
\begin{assumption}\label{assume}
	The objective function $F(\mathbf{x})$ in Eq.~\eqref{eq:model_orginal} is proper, lower-semicontinuous and coercive. Function $f(\mathbf{x})$ is convex and Lipschitz smooth, i.e., $\forall \mathbf{x},\mathbf{y}\in\mathbb{R}^{D}$, we have $\|\nabla f(\mathbf{x}) - \nabla f(\mathbf{y})\|\leq L^f\|\mathbf{x}-\mathbf{y}\|$, where $L^f$ is the Lipschitz constant for $\nabla f$.
\end{assumption}

\begin{thm}\label{thm:convergence}
	Let $\{\mathbf{x}^k,\mathbf{x}^k_{\mathcal{F}},\mathbf{x}^{k+1}\}_{k\in\mathbb{N}}$ be the iteration sequences generated by Alg.~\ref{alg:TLF}. Then the theoretical results are summarized in the following.
	\begin{itemize}
		\item There exists a constant $\sigma>0$ satisfying that 
		$
		F(\mathbf{x}^{k+1})\leq F(\mathbf{x}^k)-\sigma\|\mathbf{x}_{\mathcal{F}}^{k+1}-\mathbf{x}^k\|^2.
		$
		\item Let $\mathbf{x}^{\ast}$ be any accumulation of the sequence $\{\mathbf{x}^{k+1}\}$ which implies that $\mathbf{x}^{\ast}$ is a critical point of the minimization problem in Eq.~\eqref{eq:model_orginal}, i.e., $0\in\partial F(\mathbf{x}^{\ast})$.
	\end{itemize}
\end{thm}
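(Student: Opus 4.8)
The whole argument reduces to careful bookkeeping around the proximal-gradient step $\mathbf{x}^k_{\mathcal{F}}=\mathcal{F}_{t_k}(\mathbf{x}^k)\in\mathtt{prox}_{t_k\psi}(\mathbf{x}^k-t_k\nabla f(\mathbf{x}^k))$, so I would first record (or cite, e.g. the nonconvex proximal-gradient analyses of Attouch--Bolte--Svaiter and Bolte--Sabach--Teboulle) two standard consequences of Assumption~\ref{assume}. Writing out the optimality of the prox subproblem tested against $\mathbf{x}^k$ and combining it with the descent lemma for the $L^f$-smooth $f$ gives the \emph{sufficient decrease of the PG step}, $F(\mathbf{x}^k_{\mathcal{F}})\le F(\mathbf{x}^k)-(\tfrac{1}{2t_k}-\tfrac{L^f}{2})\|\mathbf{x}^k_{\mathcal{F}}-\mathbf{x}^k\|^2$; the same optimality condition, after adding $\nabla f(\mathbf{x}^k_{\mathcal{F}})$ to both sides, produces a subgradient $\mathbf{w}^k:=\tfrac{1}{t_k}(\mathbf{x}^k-\mathbf{x}^k_{\mathcal{F}})+\nabla f(\mathbf{x}^k_{\mathcal{F}})-\nabla f(\mathbf{x}^k)\in\partial F(\mathbf{x}^k_{\mathcal{F}})$ with $\|\mathbf{w}^k\|\le(\tfrac{1}{t_k}+L^f)\|\mathbf{x}^k-\mathbf{x}^k_{\mathcal{F}}\|$. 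Throughout I take $t_k$ to lie in a fixed compact subinterval of $(0,1/L^f)$, which is the implicit reading of the parameter range in Alg.~\ref{alg:TLF}.

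For the first bullet, observe that MDUS (Alg.~\ref{alg:MDUF}) returns $\mathbf{x}^{k+1}\in\{\mathbf{v}^k,\mathbf{x}^k_{\mathcal{F}}\}$ and keeps $\mathbf{v}^k$ only when $F(\mathbf{v}^k)\le F(\mathbf{x}^k_{\mathcal{F}})$; hence in all cases $F(\mathbf{x}^{k+1})\le F(\mathbf{x}^k_{\mathcal{F}})$. Chaining this with the PG sufficient decrease and setting $\sigma:=\inf_k(\tfrac{1}{2t_k}-\tfrac{L^f}{2})>0$ yields the asserted sufficient-decrease estimate $F(\mathbf{x}^{k+1})\le F(\mathbf{x}^k)-\sigma\|\mathbf{x}^k_{\mathcal{F}}-\mathbf{x}^k\|^2$.

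For the second bullet, summing this over $k$ and using that $F$ is bounded below (proper, l.s.c. and coercive forces $\inf F>-\infty$) gives $\sum_k\|\mathbf{x}^k_{\mathcal{F}}-\mathbf{x}^k\|^2<\infty$, so $\|\mathbf{x}^k_{\mathcal{F}}-\mathbf{x}^k\|\to 0$ and, by the bound above, $\mathbf{w}^k\to 0$. Since $\{F(\mathbf{x}^k)\}$ is nonincreasing, every iterate stays in the bounded sublevel set $\{F\le F(\mathbf{x}^0)\}$, so accumulation points exist; moreover $\|\mathbf{x}^{k+1}-\mathbf{x}^k_{\mathcal{F}}\|\le\alpha^k\|\mathbf{x}^k_{\mathcal{G}}-\mathbf{x}^k_{\mathcal{F}}\|\to 0$ because the recursion $\alpha^{k+1}=\gamma\alpha^k$ with $\gamma\in(0,1)$ drives $\alpha^k\to 0$ while the relevant iterates (and the range of $\mathcal{G}$) stay bounded. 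Hence $\{\mathbf{x}^{k+1}\}$, $\{\mathbf{x}^k_{\mathcal{F}}\}$ and $\{\mathbf{x}^k\}$ share the same accumulation points, so for an accumulation point $\mathbf{x}^\ast$ of $\{\mathbf{x}^{k+1}\}$ one extracts a subsequence with $\mathbf{x}^{k_j}\to\mathbf{x}^\ast$ and $\mathbf{x}^{k_j}_{\mathcal{F}}\to\mathbf{x}^\ast$. Passing to the limit in $\mathbf{w}^{k_j}\in\partial F(\mathbf{x}^{k_j}_{\mathcal{F}})$ needs graph-closedness of the limiting subdifferential, which in turn needs the values to converge: l.s.c. gives $\liminf_j F(\mathbf{x}^{k_j}_{\mathcal{F}})\ge F(\mathbf{x}^\ast)$, and for the reverse I would feed $\mathbf{x}^\ast$ into the prox subproblem defining $\mathbf{x}^{k_j}_{\mathcal{F}}$ and let $j\to\infty$ (using $\mathbf{x}^{k_j},\mathbf{x}^{k_j}_{\mathcal{F}}\to\mathbf{x}^\ast$ and continuity of $\nabla f$) to obtain $\limsup_j\psi(\mathbf{x}^{k_j}_{\mathcal{F}})\le\psi(\mathbf{x}^\ast)$, hence $\psi(\mathbf{x}^{k_j}_{\mathcal{F}})\to\psi(\mathbf{x}^\ast)$ and $F(\mathbf{x}^{k_j}_{\mathcal{F}})\to F(\mathbf{x}^\ast)$. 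Closedness of $\mathrm{graph}\,\partial F$ then delivers $0\in\partial F(\mathbf{x}^\ast)$.

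The delicate point — and the one I expect to be the main obstacle — is exactly this last value-convergence step $F(\mathbf{x}^{k_j}_{\mathcal{F}})\to F(\mathbf{x}^\ast)$, the usual bottleneck for critical-point convergence with a nonconvex nonsmooth $\psi$, together with verifying that the branchings inside MDUS cannot corrupt the asymptotics of $\{\mathbf{x}^{k+1}\}$; both are precisely what the monotone acceptance test and the geometric vanishing of $\alpha^k$ are designed to neutralize.
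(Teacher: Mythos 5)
Your proposal follows essentially the same route as the paper's proof: prox-step optimality tested against $\mathbf{x}^k$ plus the descent lemma gives $F(\mathbf{x}^k_{\mathcal{F}})\le F(\mathbf{x}^k)-(\tfrac{1}{2t_k}-\tfrac{L^f}{2})\|\mathbf{x}^k_{\mathcal{F}}-\mathbf{x}^k\|^2$, MDUS supplies $F(\mathbf{x}^{k+1})\le F(\mathbf{x}^k_{\mathcal{F}})$, summability forces $\|\mathbf{x}^k_{\mathcal{F}}-\mathbf{x}^k\|\to 0$, and the same subgradient bound $\|\mathbf{w}^k\|\le(\tfrac{1}{t_k}+L^f)\|\mathbf{x}^k_{\mathcal{F}}-\mathbf{x}^k\|$ yields criticality. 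If anything, you are more careful than the paper at the two points it glosses over --- the value-convergence $F(\mathbf{x}^{k_j}_{\mathcal{F}})\to F(\mathbf{x}^\ast)$ (which the paper dispatches with ``the supreme principle'') and the identification of accumulation points of $\{\mathbf{x}^{k+1}\}$ with those of $\{\mathbf{x}^k_{\mathcal{F}}\}$ via $\alpha^k\to 0$ --- so the argument is correct and matches the paper's.
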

\begin{proof}
	We first show the sufficient descent property about the objective function $F(\mathbf{x})$. By using the proximal gradient scheme in Step \ref{step:x_sub} of Alg.~\ref{alg:TLF} and the Lipschitz property of $f(\mathbf{x})$, i.e.,  
	\begin{equation*}\label{psi-ineq}
	\begin{array}{l}
	\psi(\mathbf{x}_{\mathcal{F}}^k)\!+\!\big<\mathbf{x}_{\mathcal{F}}^k-\mathbf{x}^k,\nabla f(\mathbf{x}^k)\big>\!+\!\frac{1}{2s}\|\mathbf{x}_{\mathcal{F}}^k-\mathbf{x}^k\|^2\leq\psi(\mathbf{x}^k)\ \text{and}\\
	f(\mathbf{x}_{\mathcal{F}}^k)\leq f(\mathbf{x}^k)+\big<\mathbf{x}_{\mathcal{F}}^k-\mathbf{x}^k,\nabla f(\mathbf{x}^k)\big>+\frac{L^f}{2}\|\mathbf{x}_{\mathcal{F}}^k-\mathbf{x}^k\|^2,
	\end{array}
	\end{equation*}
	we conclud that 
	$F(\mathbf{x}_{\mathcal{F}}^k)\leq F(\mathbf{x}^k)-(\frac{1}{2s}-\frac{L^f}{2})\|\mathbf{x}_{\mathcal{F}}^k-\mathbf{x}^k\|^2.$ From the MDUS correction scheme, the sufficient descent property is obtained with $\sigma = \frac{1}{2s}-\frac{L^f}{2}$. Then, we will show the second item. The descent inequality in the first item implies 	$\sum_{k=1}^{\infty} \|\mathbf{x}_{\mathcal{F}}^k-\mathbf{x}^k\|^2<\infty$, 	which means that $\|\mathbf{x}_{\mathcal{F}}^k-\mathbf{x}^k\|\rightarrow 0,\ k\rightarrow\infty$. Thus, there exists subsequence $\{\mathbf{x}_{\mathcal{F}}^{k_p}\}$ and $\{\mathbf{x}^{k_p}\}$ converge to a same point $\mathbf{x}^{\ast}$ as $p\rightarrow\infty$. Incorporating the lower-semicontinuous of $F(\mathbf{x})$ and the supreme principle, we obtain that 	$\lim_{p\rightarrow\infty}F(\mathbf{x}^{k_p})=F(\mathbf{x}^{\ast})$. With the optimal condition $	0\in\partial\psi(\mathbf{x}_{\mathcal{F}}^k)+\nabla f(\mathbf{x}^k)+\frac{1}{s}(\mathbf{x}_{\mathcal{F}}^k-\mathbf{x}^k)$, we know that $-\frac{1}{s}(\mathbf{x}_{\mathcal{F}}^k-\mathbf{x}^k)-\nabla f(\mathbf{x}^k)+\nabla f(\mathbf{x}_{\mathcal{F}}^k)\in\partial F(\mathbf{x}_{\mathcal{F}}^k)$. Actually, for $ k\rightarrow\infty$, we have $	\|\frac{1}{s}(\mathbf{x}_{\mathcal{F}}^k-\mathbf{x}^k)+\nabla f(\mathbf{x}^k)-\nabla f(\mathbf{x}_{\mathcal{F}}^k)\| \leq	(\frac{1}{s}+L^f)\|\mathbf{x}_{\mathcal{F}}^k-\mathbf{x}^k\|\rightarrow 0$. 
	The above implies that $\mathbf{x}^{\ast}$ is a critical point. This complete the proof.
\end{proof}

\begin{figure*}[t]
	\centering
	\begin{tabular}{c@{\extracolsep{0.2em}}c@{\extracolsep{0.2em}}c@{\extracolsep{0.2em}}c}
		\includegraphics[height=0.145\textwidth]{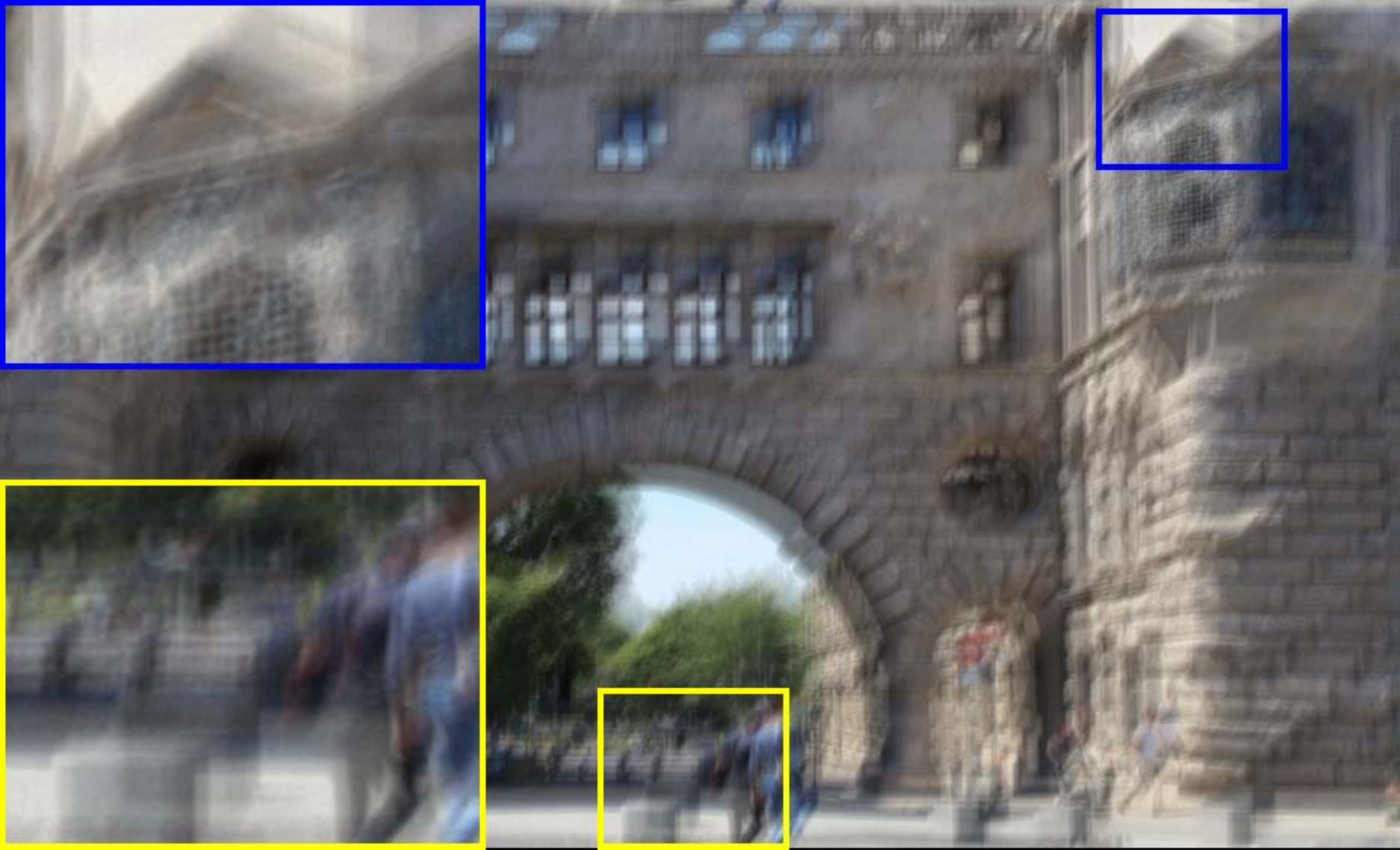}&
		\includegraphics[height=0.145\textwidth]{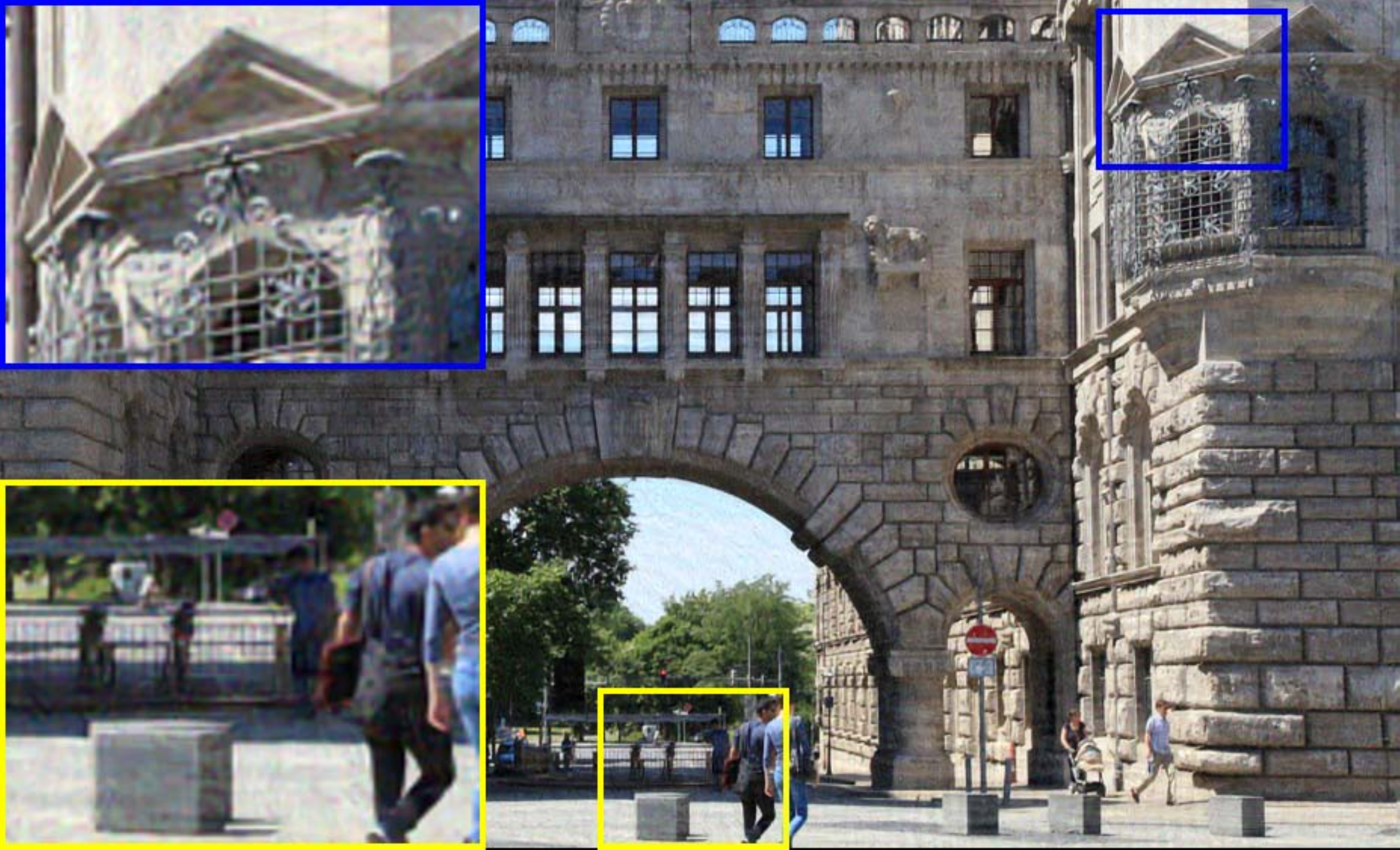}&
		\includegraphics[height=0.145\textwidth]{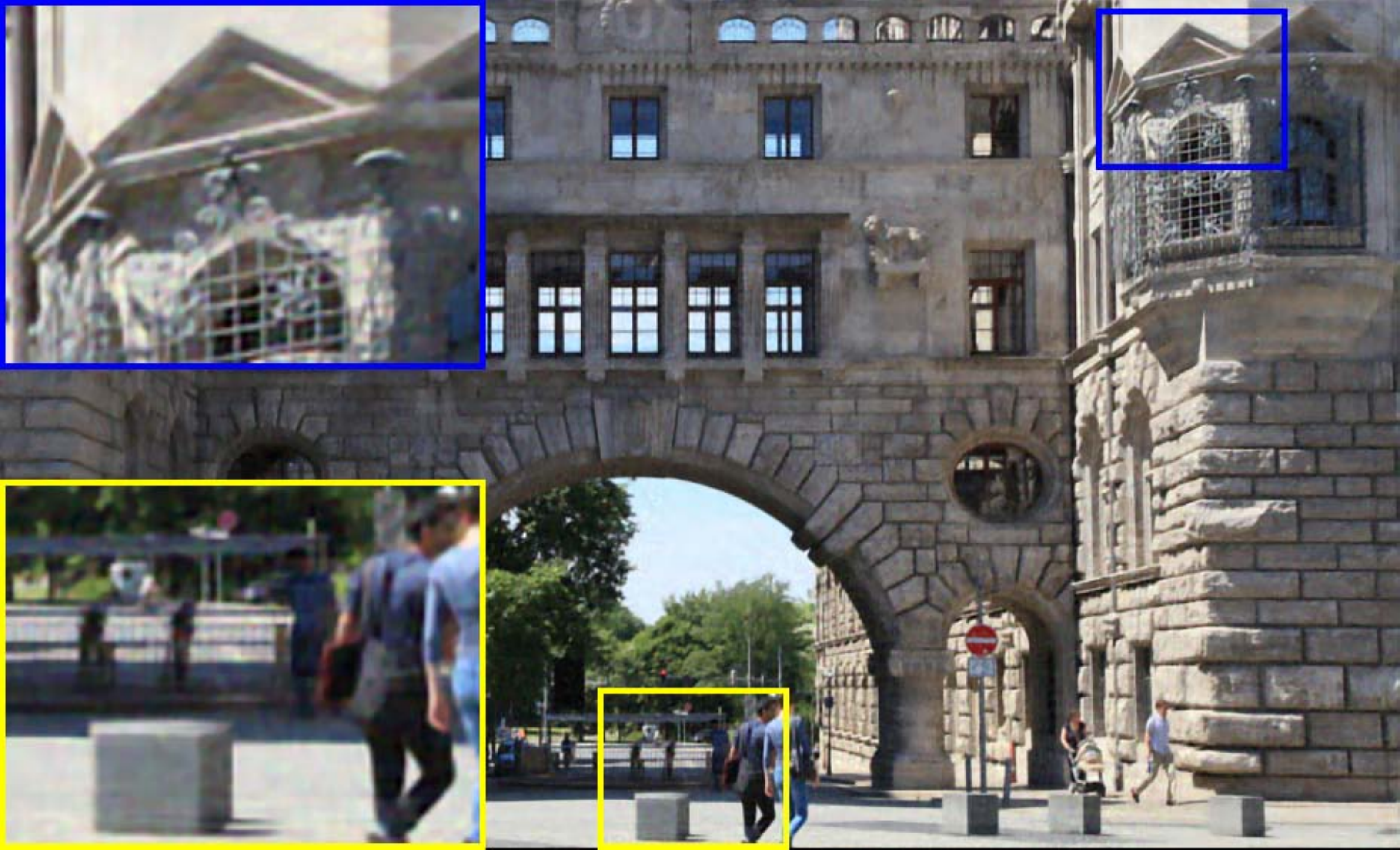}&
		\includegraphics[height=0.145\textwidth]{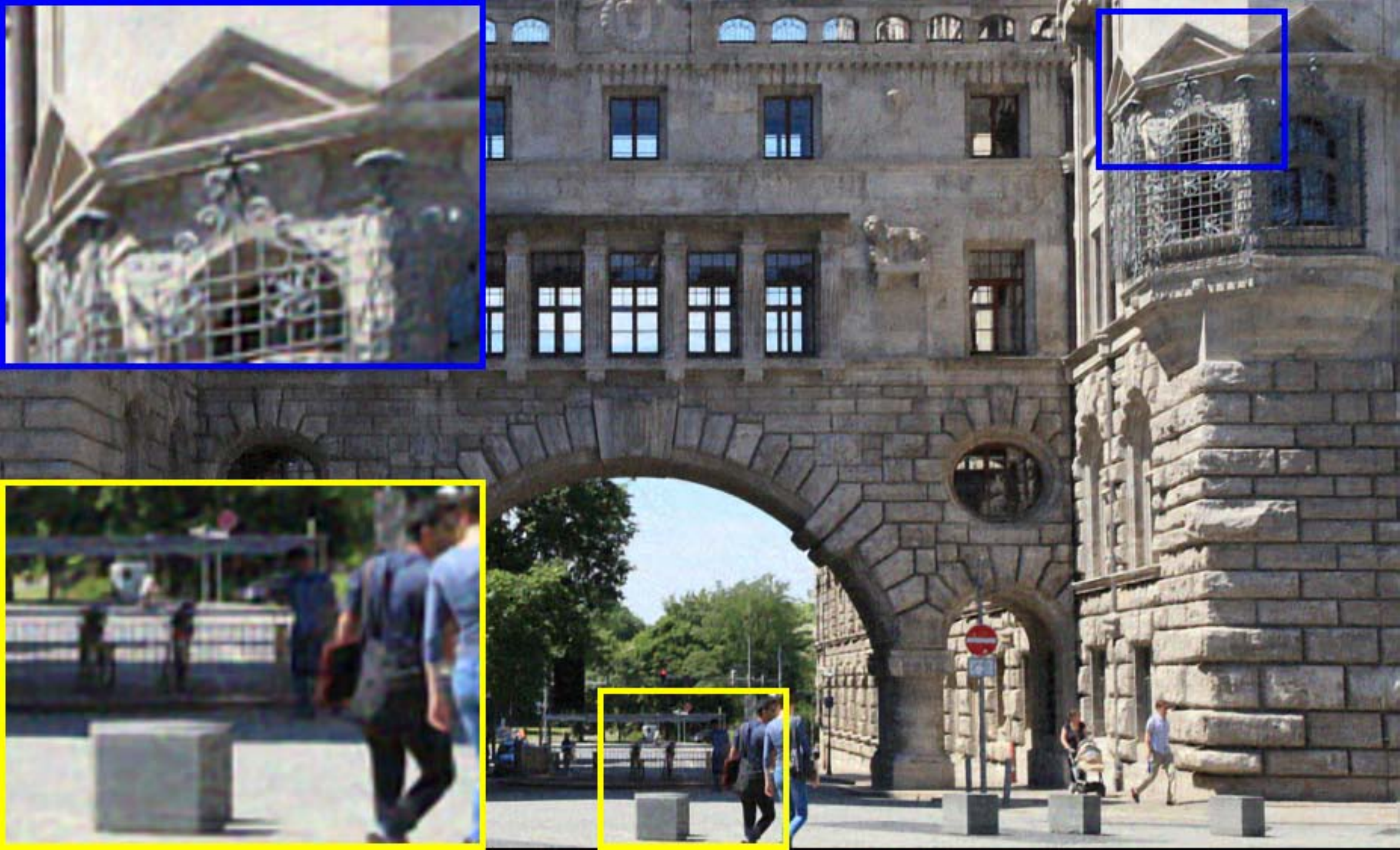}\\
		\footnotesize (a) PSNR / SSIM &\footnotesize (b) 28.1463 / 0.8064 &\footnotesize (c) 28.6781 / 0.8436 &\footnotesize (d) 29.2157 / 0.8447\\
		\footnotesize Input &\footnotesize Eq.~\eqref{eq:model_orginal} &\footnotesize Eq.~\eqref{eq:DF} &\footnotesize Eq.~\eqref{eq:model}\\
	\end{tabular}
	\caption{Illustrating the effectiveness of proposed method for minimize Eq.~\eqref{eq:model_orginal} with the comparison results (i.e., PSNR and SSIM scores). Subfigure (a) is input image. (b) and (c) are the optimization results of Eq.~\eqref{eq:model_orginal} and data-driven scheme (i.e., Eq.~\eqref{eq:DF}) respectively. Subfigure (d) is the result by optimizing Eq.~\eqref{eq:model} with DTLF.}
	\label{fig:CompOur_deblur}. 
\end{figure*}

\begin{thm}\label{cor:global}
	Let $F(\mathbf{x})$ be a semi-algebraic function\footnote{Please refer to \cite{bolte2014proximal} for the formal definition of semi-algebraic function. Actually, many functions arising in learning and vision areas, including $\ell_0$ norm, rational $\ell_p$ norms (i.e., $p=p_1/p_2$ with positive integers $p_1$ and $p_2$) and their finite sums or products, are all semi-algebraic.}. Then we can further have that the sequence $\{\mathbf{x}^k\}_{k\in\mathbb{N}}$ in Alg.~\ref{alg:TLF} has finite length, i.e.,	$\sum_{k=1}^{\infty}\|\mathbf{x}^{k+1}-\mathbf{x}^k\|<\infty$. 
\end{thm}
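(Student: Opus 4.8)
The plan is to run the Kurdyka--{\L}ojasiewicz (KL) machinery for descent methods on top of the estimates already obtained for Theorem~\ref{thm:convergence}. First I would record the boundedness facts the argument needs: since $F$ is coercive and $F(\mathbf{x}^{k})$ is nonincreasing (Theorem~\ref{thm:convergence}), both $\{\mathbf{x}^k\}$ and $\{\mathbf{x}^k_{\mathcal{F}}\}$ stay in a bounded sublevel set of $F$; and $\mathbf{x}^k_{\mathcal{G}}=\mathcal{G}(\mathbf{x}^k)$ is drawn from the fixed solution set of the inner model in Eq.~\eqref{eq:inner}, which we may take to be bounded (true in particular when the inner objective $g+\phi$ is coercive, and satisfied by the selections used in our applications). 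Hence there is a constant $M>0$ with $\|\mathbf{x}^k_{\mathcal{G}}-\mathbf{x}^k\|\le M$ for every $k$.

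The second step is a reduction that neutralises the $\mathcal{G}$-branch. Whichever branch of MDUS (Alg.~\ref{alg:MDUF}) fires, $\mathbf{x}^{k+1}$ equals either $\mathbf{x}^k_{\mathcal{F}}$ or the average $\mathbf{v}^k=\alpha^k\mathbf{x}^k_{\mathcal{G}}+(1-\alpha^k)\mathbf{x}^k_{\mathcal{F}}$, so applying the triangle inequality to $\mathbf{v}^k-\mathbf{x}^k=\alpha^k(\mathbf{x}^k_{\mathcal{G}}-\mathbf{x}^k)+(1-\alpha^k)(\mathbf{x}^k_{\mathcal{F}}-\mathbf{x}^k)$ gives, in both cases, $\|\mathbf{x}^{k+1}-\mathbf{x}^k\|\le\|\mathbf{x}^k_{\mathcal{F}}-\mathbf{x}^k\|+\alpha^kM$. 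Since $\alpha^{k+1}=\gamma\alpha^k$ with $\gamma\in(0,1)$ we have $\sum_k\alpha^kM<\infty$, so it suffices to prove $\sum_k\|\mathbf{x}^k_{\mathcal{F}}-\mathbf{x}^k\|<\infty$.

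For that last claim I would invoke the KL property of the semi-algebraic function $F$ (which admits a desingularising function $\varphi(t)=c\,t^{1-\theta}$ with $\theta\in[0,1)$; see \cite{bolte2014proximal}) at an accumulation point $\mathbf{x}^{\ast}$ furnished by Theorem~\ref{thm:convergence}, at which $0\in\partial F(\mathbf{x}^{\ast})$ and $F(\mathbf{x}^k)\downarrow F(\mathbf{x}^{\ast})$. From the proof of Theorem~\ref{thm:convergence} the two hypotheses of the abstract scheme are already available: the sufficient decrease $F(\mathbf{x}^k)-F(\mathbf{x}^k_{\mathcal{F}})\ge\sigma\|\mathbf{x}^k_{\mathcal{F}}-\mathbf{x}^k\|^2$ (from the proximal-gradient step, before MDUS), and the relative error bound $\mathrm{dist}\big(0,\partial F(\mathbf{x}^k_{\mathcal{F}})\big)\le\rho\|\mathbf{x}^k_{\mathcal{F}}-\mathbf{x}^k\|$ with $\rho=1/s+L^f$. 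After checking that $\mathbf{x}^k_{\mathcal{F}}$ eventually enters the KL neighbourhood of $\mathbf{x}^{\ast}$ (using $\|\mathbf{x}^k_{\mathcal{F}}-\mathbf{x}^{\ast}\|\le\|\mathbf{x}^k_{\mathcal{F}}-\mathbf{x}^k\|+\|\mathbf{x}^k-\mathbf{x}^{\ast}\|$ together with $\|\mathbf{x}^k_{\mathcal{F}}-\mathbf{x}^k\|\to0$ along a subsequence and the monotonicity of $F(\mathbf{x}^k)$), the standard telescoping estimate — concavity of $\varphi$ combined with the two displayed inequalities — yields $\sum_k\|\mathbf{x}^k_{\mathcal{F}}-\mathbf{x}^k\|<\infty$. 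Together with the reduction above this gives $\sum_k\|\mathbf{x}^{k+1}-\mathbf{x}^k\|<\infty$, and as a by-product $\{\mathbf{x}^k\}$ is Cauchy and converges to $\mathbf{x}^{\ast}$.

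The delicate point — and where the real work lies — is the interface between the ``true'' iterate $\mathbf{x}^{k+1}$ and the auxiliary proximal-gradient point $\mathbf{x}^k_{\mathcal{F}}$: the descent and relative-error inequalities are naturally anchored at $\mathbf{x}^k_{\mathcal{F}}$, whereas MDUS may output the averaged point $\mathbf{v}^k$, which is not a proximal-gradient image and for which no subgradient estimate is directly available, so the KL telescoping cannot be applied verbatim to $\{\mathbf{x}^{k+1}\}$. One must instead run it on the auxiliary sequence $\{\mathbf{x}^k_{\mathcal{F}}\}$ while carrying the function-value chain $F(\mathbf{x}^k)\ge F(\mathbf{x}^k_{\mathcal{F}})\ge F(\mathbf{x}^{k+1})\ge F(\mathbf{x}^{k+1}_{\mathcal{F}})$ and the geometrically summable correction $\alpha^kM$ through every step (a few degenerate cases, such as $F(\mathbf{x}^k)=F(\mathbf{x}^{\ast})$ at some finite index, have to be disposed of separately, and are handled by noting they force $\mathbf{x}^k_{\mathcal{F}}=\mathbf{x}^k$ and hence $\|\mathbf{x}^{k+1}-\mathbf{x}^k\|\le\alpha^kM$ thereafter). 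Once this bookkeeping is set up, the remaining manipulations are the routine KL calculus.
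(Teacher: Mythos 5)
Your proposal follows essentially the same route as the paper's proof: both apply the KL inequality of the semi-algebraic $F$ at the proximal-gradient points $\mathbf{x}^k_{\mathcal{F}}$ (combining the sufficient-decrease and relative-error bounds from Theorem~\ref{thm:convergence} with concavity of the desingularising function) to conclude $\sum_k\|\mathbf{x}^k_{\mathcal{F}}-\mathbf{x}^k\|<\infty$, and then absorb the $\mathcal{G}$-branch of MDUS via boundedness of $\{\mathbf{x}^k_{\mathcal{G}}\}$ together with the geometric decay $\alpha^{k+1}=\gamma\alpha^k$. Your write-up is in fact somewhat more explicit than the paper's about the delicate points (where the boundedness of $\|\mathbf{x}^k_{\mathcal{G}}-\mathbf{x}^k\|$ comes from, and the bookkeeping between $\mathbf{x}^{k+1}$ and the auxiliary point $\mathbf{x}^k_{\mathcal{F}}$), but the underlying argument is the same.
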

\begin{proof}
	With the KL property (see \cite{bolte2014proximal}) and the definition of sub-differential, we have 	$\varphi'(F(\mathbf{x}_{\mathcal{F}}^k)-F(\mathbf{x}^{\ast}))\geq \frac{s}{1+sL^f}\|\mathbf{x}_{\mathcal{F}}^k-\mathbf{x}^k\|^{-1}$, where $\varphi$ is the desingularizing function. From the concavity of $\varphi$, we obtain
	\begin{equation*}
	\begin{array}{l}
	\varphi\big(F(\mathbf{x}^{k+1})-F(\mathbf{x}^{\ast})\big)-\varphi\big(F(\mathbf{x}^{k+2})-F(\mathbf{x}^{\ast})\big)\\
	\geq\varphi'(F(\mathbf{x}^{k+1})-F(\mathbf{x}^{\ast}))\big( F(\mathbf{x}^{k+1}-F(\mathbf{x}^{k+2})) \big)\\
	\geq\frac{s}{1+sL^f}\frac{1}{\|\mathbf{x}_{\mathcal{F}}^k-\mathbf{x}^k\|}\cdot\frac{1-sL^f}{2s}\|\mathbf{x}_{\mathcal{F}}^{k+1}-\mathbf{x}^{k+1}\|^2.
	\end{array}
	\end{equation*}
	If we denote $\hat{\vartriangle}_{k,k+1}:=\varphi(F(\mathbf{x}^{k})-\varphi(F(\mathbf{x}^{\ast}))-\varphi(F(\mathbf{x}^{k+1})-F(\mathbf{x}^{\ast}))$ and $\hat{C}=\frac{1-sL^f}{2(1+sL^f)}$, the inequality 	$\|\mathbf{x}_{\mathcal{F}}^{k+1}-\mathbf{x}^{k+1}\|^2\leq\hat{C}\hat{\vartriangle}_{k+1,k+2}\|\mathbf{x}_{\mathcal{F}}^k-\mathbf{x}^k\|
	$ holds which implies $	2\|\mathbf{x}_{\mathcal{F}}^{k+1}-\mathbf{x}^{k+1}\|\leq\|\mathbf{x}_{\mathcal{F}}^k-\mathbf{x}^k\| + \hat{C}\hat{\vartriangle}_{k+1,k+2}$. Subsequently, we have
	\begin{equation*}
	\begin{array}{l}
	2\sum\limits_{i=l+1}^{k}\|\mathbf{x}_{\mathcal{F}}^{i+1}-\mathbf{x}^{i+1}\|
	\leq\sum\limits_{i=l+1}^{k}\|\mathbf{x}_{\mathcal{F}}^{i+1}-\mathbf{x}^{i+1}\|\\
	+\|\mathbf{x}_{\mathcal{F}}^{i+1}-\mathbf{x}^{i+1}\|
	+\hat{C}\hat{\vartriangle}_{l+1,k+2}.
	\end{array}
	\end{equation*}
	Obviously, the above inequality implies the finite length of sequence $\{\mathbf{x}_{\mathcal{F}}^k-\mathbf{x}^k\}$. If $\mathbf{x}^{k+1}=\mathbf{x}_{\mathcal{F}}^k$, the inequality  $\sum_{k=1}^{\infty}\|\mathbf{x}^{k+1}-\mathbf{x}^k\|<\infty$ holds. If $\mathbf{x}^{k+1}=\alpha^k\mathbf{x}_{\mathcal{G}}^{k}+(1-\alpha^k)\mathbf{x}_{\mathcal{F}}^k$, with the proper, lower-semicontinuous and coercive property of $F$, the sequence $\{\mathbf{x}_{\mathcal{G}}^{k+1}\}$ is bounded. Then with the update scheme about $\alpha^k$ and the finite length of $\{\mathbf{x}_{\mathcal{F}}^k-\mathbf{x}^k\}$, we have
	\begin{equation*}\label{eq:cauchy}
	\begin{array}{l}
	\!\sum\limits_{k=1}^{\infty}\!\|\mathbf{x}^{k\!+\!1}\!-\!\mathbf{x}^k\|
	\!\leq\!\sum\limits_{k=1}^{\infty}\!\left(\alpha^k\|\mathbf{x}_{\mathcal{G}}^{k}\!-\!\mathbf{x}^k\|
	\!+\!(1\!-\!\alpha^k)\|\mathbf{x}_{\mathcal{F}}^{k}\!\!-\!\mathbf{x}^k\|\right)
	\!<\!\infty.
	\end{array}
	\end{equation*}	
	This completes the proof.
\end{proof}

Indeed, the summable sequence $\{\|\mathbf{x}^{k+1}-\mathbf{x}^k\|\}_{k\in\mathbb{N}}$ as stated in Theorem~\ref{cor:global} implies that there exists $m>n>l$ satisfying   $\|\mathbf{x}^m-\mathbf{x}^n\|\leq\sum_{k=n}^{m-1}\|\mathbf{x}^{k+1}-\mathbf{x}^k\|\rightarrow 0$, as $l\rightarrow\infty$. Subsequently, it follows that the iteration  $\{\mathbf{x}^k\}_{k\in\mathbb{N}}$ is a Cauchy sequence and hence is a globally convergent sequence which is also defined as sequence convergent.

\begin{remark}
As described in~\ref{subsec:knowledge}, the objective function $F(\mathbf{x})$ is sufficiently descent in Alg.~\ref{alg:DTLF} and it is easy to check that the convergence results of Alg.~\ref{alg:DTLF} can be obtained in the same manner as stated in Theorem~\ref{thm:convergence}. The temporary iteration $\mathbf{u}^k$ in Alg.~\ref{alg:DTLF} is bonded under the checking condition. This implies the boundness of  $\|\mathbf{x}_{\mathcal{G}_{\mu}}^{k}-\mathbf{x}^k\|$. It can be concludes that, in Alg.~\ref{alg:DTLF}, sequence convergent property of $\{\mathbf{x}^k\}$ is attained.
\end{remark}

\section{Applications for Image Modeling}\label{sec:app}

We emphasize that different from these existing image modeling approaches, the proposed TLF allows us to introduce an energy-based feasibility module when solving the optimization model in Eq.~\eqref{eq:model_orginal}. This section first considers non-blind deblurring and image inpainting applications. We take non-blind deblurring as an illustrative example for establishing TLF. Then, we extend TLF to even more challenging single image rain streaks removal task.

\subsection{Image Deblurring}
Here we consider a particular non-blind deblurring task to recover latent image $\mathbf{x}$ from blurred observation $\mathbf{b}$. By formulating this problem as a sparse coding model $\mathbf{b}=\mathbf{D}\bm{\beta} +\mathbf{n}$, where $\bm{\beta}$ denotes the sparse code, $\mathbf{D}$ is a given dictionary and $\mathbf{n}$ is unknown noise, we derive a specific case of Eq.~\eqref{eq:model_orginal}, that is $\min_{\bm{\beta}} \frac{1}{2}\|\mathbf{D}\bm{\beta}-\mathbf{b}\|^2+\lambda_1\|\bm{\beta}\|_p,$ where $p\in[0,1]$, $\lambda_1>0$. Here we follow standard settings in \cite{beck2009fast} to define $\mathbf{D}$ as the inverse wavelet transform. Indeed, $\mathbf{D}$ can be denoted as $\mathbf{D} = \mathbf{K}\mathbf{W}^{\top}$, where $\mathbf{K}$ is the matrix of the blur kernel $\mathbf{k}$, and $\mathbf{W}^{\top}$ is the inverse of the wavelet transform of $\mathbf{W}$. Subsequently, it can be equivalently described as the following intuitive form, i.e., 
\begin{equation}\label{eq:ISTA}
\min\limits_{\mathbf{x}} \frac{1}{2}\|\mathbf{K}\mathbf{x}-\mathbf{b}\|^2+\lambda_1\|\mathbf{W}\mathbf{x}\|_p,
\end{equation}
where we actually have the latent image with the form $\mathbf{x}=\mathbf{W}^{\top}\bm{\beta}$. In the following, we demonstrate how to utilize TLF and DTLF to solve the above nonconvex image modeling problem.

\subsubsection{TLF Strategy} 
As for the module $\mathcal{G}$, we tend to design a relatively simple and task-related model to enforce our constraints on the latent image. Specifically, we consider the following widely used Total Variation (TV) model~\cite{osher2005iterative} in image domain i.e.,
\begin{equation}\label{eq:tv}
	\begin{array}{c}
		\mathcal{G}(\mathbf{x})\in\arg\min\limits_{\mathbf{x}}\frac{1}{2}\|\mathbf{K}\mathbf{x}-\mathbf{b}\|^2+\lambda_2\!\sum\limits_{j\in\{h,v\}}\|\nabla_j\mathbf{x}\|_{q},
	\end{array}
\end{equation}
where $\lambda_2$ is threshold parameter, and $q\in [0,1]$. $\nabla_h$ and $\nabla_v$ respectively denote the gradient on the horizontal and vertical directions. As it is flexible to select operator for solving Eq.~\eqref{eq:tv}, here we indeed apply splitting scheme to update $\mathbf{x}_{\mathcal{G}}^k$. By introducing two auxiliary variables (named as $\mathbf{z}_h$ and $\mathbf{z}_v$), the variable $\mathbf{x}^k_{\mathcal{G}}$ can be updated by 
\begin{equation*}
	\mathbf{x}_{\mathcal{G}}^k=\arg\min\limits_{\mathbf{x}}\frac{1}{2}\|\mathbf{K}\mathbf{x}-\mathbf{b}\|^2+\sum\limits_{j\in\{h,v\}}\rho_j\|\mathbf{z}^k_j-\nabla_j\mathbf{x}\|^2,
\end{equation*}
where $\rho_h>0$ and $\rho_v>0$ are two constant parameters. $\mathbf{z}^k_h$ and $\mathbf{z}^k_v$ are updated by two proximal gradient operators respectively and we omit them here. In addition, applying proximal gradient approach to update $\mathbf{x}_{\mathcal{F}}^k$, which can be transformed as $\mathbf{x}_{\mathcal{F}}^k=\mathbf{W}^{\top}\bm{\beta}^k_{\mathcal{F}}$, yields the following form
\begin{equation*}
\bm{\beta}^k_{\mathcal{F}}\in\mathtt{prox}_{s_k \|\bm{\beta}\|_p}\left(\bm{\beta}^k\!-\!s_k(\mathbf{W}\mathbf{K}^{\top}\mathbf{K}\mathbf{W}^{\top}\bm{\beta}^k\!-\!\mathbf{W}\mathbf{K}^{\top}\mathbf{b})\right),
\end{equation*}
where $\bm{\beta}^k = \mathbf{W}\mathbf{x}^k$. It is clearly to obtain the detailed updating steps following Alg.~\ref{alg:TLF}.

\begin{figure}[t]
	\centering
	\begin{tabular}{c}
		\includegraphics[height=0.185\textwidth]{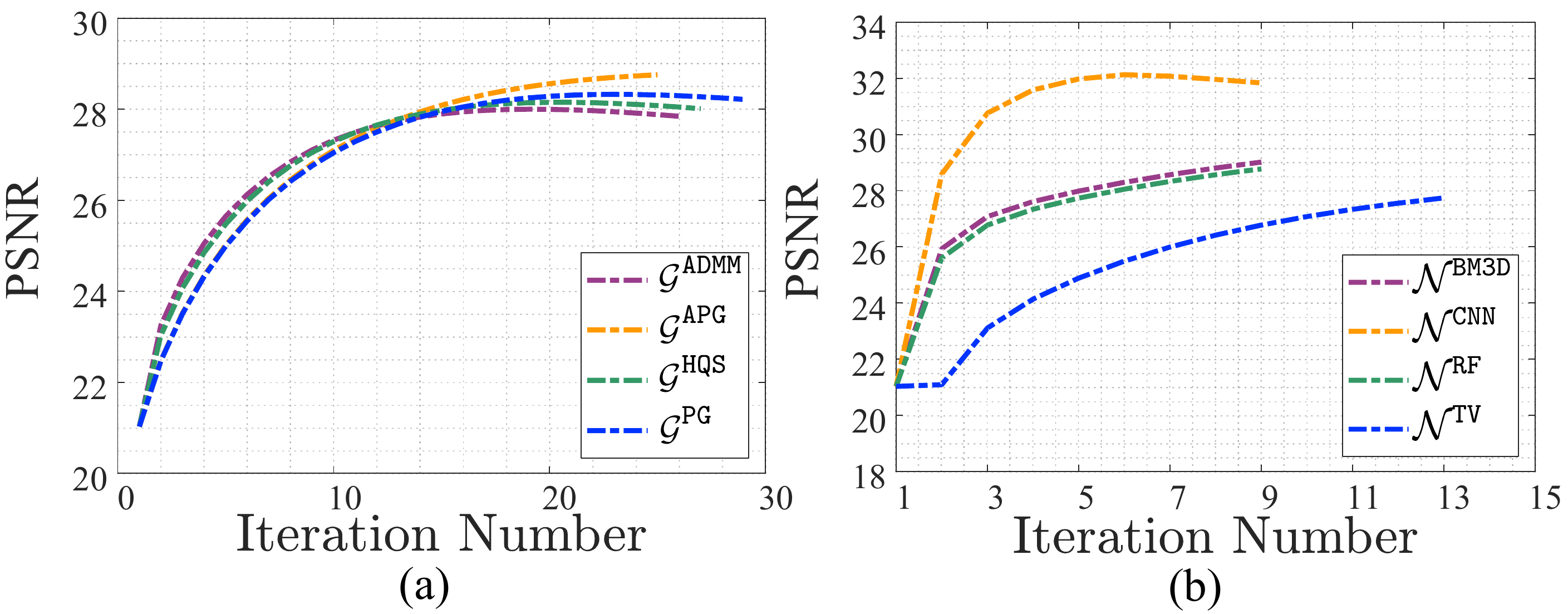}\\
	\end{tabular}
	\caption{Comparing different modularization settings of TLF with additional Gaussian noise level $1\%$. Subfigure (a) plots PSNR of TLF with different first-order methods when updating $\mathbf{x}_{\mathcal{G}}^k$. Subfigure (b) shows the PSNR results of DTLF with different data-ensemble structures, denoted by superscripts BM3D, CNN, RF and TV.} 	
	\label{fig:DiffModularization}
\end{figure}

\begin{figure}[t]
	\centering
	\begin{tabular}{c@{\extracolsep{0.01em}}c}
		\includegraphics[height=0.19\textwidth]{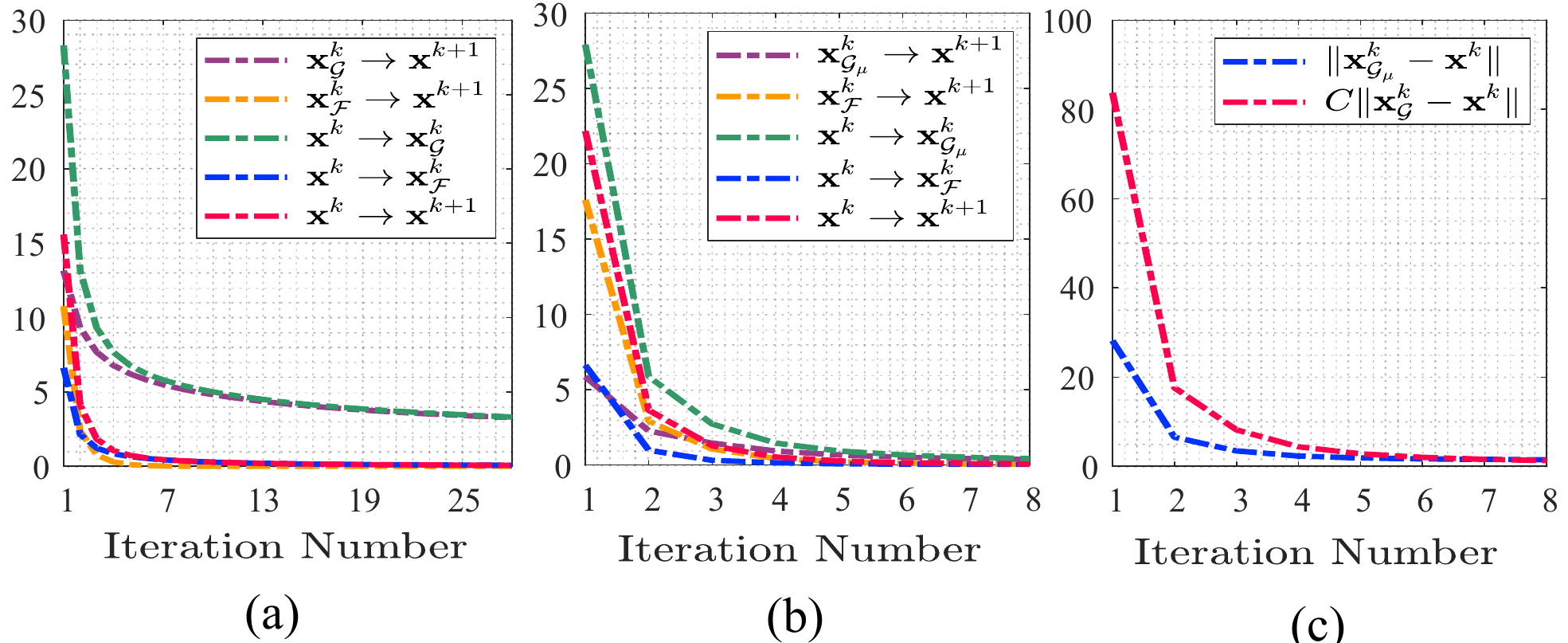}&
	\end{tabular}
	\caption{Illustration the iteration behaviors of TLF in different settings with additional Gaussian noise level 1\textperthousand. Subfigures (a) and (b) plot the variation about each updating of TLF and DTLF respectively. The legend in subfigures (a) and (b), i.e., ``$\text{left}\rightarrow\text{right}$'', means $\|\text{left}-\text{right}\|$. Subfigure (c) plots the error control condition of DTLF.}\label{fig:DiffError}
\end{figure}

\begin{figure}[t]
	\centering
	\begin{tabular}{c@{\extracolsep{0.01em}}}
		\includegraphics[height=0.36\textwidth]{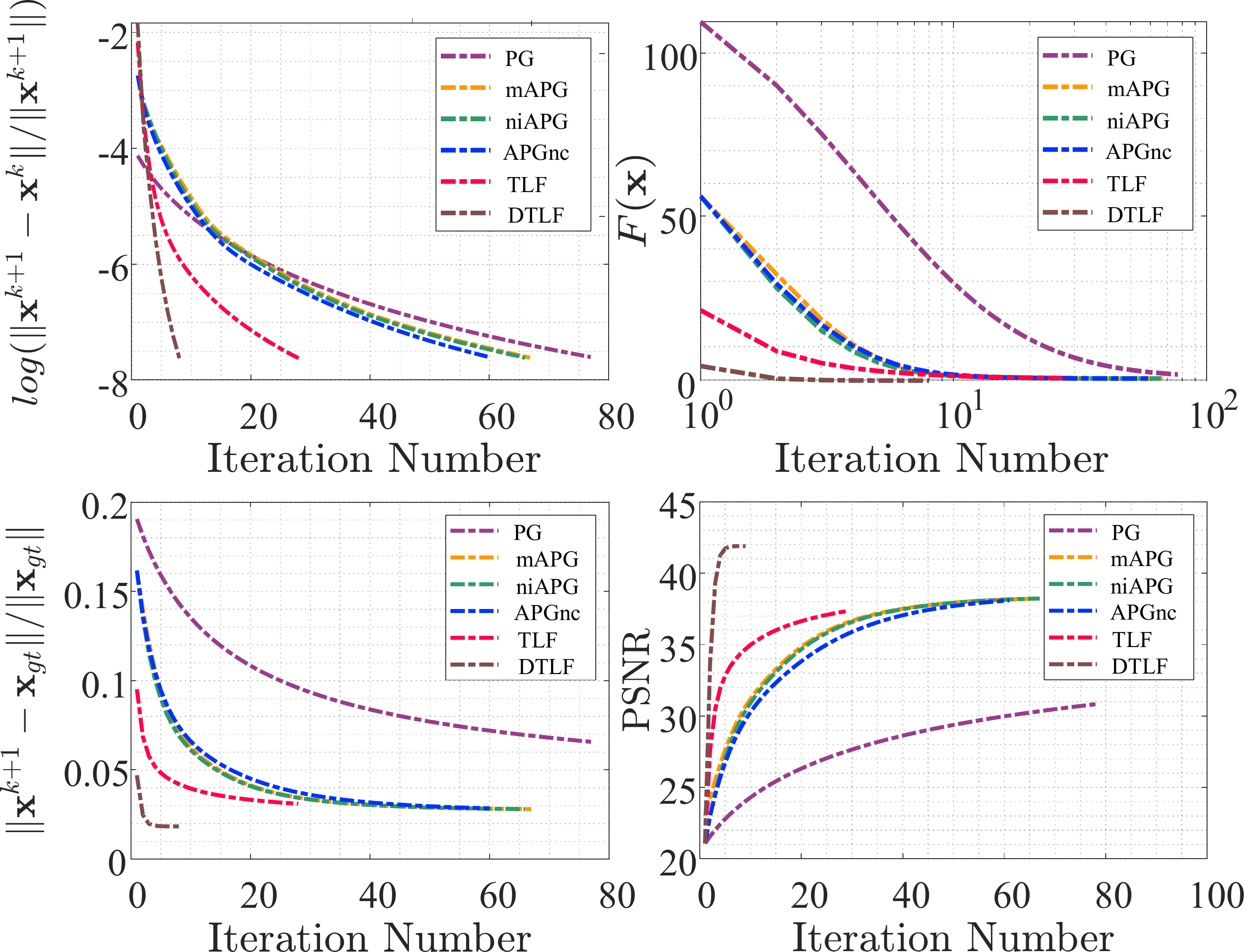}
	\end{tabular}
	\caption{Comparing iteration behaviors of TLF and DTLF to classical first-order methods, including exact ones (PG, mAPG), inexact APG (niAPG) and APGnc with additional Gaussian noise level 1\textperthousand.}
	\label{fig:CurvesCompPGs}
\end{figure}

\subsubsection{DTLF Strategy}
Specifically, in this work, we introduce a residual network $\mathcal{N}(\cdot)$ to our task-driven module with $T$-th training stage. For training strategy, we just adopt a similar method~\cite{zhang2017learning} to train the network. The detailed information about $\mathcal{N}$ can be found in the following section. Notice that standard training strategies can be directly adopted to optimize our basic architecture parameters. If necessary, one may further jointly fine-tune parameters of the whole network after the design phase. By setting $\tilde{\mathbf{x}}^k=\mathcal{N}(\mathbf{x}^k;\bm{\mathcal{W}}_T^k)$, the data-driven scheme, i.e., $\mathcal{G}^{\mu}(\mathbf{x},\tilde{\mathbf{x}}^k)$, can be defined as
\begin{equation*}
\begin{array}{l}
\mathcal{G}^{\mu}(\mathbf{x},\tilde{\mathbf{x}}^k)\in\\
\arg\min\limits_{\mathbf{x}}\frac{1}{2}\|\mathbf{K}\mathbf{x}\!-\!\mathbf{b}\|^2\!+\!\lambda_2\!\sum\limits_{j\in\{h,v\}}\|\nabla_j\mathbf{x}\|_{p}
+\frac{\mu^k}{2}\|\mathbf{x}\!-\!\tilde{\mathbf{x}}^k\|^2.
\end{array}
\end{equation*} 
Hence, following the iteration form of $\mathbf{x}^k_{\mathcal{G}}$ and the above data-driven scheme, we obtain that 
\begin{equation*}
\mathbf{x}^k_{\mathcal{G}_{\mu}} \!=\! \arg\min\limits_{\mathbf{x}}\frac{1}{2}\|\mathbf{K}\mathbf{x}\!-\!\mathbf{b}\|^2\!+\!\sum\limits_{j\!\in\!\{h,v\}}\rho_j\|\mathbf{z}^k_j\!-\!\nabla_j\mathbf{x}\|^2\!+\!\frac{\mu^k}{2}\|\mathbf{x}\!-\!\tilde{\mathbf{x}}^k\|^2.
\end{equation*}
Subsequently, it is easy to obtain the detailed iteration steps following Alg.~\ref{alg:DTLF}.

\subsection{Rain Streaks Removal}
This subsection focuses on single image rain streaks removal task, which is a challenging real-world computer vision problem. A rainy image $\mathbf{y}$ is often considered as linear combination of rain-free background $\mathbf{x}_{b}$ and rain streaks layer $\mathbf{x}_r$, i.e., $\mathbf{y}=\mathbf{x}_{b}+\mathbf{x}_r$. We set $\mathbf{x}:=[\mathbf{x}_b;\mathbf{x}_r]$. As for designing the minimized optimization model, we tend to perform fundamental energy-based sparsity of the observation with a certain transformed domain in objective function which can be formulated as 
\begin{equation*}\label{eq:derain}
\min\limits_{\mathbf{x},\bm{\beta},\bm{\gamma}}f(\mathbf{x}_b,\mathbf{x}_r;\bm{\beta},\bm{\gamma})+\psi_{\bm{\beta}}(\bm{\beta})+\psi_{\bm{\gamma}}(\bm{\gamma})+\chi_{[0,1]}(\mathbf{x}_b,\mathbf{x}_r),
\end{equation*}
where $f(\mathbf{x}_b,\mathbf{x}_r;\bm{\beta},\bm{\gamma})=\frac{1}{2}\|\mathbf{x}_{b}-\mathbf{D}\bm{\beta}\|^2+\frac{1}{2}\|\mathbf{x}_r-\mathbf{D}\bm{\gamma}\|^2$, $\psi_{\bm{\beta}}(\bm{\beta})=\nu_1\|\bm{\beta}\|_{p_1}$,  $\psi_{\bm{\gamma}}(\bm{\gamma})=\nu_2\|\bm{\gamma}\|_{p_2}$, $p_1,\ p_2\in[0,1]$ and $\nu_1,\ \nu_2$ are two positive constants. $\chi_{[0,1]}$ denotes indicator function, i.e., if $\mathbf{x}_b$, $\mathbf{x}_r\in[0,1]$ then $\chi_{[0,1]}(\mathbf{x}_b,\mathbf{x}_r)=0$, otherwise $\chi_{[0,1]}(\mathbf{x}_b,\mathbf{x}_r)=\infty$. $\bm{\beta}$ and $\bm{\gamma}$ are two auxiliary variables serving for this objective subproblem, and respectively denote the sparse codes of $\mathbf{x}_b$, $\mathbf{x}_r$ on $\mathbf{D}$. As for $\mathcal{G}$ stated in Eq.~\eqref{eq:model}, we consider the general TV regularization as the following form
\begin{equation*}
\mathcal{G}\in\arg\min\limits_{\mathbf{x}_b,\mathbf{x}_r}\frac{1}{2}\|\mathbf{y}\!-\!\mathbf{x}_b\!-\!\mathbf{x}_r\|^2\!+\!\rho_1\!\sum_{j\in\{h,v\}}\!\|\nabla\mathbf{x}_b\|_{p_1}\!+\!\rho_2\|\mathbf{x}_r\|_{p_2}. 
\end{equation*} 
In this part, we first introduce two residual network $\mathcal{N}_b$ and $\mathcal{N}_r$. Then we denote two temporary variable $\tilde{\mathbf{x}}_b^k=\mathcal{N}_b(\mathbf{x}_b^k;\bm{\mathcal{W}}_{T,b}^k)$  and $\tilde{\mathbf{x}}_r^k=\mathcal{N}_r(\mathbf{x}_r^k;\bm{\mathcal{W}}_{T,r}^k)$ respectively for background and rain streaks layers. For background layer network $\mathcal{N}_b$, we just follow the aforementioned strategy to build a series of denoising CNNs to extract natural image well. For rain streaks layer, $\mathcal{N}_r$ learns rain streaks behavior from rainy images by training rainy image and synthetic rain streaks layer as degraded and clean image pair. We update variables $\mathbf{x}^k_{\mathcal{F},b}$ and $\mathbf{x}^k_{\mathcal{F},r}$ synchronously as

\begin{equation*}
\left[\!\begin{array}{c}
\mathbf{x}_{\mathcal{F},b}^k\\
\mathbf{x}_{\mathcal{F},r}^k
\end{array}\!\right]
\!=\!\left[\!
\begin{array}{c}
\arg\min_{\mathbf{x}_b} \frac{1}{2}\|\mathbf{x}_b\!-\!\mathbf{D}\bm{\beta}^k\|^2\!+\!\chi_{[0,1]}(\mathbf{x}_b^k,\mathbf{x}_r^k)\\
\arg\min_{\mathbf{x}_r} \frac{1}{2}\|\mathbf{x}_r\!-\!\mathbf{D}\bm{\gamma}^k\|^2\!+\!\chi_{[0,1]}(\mathbf{x}_b^k,\mathbf{x}_r^k)
\end{array}
\!\right]\!,
\end{equation*}
where the auxiliary variables $\bm{\beta}^k$ and $\bm{\gamma}^k$ are updated by proximal gradient operator. Similarly, we have that
\begin{equation*}
\left[\!\begin{array}{c}
\mathbf{x}_{\mathcal{G_{\mu}},b}^k\\
\mathbf{x}_{\mathcal{G_{\mu}},r}^k
\end{array}\!\right]
\!=\!\left[\!
\begin{array}{c}
\arg\min_{\mathbf{x}_b} G(\mathbf{x}_b,\mathbf{x}_r^k)+\frac{\eta_1^k}{2}\|\mathbf{x}_b-\tilde{\mathbf{x}}_b^k\|^2\\
\arg\min_{\mathbf{x}_r} G(\mathbf{x}_b^k,\mathbf{x}_r)+\frac{\eta_2^k}{2}\|\mathbf{x}_r-\tilde{\mathbf{x}}_r^k\|^2
\end{array}
\!\right]\!.
\end{equation*}
Then, following DTLF iterations, we could obtain detailed updating scheme.

Typically, TLF could integrate different domain knowledge to address a broad variety of vision applications, including deblurring, inpainting and rain streaks removal, etc. Here, the matrix $\mathbf{K}$ actually formulates the observation forward model for particular image processing paradigm. Possible choices of $\mathbf{K}$ include an identity operator for denoising, convolution operators for deblurring, filtered subsampling operators for superresolution, the Fourier $k$-domain subsampling operator for magnetic resonance imaging (MRI) reconstruction or mask for image inpainting. We incorporate experimentally designed and trained network architectures into TLF to solve these problems. In summary, the proposed TLF indeed could integrate advantages from different domain knowledge. 

\begin{figure}[t]
	\begin{tabular}{c@{\extracolsep{0.04em}}c@{\extracolsep{0.04em}}c}
		\includegraphics[width=0.16\textwidth]{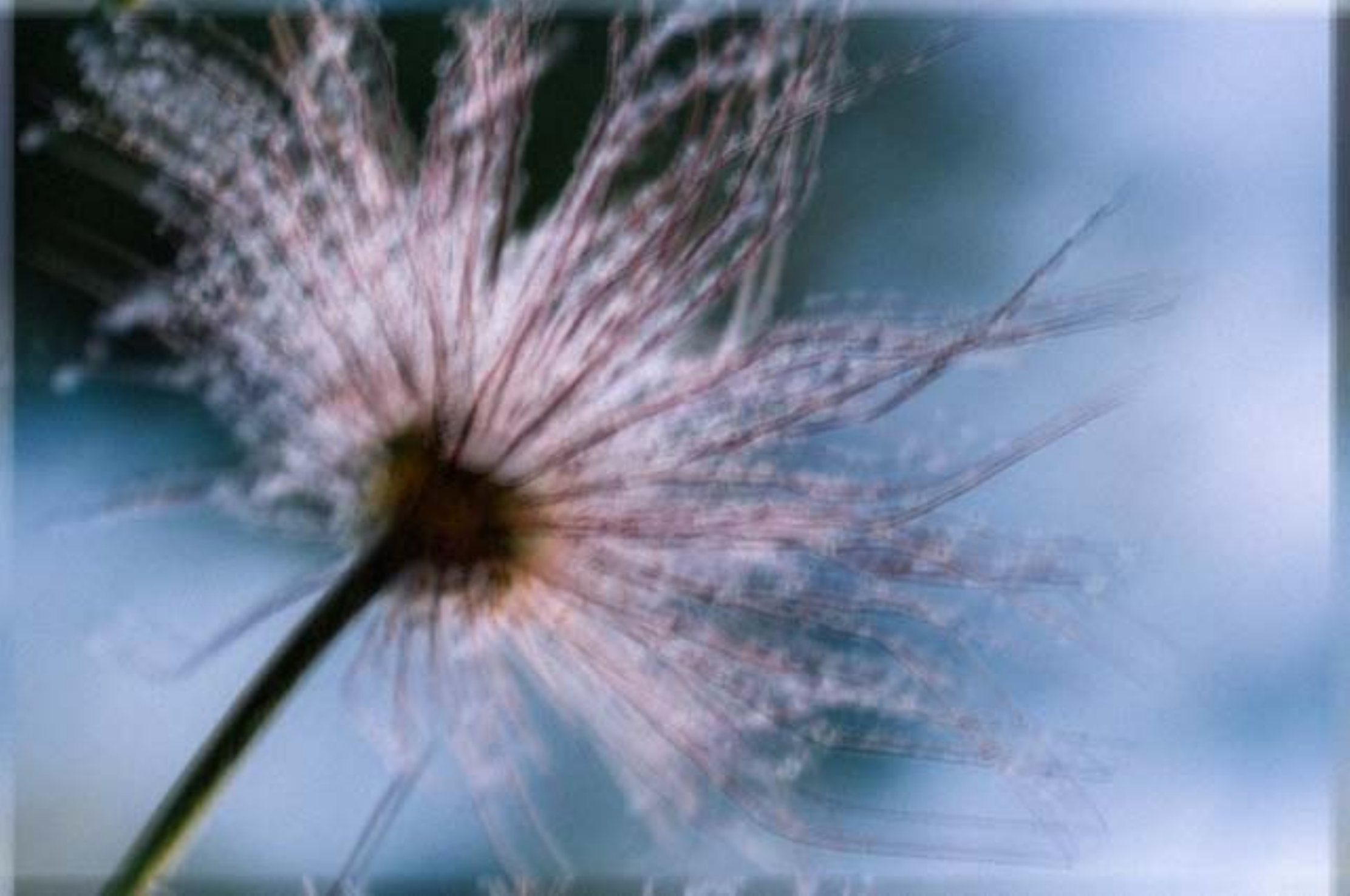}&
		\includegraphics[width=0.16\textwidth]{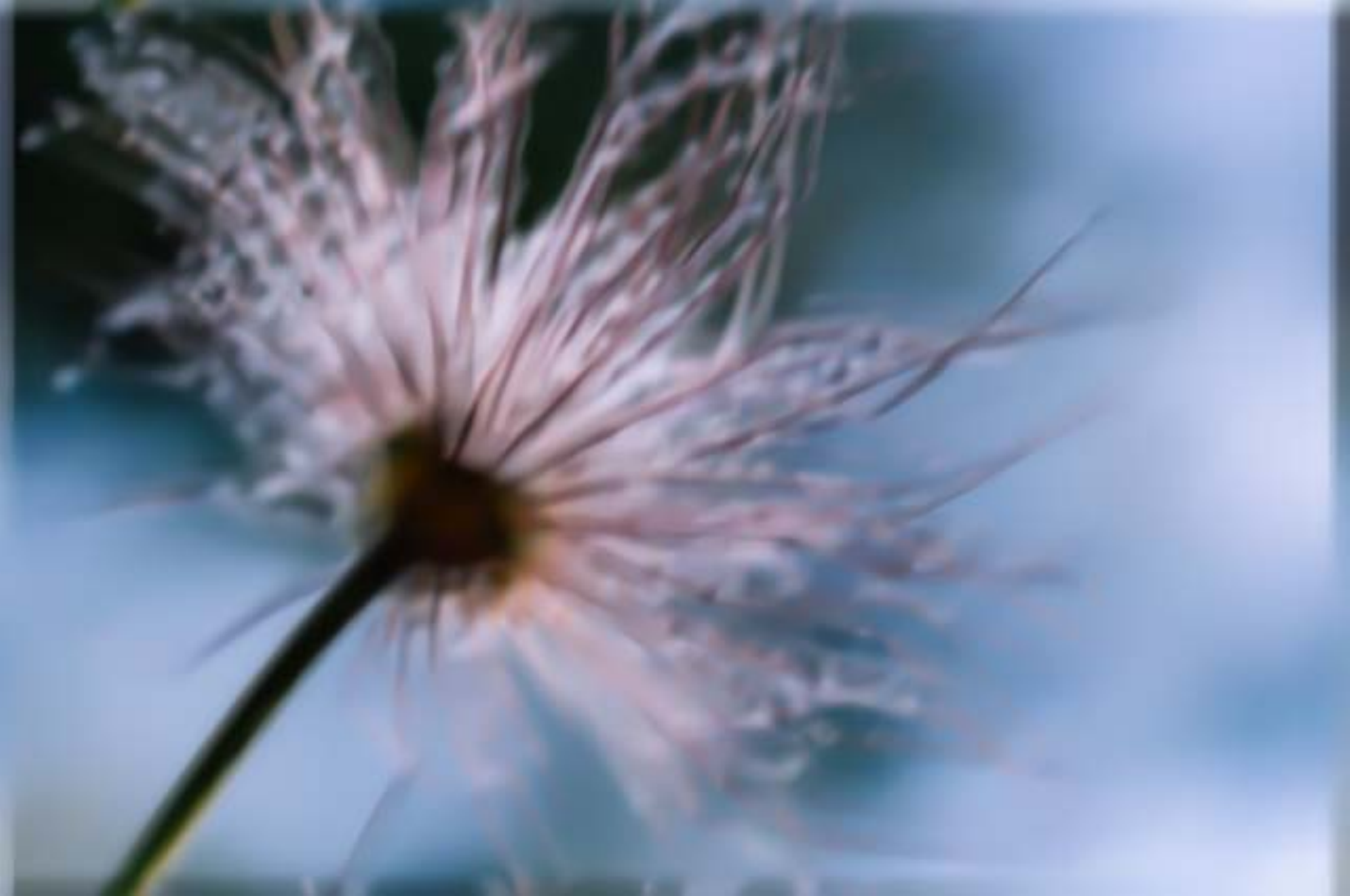}&
		\includegraphics[width=0.16\textwidth]{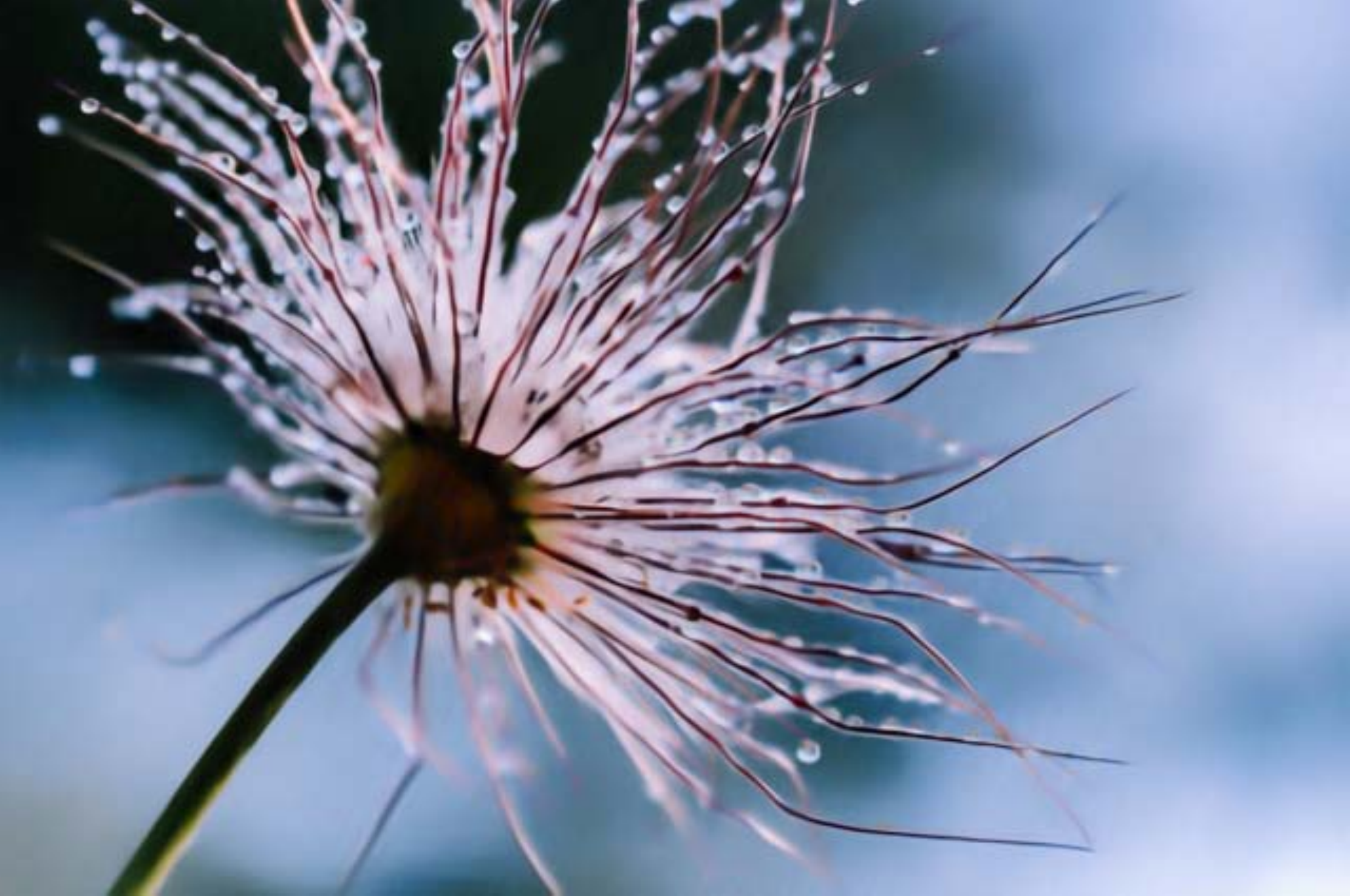}\\
	\footnotesize -- &\footnotesize 17.75 / 0.5449 &\footnotesize 27.47 / 0.8944\\
		\includegraphics[width=0.16\textwidth]{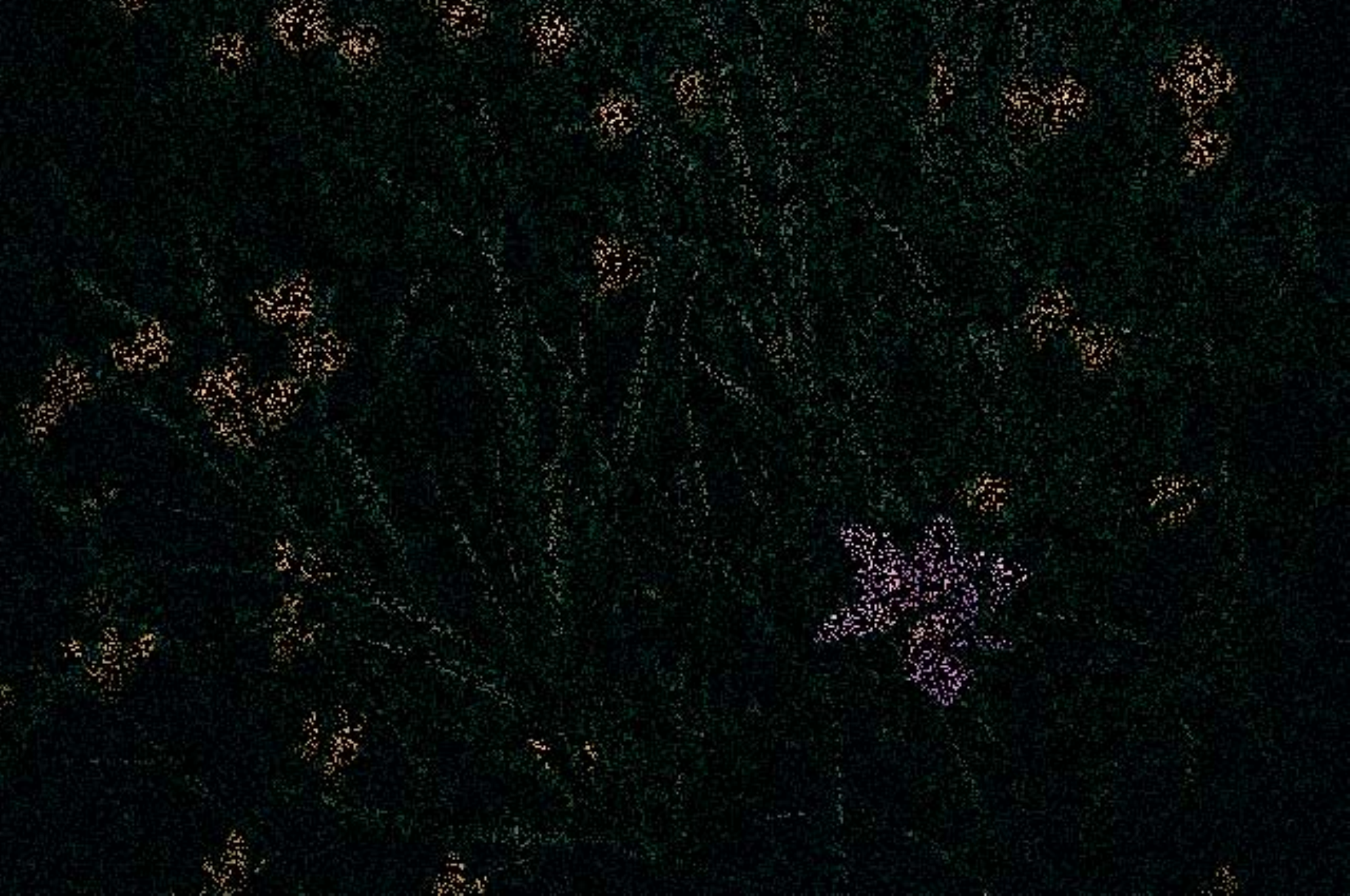}&
		\includegraphics[width=0.16\textwidth]{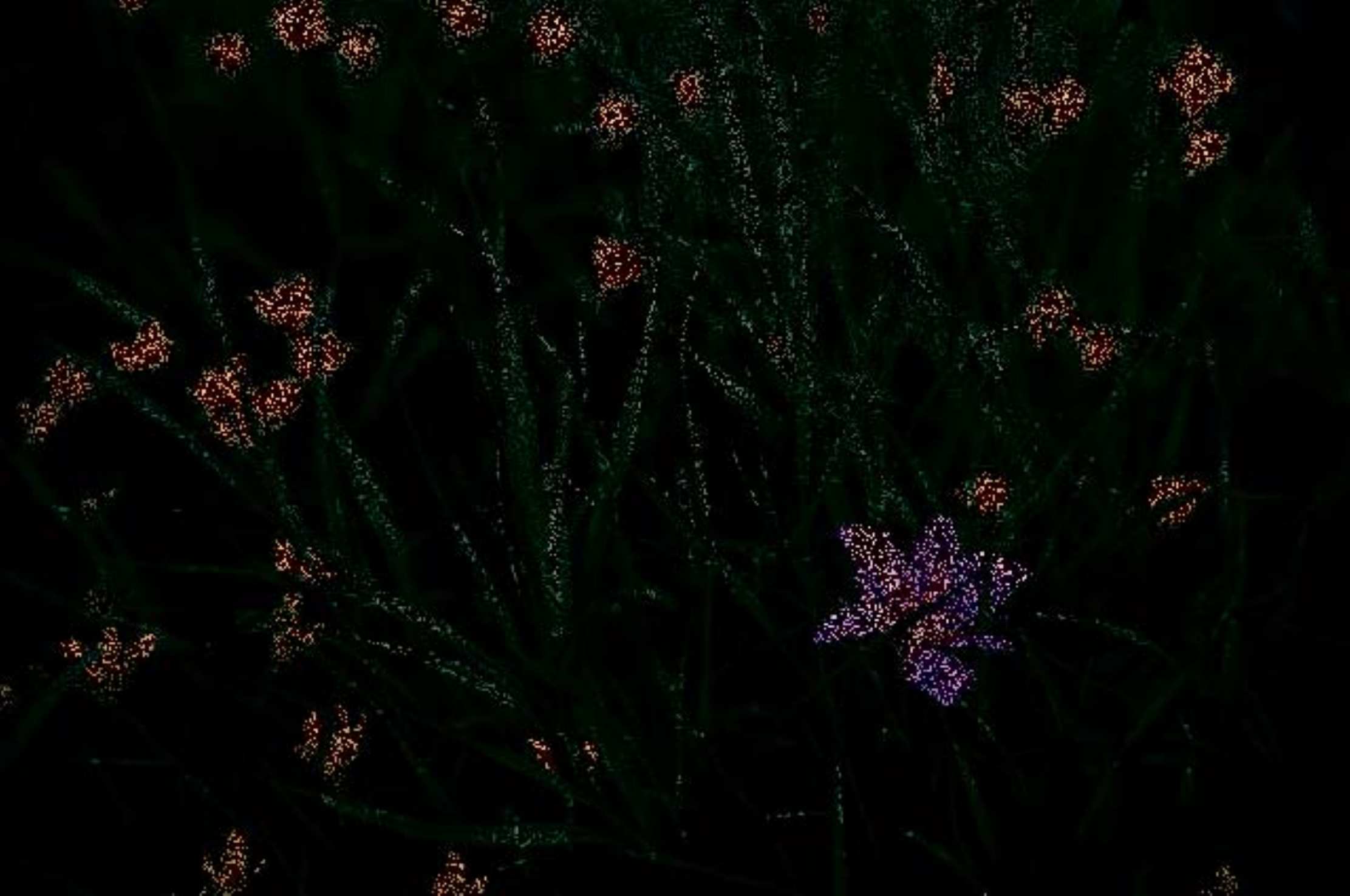}&
		\includegraphics[width=0.16\textwidth]{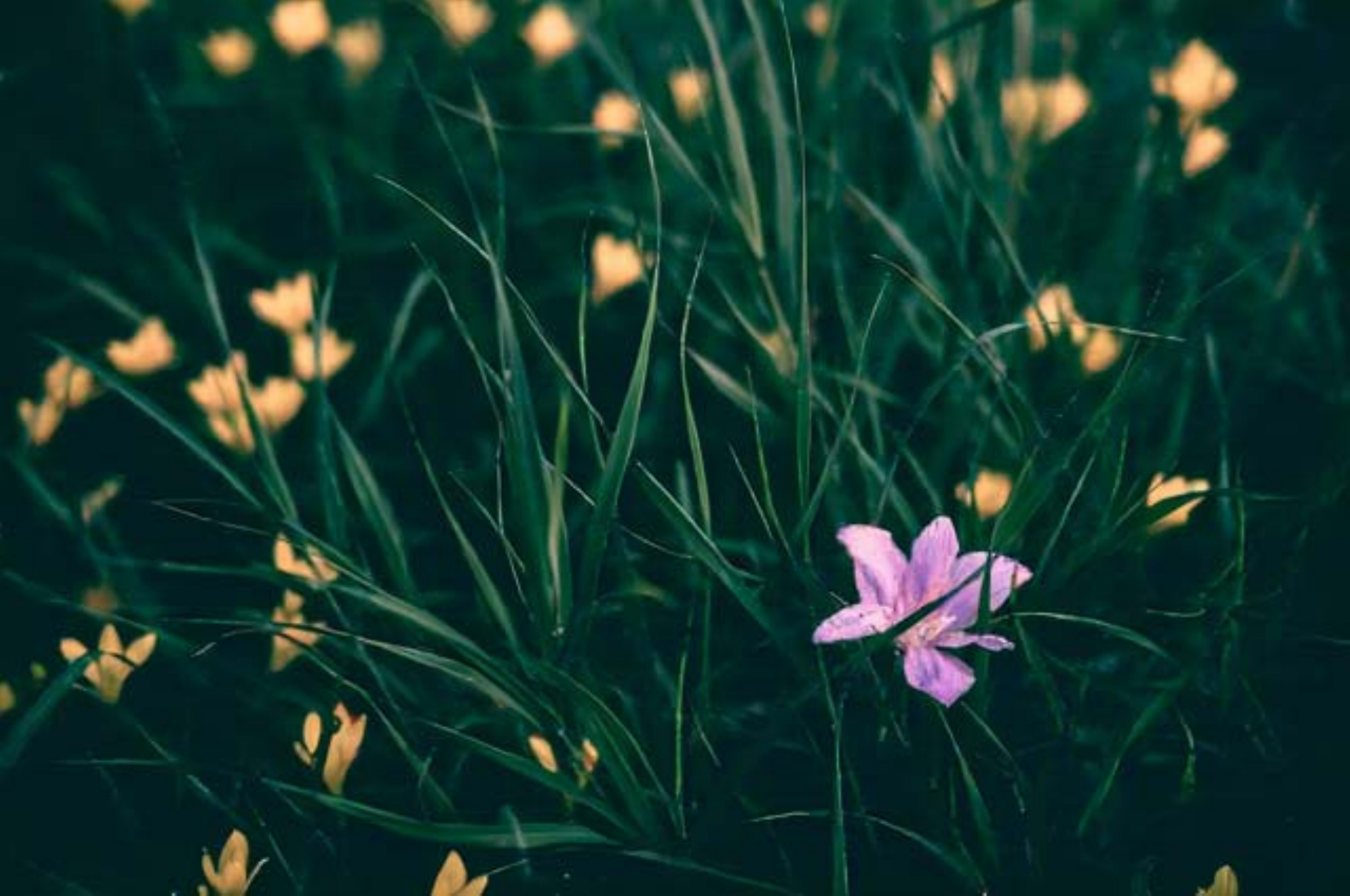}\\
		\footnotesize -- & \footnotesize 15.95 / 0.5497
		&\footnotesize 35.43 / 0.9591\\
		\footnotesize Input & \footnotesize NC-CNNs & \footnotesize Ours
	\end{tabular}
	\caption{Visual comparison between NC-CNNs and our TLF on image deblurring (the top row) and inpainting (the bottom row).} 
	\label{fig:comp_cnns}
\end{figure}

\begin{table}[t]
	\renewcommand{\arraystretch}{1.1}
	\centering
	\caption{Averaged PSNR and SSIM on the Benchmark Image Set \cite{schmidt2014shrinkage}. The first row $\sigma$ represents the Gaussian noise level. The first column is the comparison with traditional methods.}
	\setlength{\tabcolsep}{1.8mm}{
		\begin{tabular}{c|cc|cc|cc}
			\hline
			\multirow{2}*{Methods}& \multicolumn{2}{c|}{1$\%$}&\multicolumn{2}{c|}{2$\%$} &\multicolumn{2}{c}{3$\%$}\\
			\cline{2-7}
			& PSNR & SSIM & PSNR & SSIM & PSNR & SSIM\\
			\hline
			APG & 27.32 & 0.71 & 25.61 & 0.63 & 24.63 & 0.57\\
			mAPG & 26.68 & 0.67 & 25.20 & 0.60 & 24.39 & 0.55\\
			niAPG & 27.24 & 0.73 & 25.63 & 0.64 & 24.76 & 0.61\\
			FTVD & 27.56 & 0.77 & 26.63 & 0.73 & 24.88 & 0.62\\
			Ours & \textbf{28.48} & \textbf{0.81} & \textbf{27.06} & \textbf{0.75} &\textbf{26.13} & \textbf{0.71}\\
			\hline
	\end{tabular}}
	\label{tab:deblur_tradional}
\end{table}

\begin{figure*}[t]
	\begin{tabular}{c@{\extracolsep{0.2em}}c@{\extracolsep{0.2em}}c@{\extracolsep{0.2em}}c@{\extracolsep{0.2em}}c@{\extracolsep{0.2em}}c}
		\includegraphics[width=0.16\textwidth]{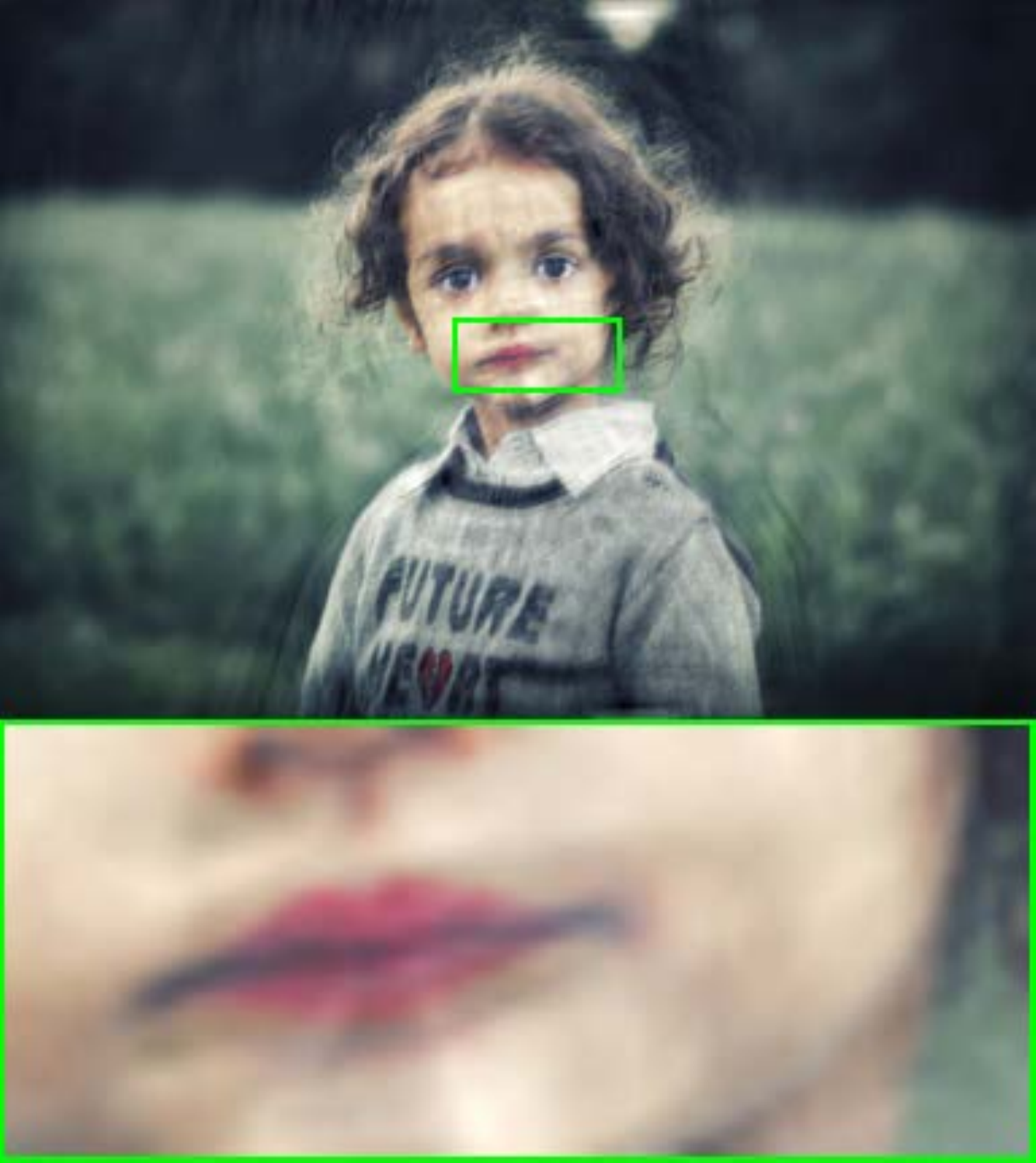}&
		\includegraphics[width=0.16\textwidth]{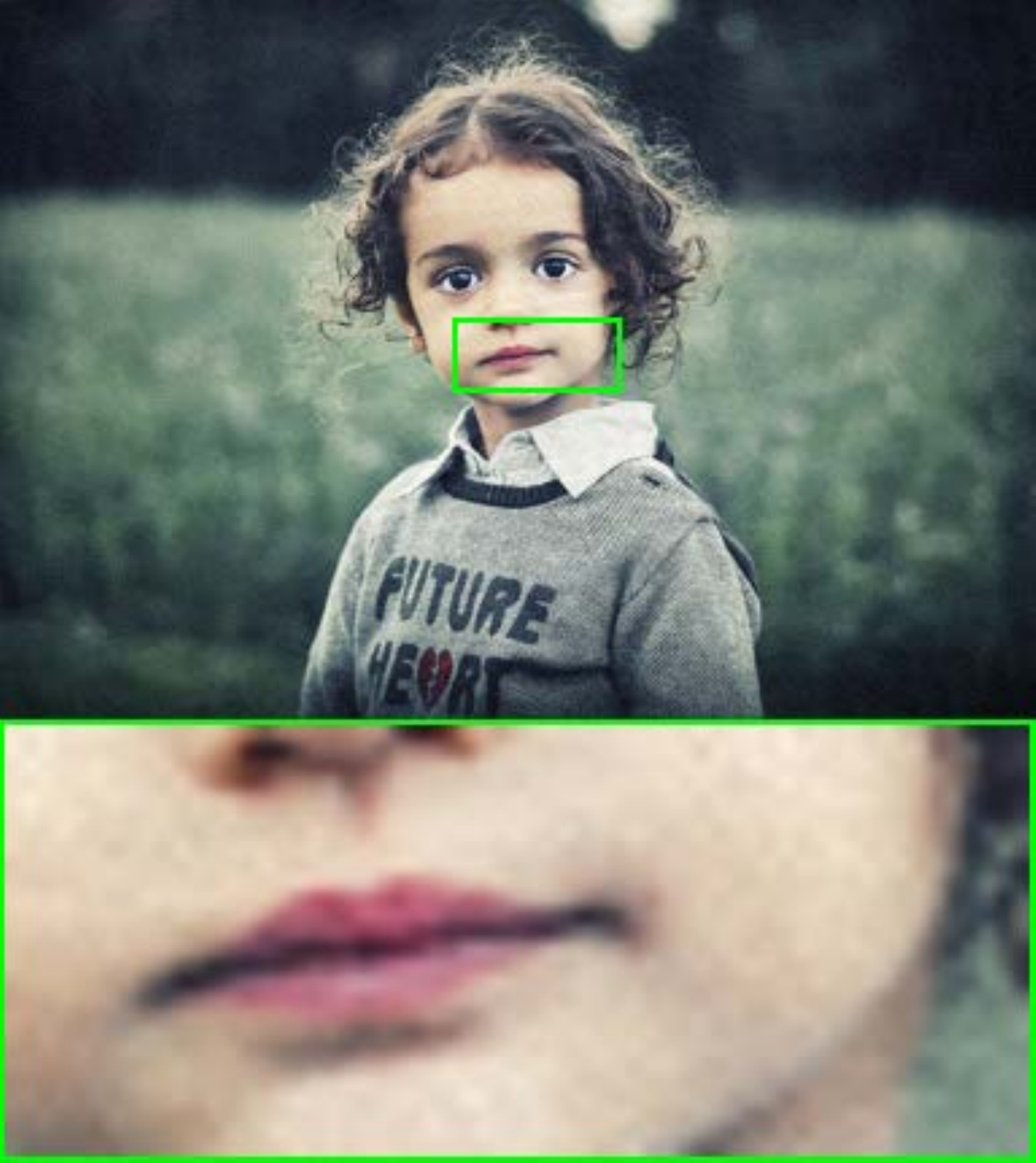}&
		\includegraphics[width=0.16\textwidth]{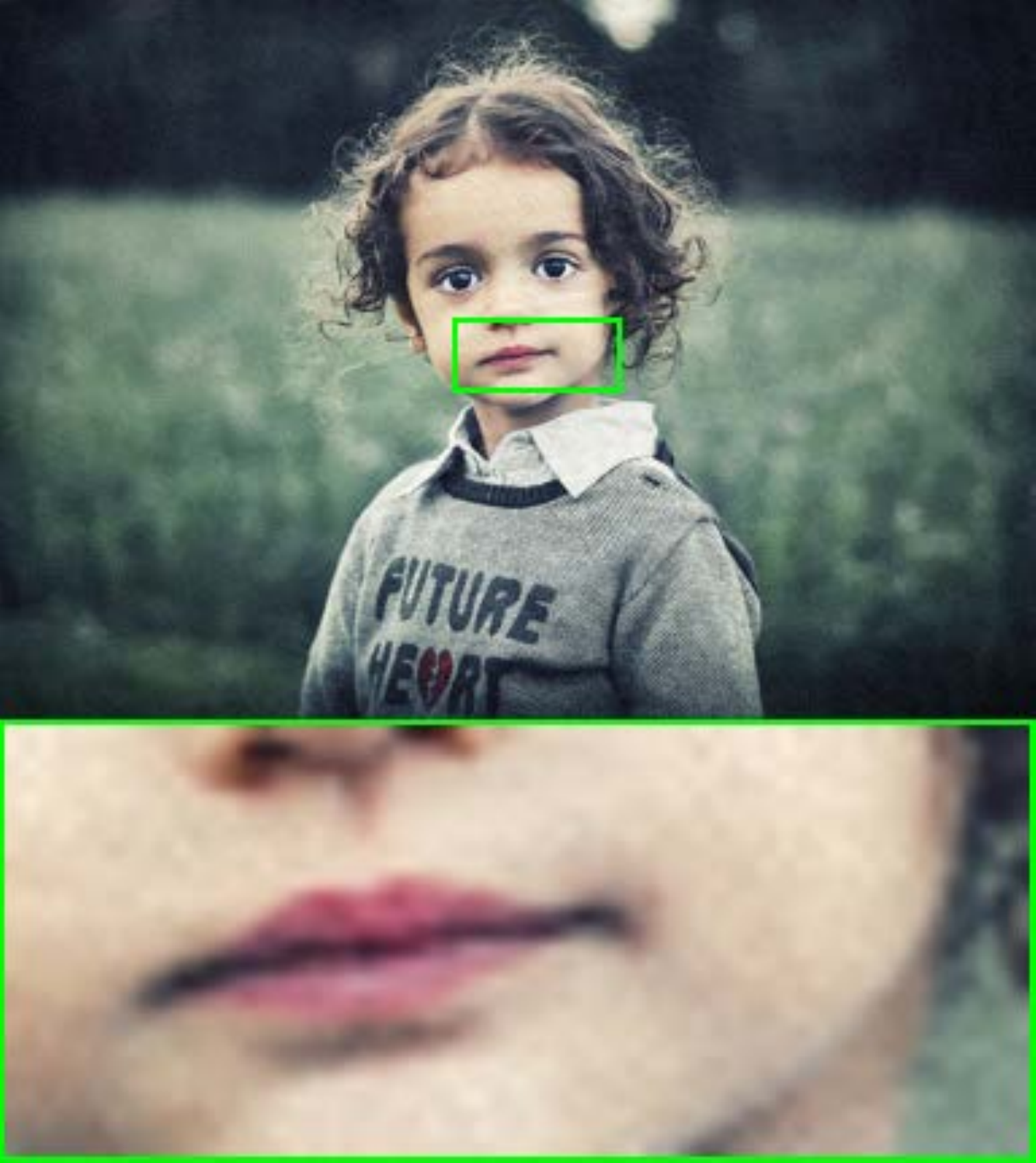}&
		\includegraphics[width=0.16\textwidth]{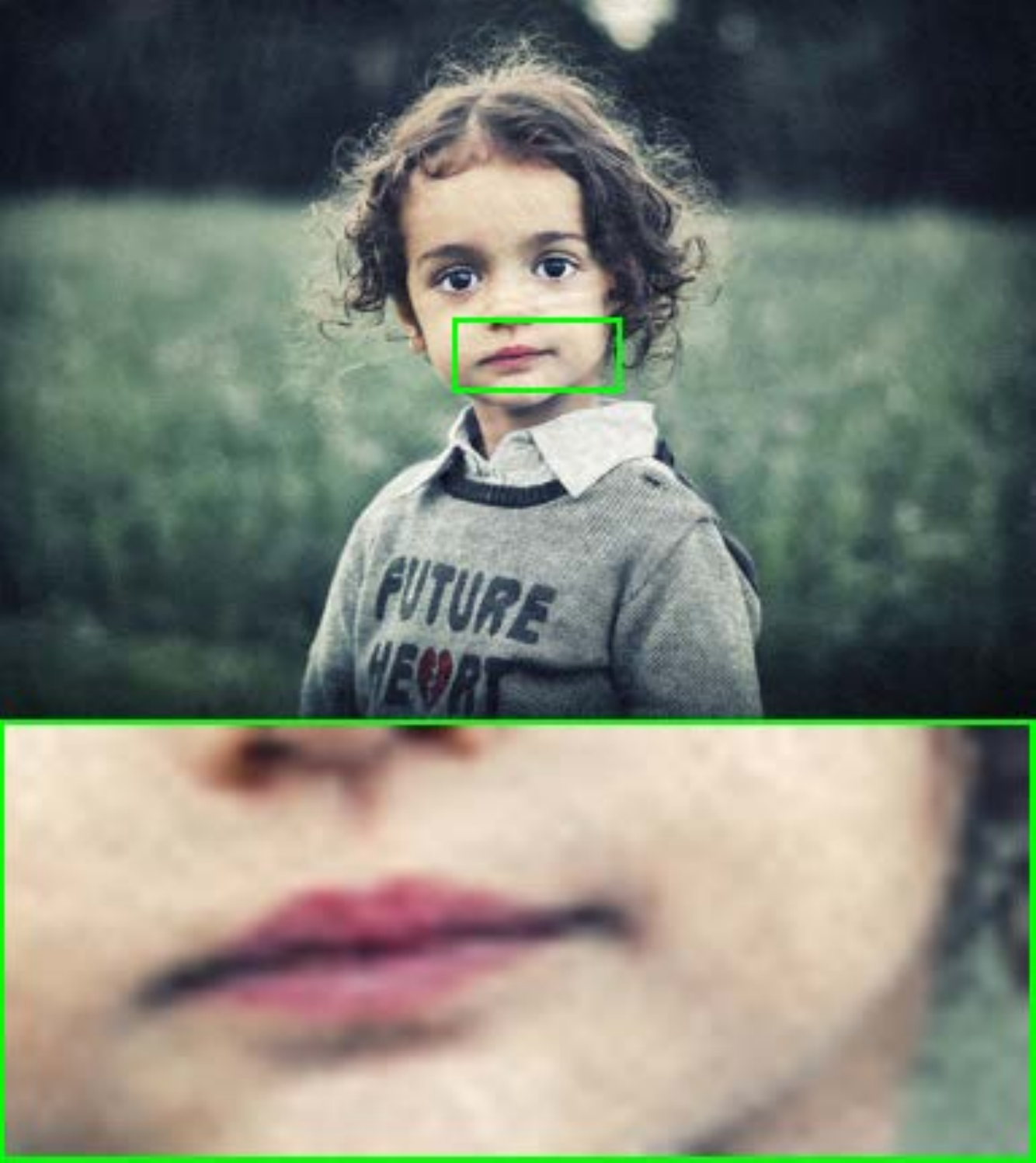}&
		\includegraphics[width=0.16\textwidth]{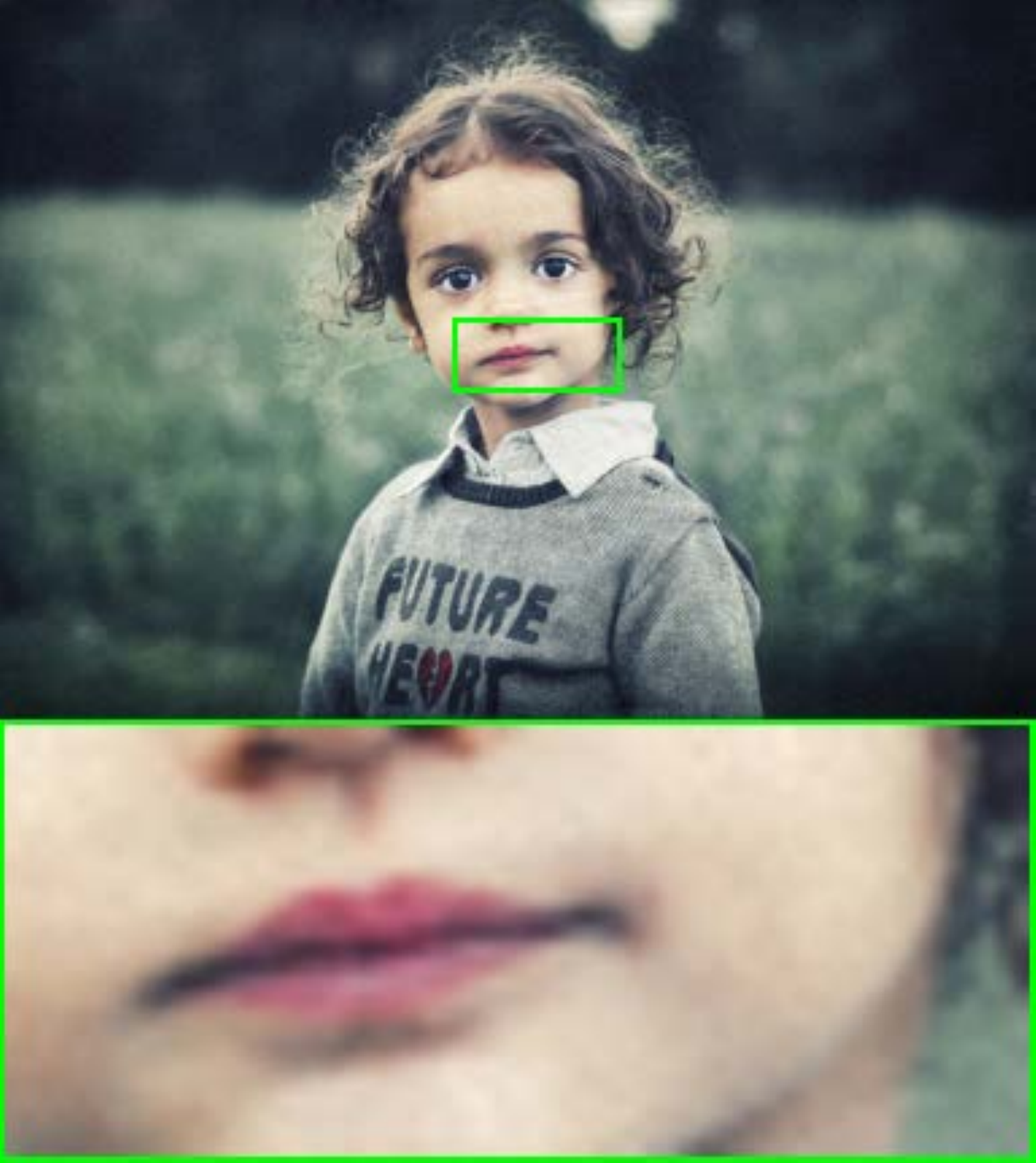}&
		\includegraphics[width=0.16\textwidth]{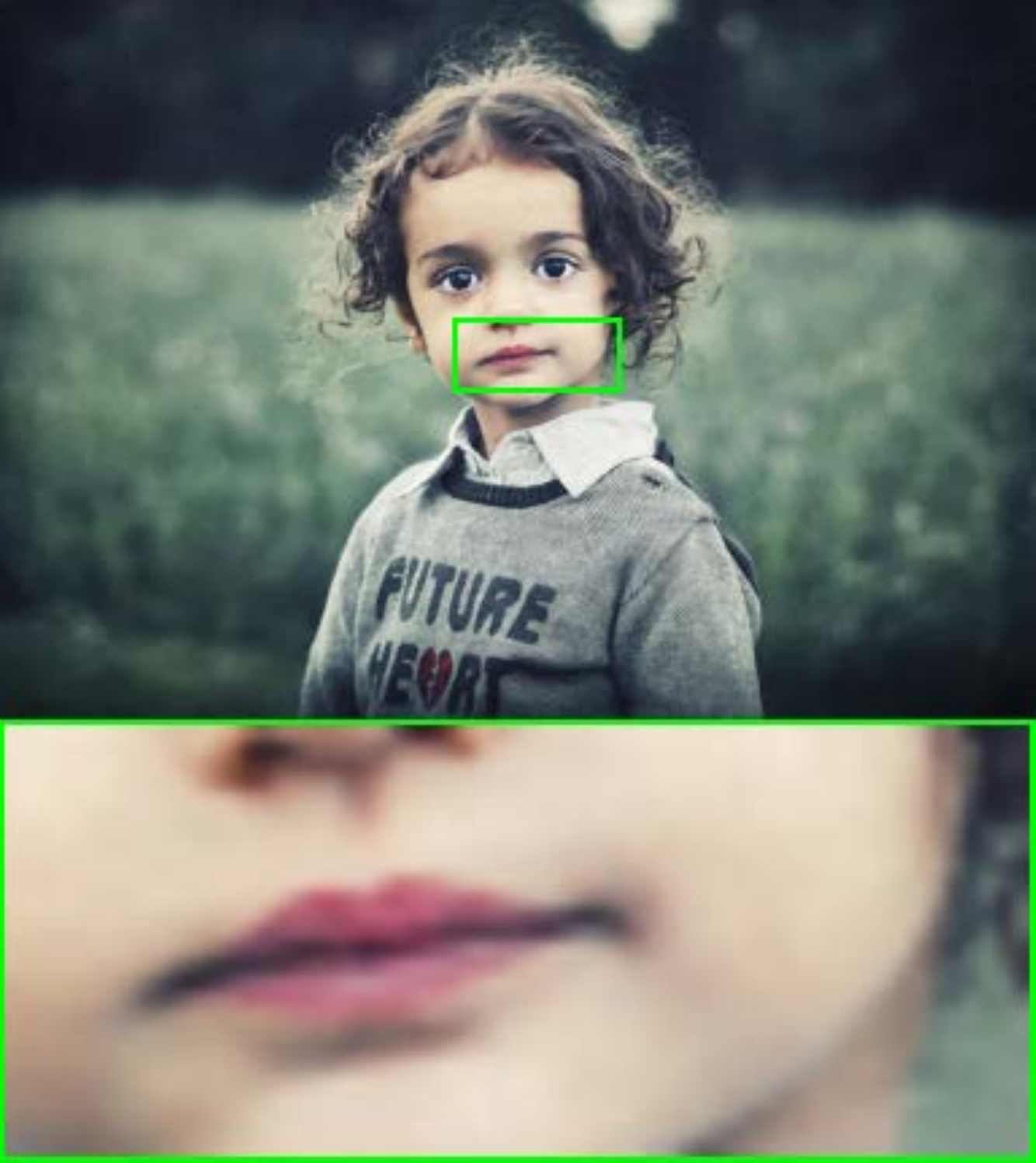}\\
		\footnotesize 30.8229 / 0.9108 &\footnotesize 38.2246 / 0.9247  &\footnotesize 38.2269 / 0.9251 &\footnotesize 38.0961 / 0.9324 & \footnotesize 38.3633 / 0.9581 & \footnotesize 40.4552 / 0.9802\\
		\footnotesize PG &\footnotesize mAPG  &\footnotesize niAPG &\footnotesize APGnc & \footnotesize TLF (Ours) & \footnotesize DTLF (Ours)
	\end{tabular}
	\caption{The non-blind image deconvolution performances of the proposed TLF and DTLF scheme with compared proximal-based first-order methods (PG, mAPG, niAPG and APGnc). The quantitative scores (PSNR / SSIM) are marked blow each image.}
	\label{fig:VisualDeblurPGs}
\end{figure*}

\section{Experimental Results}\label{sec:app-result}

In this section, we first verify our theoretical results by investigating the iteration behaviors of the proposed TLF and DTLF on standard non-blind deblurring formulation with Eq.~\eqref{eq:ISTA}. We then compared the performance of DTLF with state-of-the-art methods (both general and learning-based approaches) on different vision applications. We conducted these experiments on a computer with Intel Core i7-7700 CPU (3.6 GHz), 32GB RAM, and an NVIDIA GeForce GTX 1060 6GB GPU. All the comparisons shown in this paper are conducted under the same hardware configuration. 

\subsection{Theoretical Verifications}\label{sec:theore_verifi}
To verify our theoretical investigations, we performed experiments on non-blind deblurring. Notice that this problem can be directly addressed by our TLF and DTLF.

\subsubsection{Modularization Settings}  We first provided a comparison among different optimization models in image deblurring application, and the corresponding results are shown in Fig.~\ref{fig:CompOur_deblur}. Observed that our TLF with a data-driven module performs the best when comparing with single modeling schemes described in Eq.~\eqref{eq:model_orginal} (i.e., only with the objective subproblem) and Eq.~\eqref{eq:DF} (i.e., only with the data-driven module). The experimental results illustrate the effectiveness of our TLF.

To analyze the flexibility of $\mathcal{G}$, we then investigated the performance of TLF with different operator settings and the corresponding PSNR (peak signal-to-noise ratio) results with $1\%$ noise level are plotted in Fig.~\ref{fig:DiffModularization} (a). As for solving $\mathcal{G}$ module specified in Eq.~\eqref{eq:tv}, four different first-order methods, such as PG ($\mathcal{G}^{\mathtt{PG}}$), APG ($\mathcal{G}^{\mathtt{APG}}$), HQS ($\mathcal{G}^{\mathtt{HQS}}$) and ADMM ($\mathcal{G}^{\mathtt{ADMM}}$) are considered (as mentioned in subsection~\ref{subsec:knowledge}). As can be seen in Fig.~\ref{fig:DiffModularization} (a) that various methods when obtaining $\mathbf{x}_{\mathcal{G}}^k$ have a slight influence on the performance of our TLF scheme. We adopt HQS as the approach to obtain the iteration steps of $\mathbf{x}_{\mathcal{G}}^k$ in TLF and $\mathbf{x}_{\mathcal{G}_{\mu}}^k$ in DTLF hereafter. In fact, to provide a relatively fair comparison, we keep parameters the same under four different circumstances mentioned above. Hereafter, we select relative error (i.e., $\|\mathbf{x}^{k+1}-\mathbf{x}^k\|/\|\mathbf{x}^{k+1}\|$) as a stop criterion.

To further explore the effectiveness of network-based block $\mathcal{N}$, four different task-specific structures, i.e., TV~\cite{osher2005iterative}, RF~\cite{unser1991recursive}, CNNs and BM3D~\cite{dabov2007image} (named as $\mathcal{N}^{\mathtt{TV}}$, $\mathcal{N}^{\mathtt{RF}}$, $\mathcal{N}^{\mathtt{CNN}}$ and $\mathcal{N}^{\mathtt{BM3D}}$, respectively) are adopted under DTLF scheme. 
For CNNs architecture, the introduced residual network consists of nineteen layers as described in the paper, i.e., seven dilated convolutions with $3 \times 3$ filter size, six ReLu operations (plugged between each two convolution layers) and five batch normalizations (plugged between convolution and ReLU, expect the first convolution layer). In training strategy, similar to~\cite{zhang2017learning}, we randomly select 800 natural images from ImageNet database~\cite{russakovsky2015imagenet} to train different Gaussian noise levels with a standard deviation of 0.1 that meet the condition in each iteration. The learning rate is started with 0.001 and decayed by multiplying 0.1 at 30, 60 and 80 epochs. We use ADAM with a weight decay of 0.0001 to train the network with a MSE loss. Fig.~\ref{fig:DiffModularization} (b) plotted the PSNR with $\mathcal{N}^{\mathtt{TV}}$, $\mathcal{N}^{\mathtt{RF}}$, $\mathcal{N}^{\mathtt{CNN}}$ and $\mathcal{N}^{\mathtt{BM3D}}$. As can be seen, DTLF performs better and faster with $\mathcal{N}^{\mathtt{CNN}}$ than others. Hence, we set network-based building block $\mathcal{N}$ as CNNs hereafter.

To compare our TLF with classical first-order methods, we evaluated the performance of the proposed method and APG \cite{li2015accelerated}, monotone APG (mAPG) \cite{li2015accelerated}, inexact APG (niAPG)~\cite{yao2016efficient} and FTVD~\cite{wang2008new}, under three different additional Gaussian noise levels (i.e., $1\%$, $2\%$ and $3\%$) on the image set collected by~\cite{schmidt2014shrinkage}. The corresponding results are shown in Tab.~\ref{tab:deblur_tradional} with quantitative performance. It can be seen that our TLF outperforms classical numerical solvers by a large margin in terms of the performance.

We further conducted an ablation experiment to compare with these pre-trained CNNs. We must clarify that in most of our considered applications (e.g., deblurring and inpainting), naively cascading these pre-trained CNNs do not work well. As can be seen in Fig.~\ref{fig:comp_cnns} that Naively Cascaded CNNs (NC-CNNs) cannot properly fit the degeneration of image blurs/masks. Notice that both the proposed method and NC-CNNs share the same pre-trained architectures. While these two strategies have completely different final results. This is mainly because in our framework, energy-model based calculations can roughly remove the degeneration while these CNN architectures are used to refine image details and remove artifacts.

\begin{figure*}[t]
	\begin{tabular}{c@{\extracolsep{0.2em}}c@{\extracolsep{0.2em}}c@{\extracolsep{0.2em}}c@{\extracolsep{0.2em}}c@{\extracolsep{0.2em}}c}
		\includegraphics[width=0.16\textwidth]{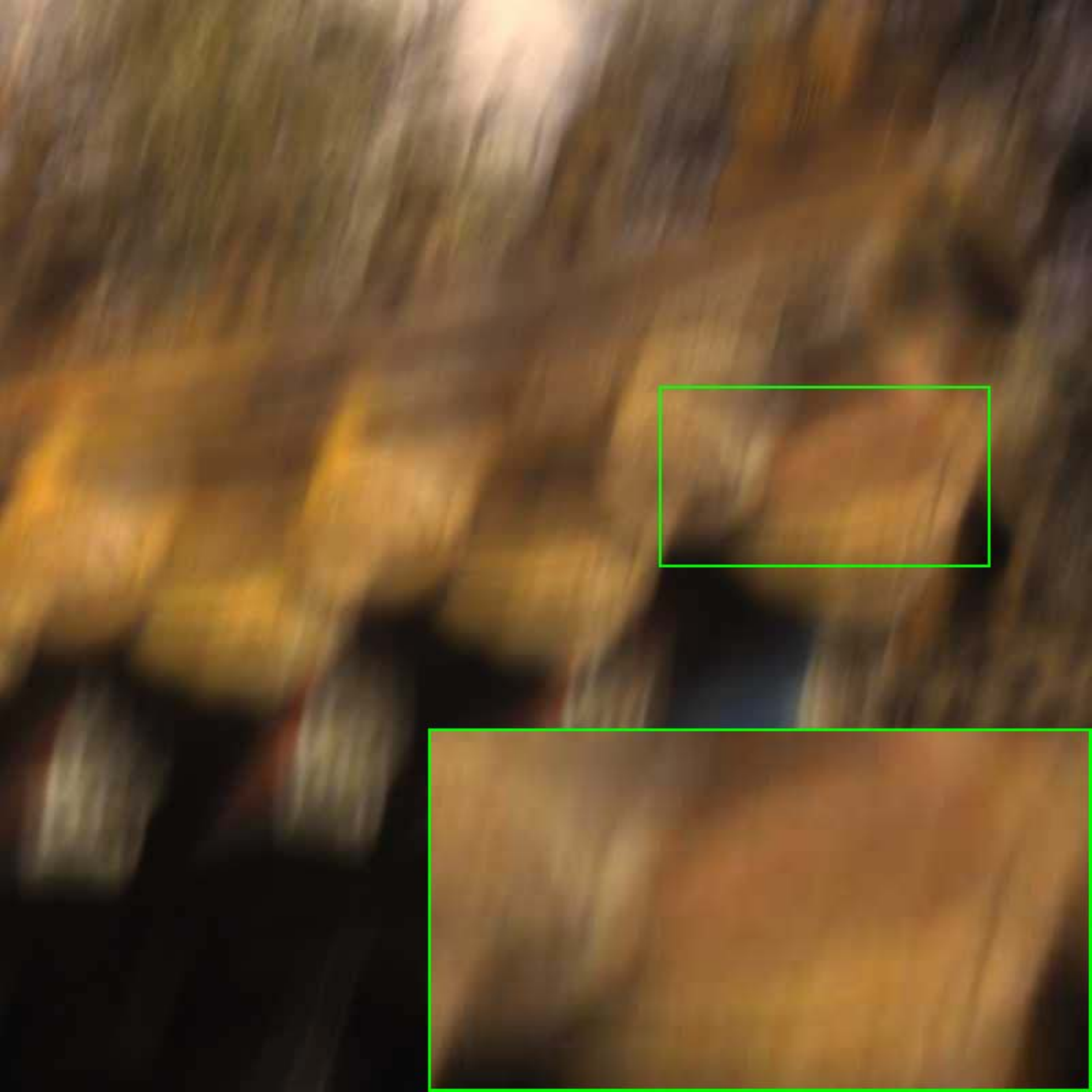}&
		\includegraphics[width=0.16\textwidth]{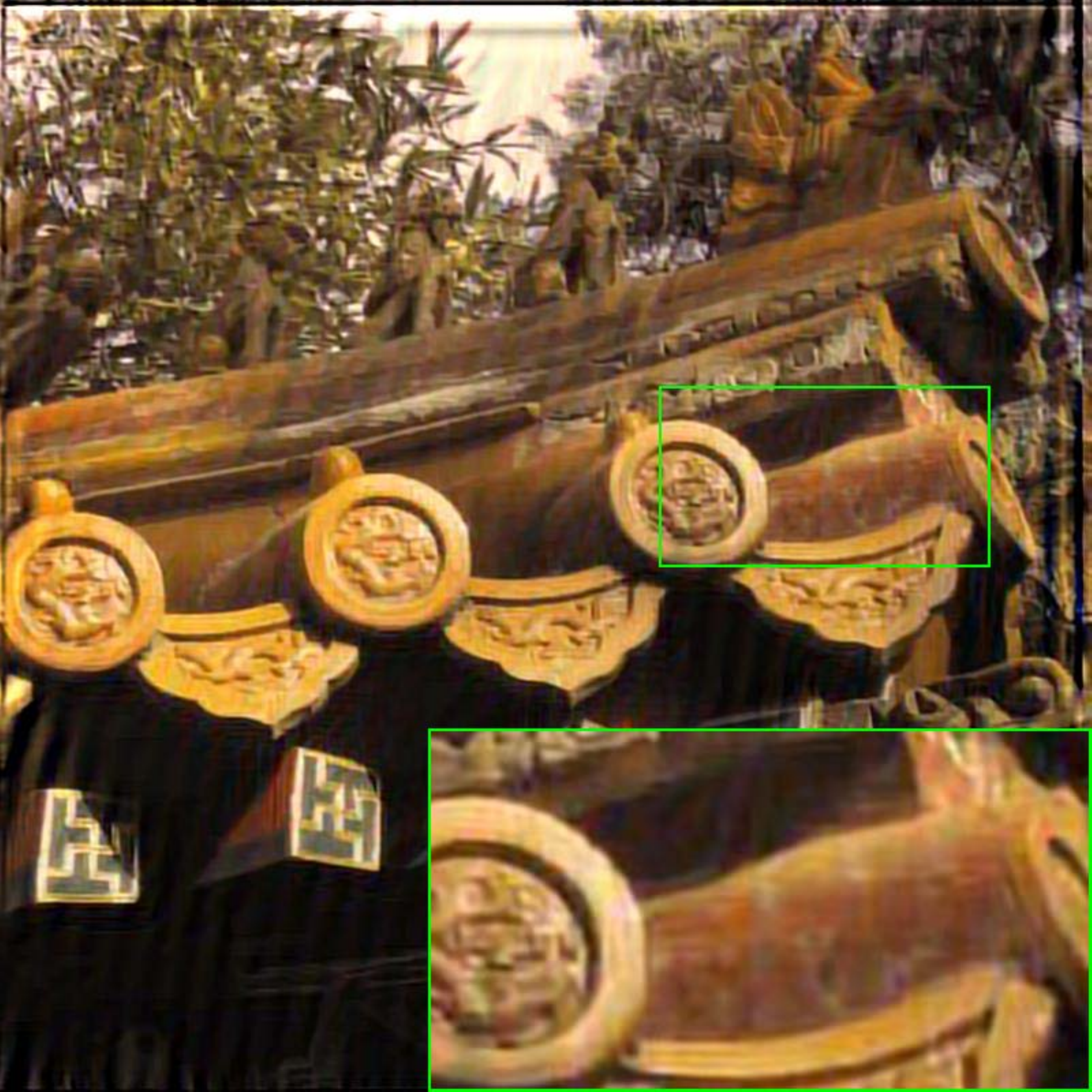}&
		\includegraphics[width=0.16\textwidth]{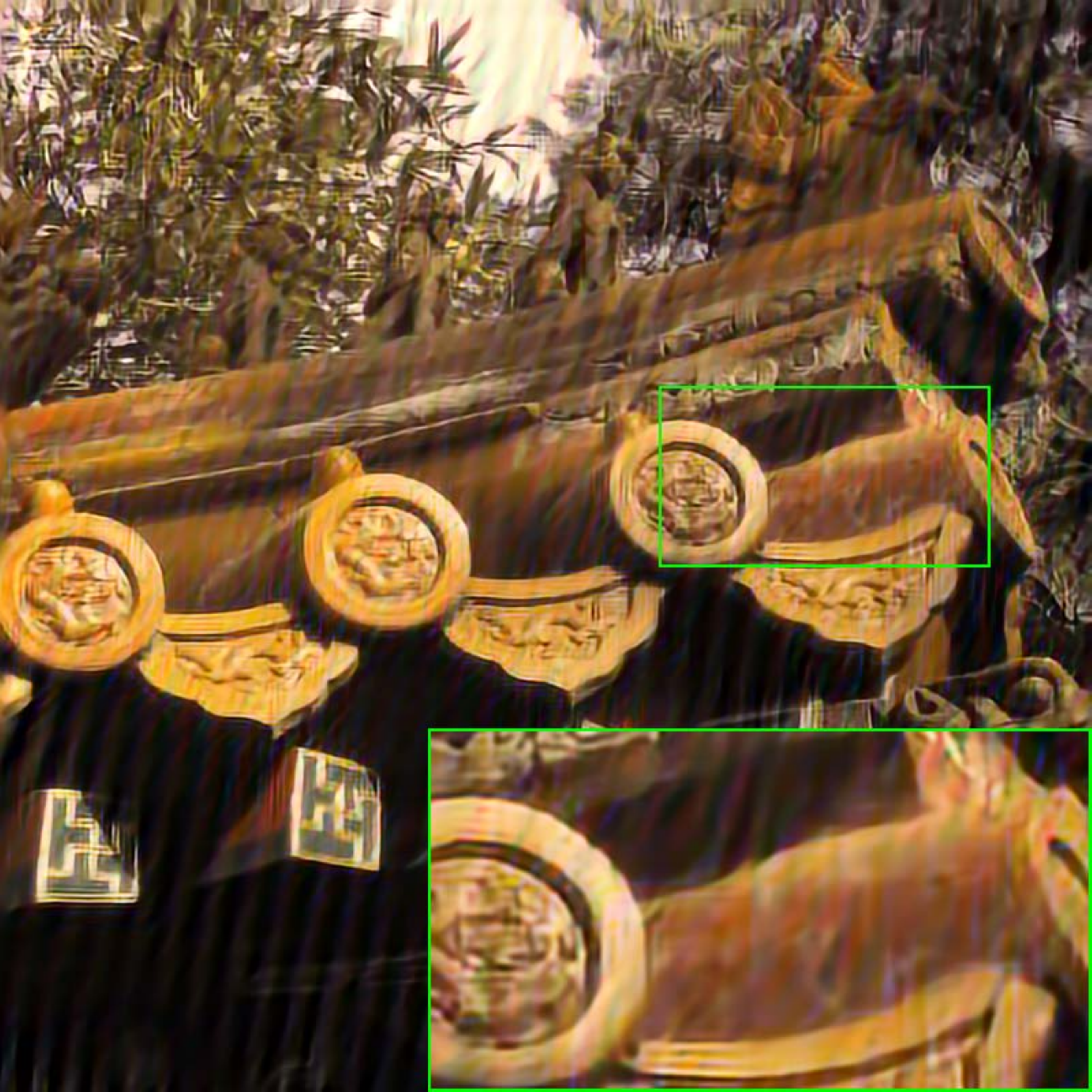}&
		\includegraphics[width=0.16\textwidth]{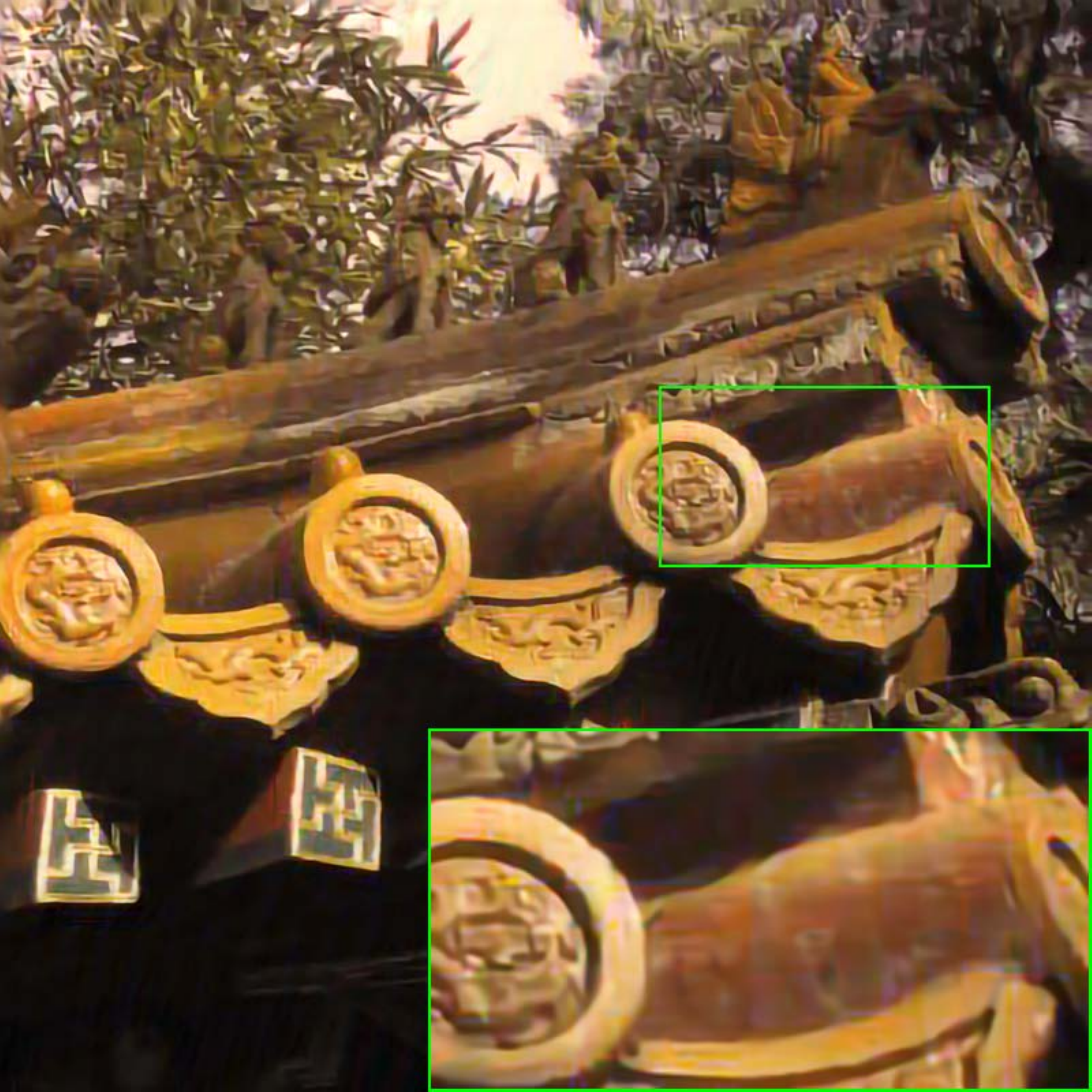}&
		\includegraphics[width=0.16\textwidth]{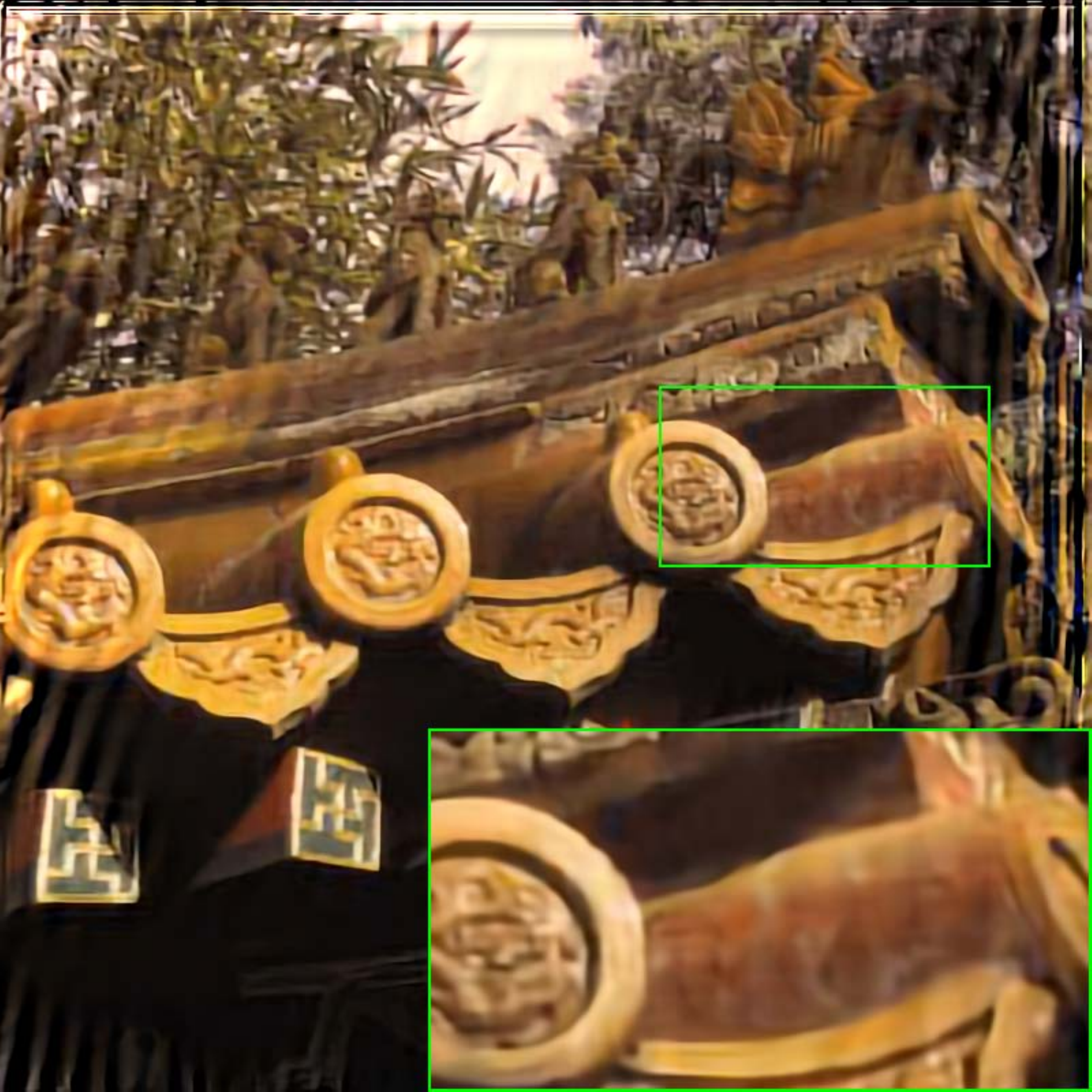}&
		\includegraphics[width=0.16\textwidth]{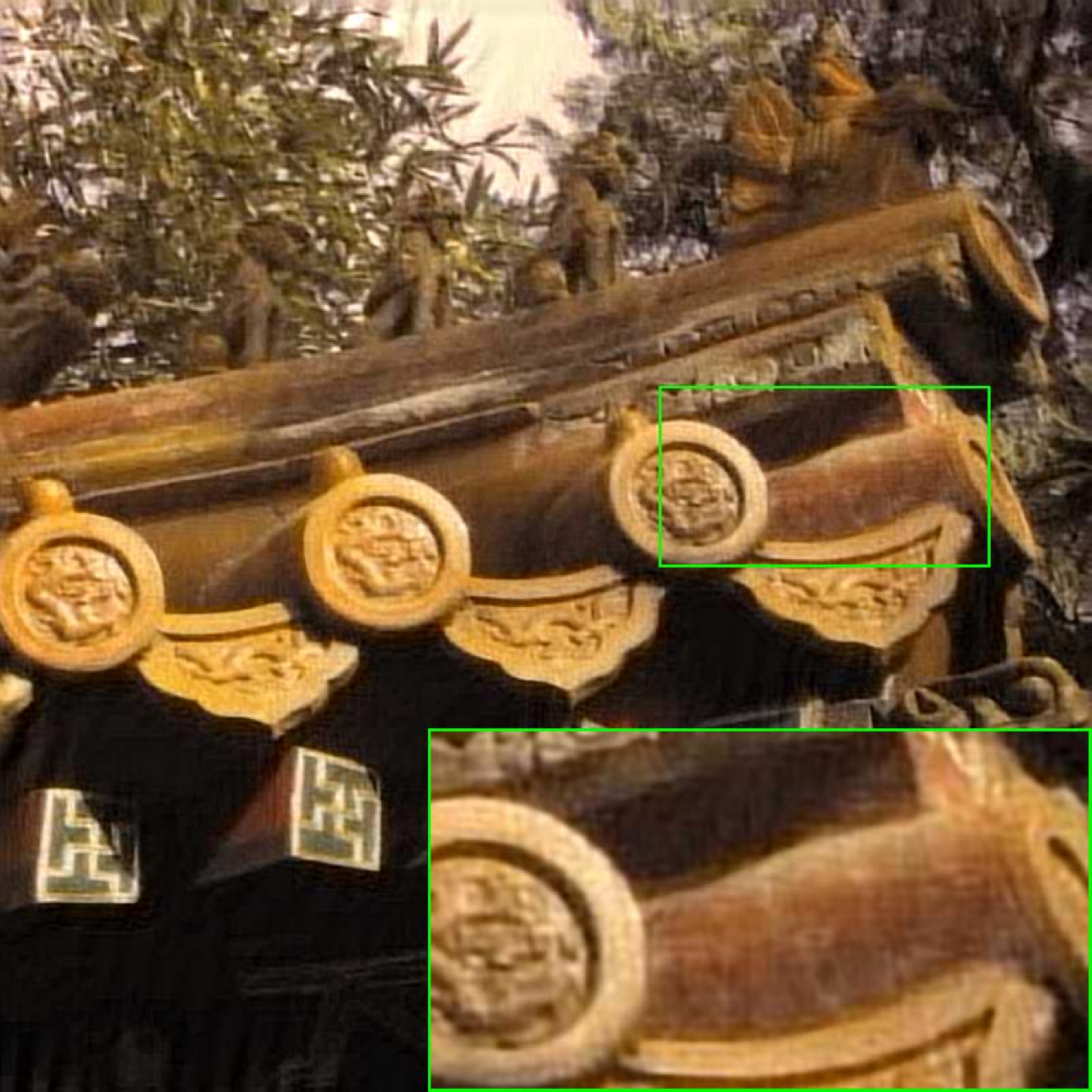}\\
		\footnotesize Blurry &\footnotesize IDDBM3D &\footnotesize MLP  &\footnotesize FDN &\footnotesize IRCNN & \footnotesize Ours
	\end{tabular}
	\caption{Comparisons of non-blind image deconvolution results with state-of-the-art methods on a challenging real-world blurry image.}
	\label{fig:realdeblurres}
\end{figure*}

\begin{figure*}[t]
	\begin{tabular}{c@{\extracolsep{0.2em}}c@{\extracolsep{0.2em}}c@{\extracolsep{0.2em}}c@{\extracolsep{0.2em}}c@{\extracolsep{0.2em}}c}
		\includegraphics[width=0.16\textwidth]{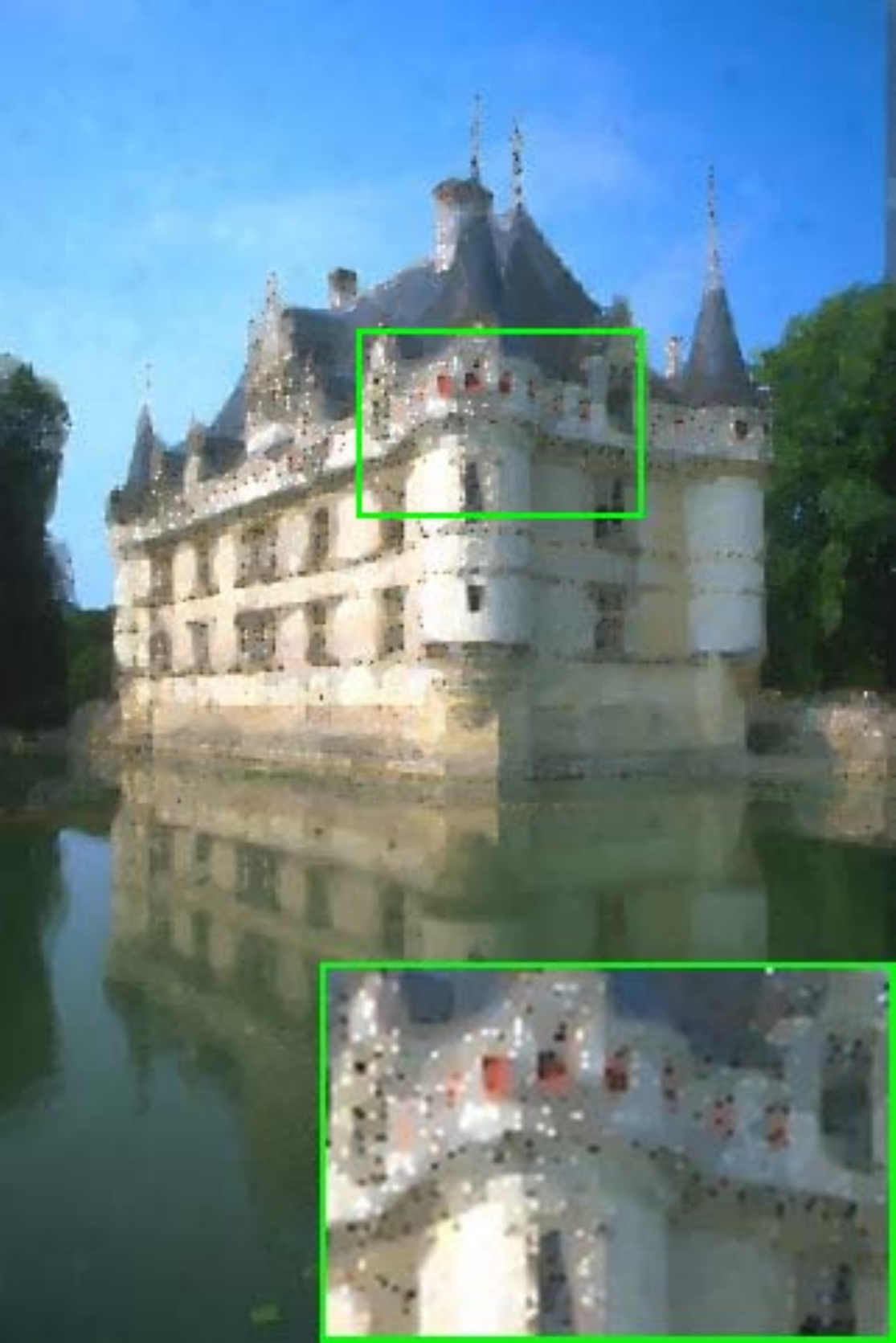}&
		\includegraphics[width=0.16\textwidth]{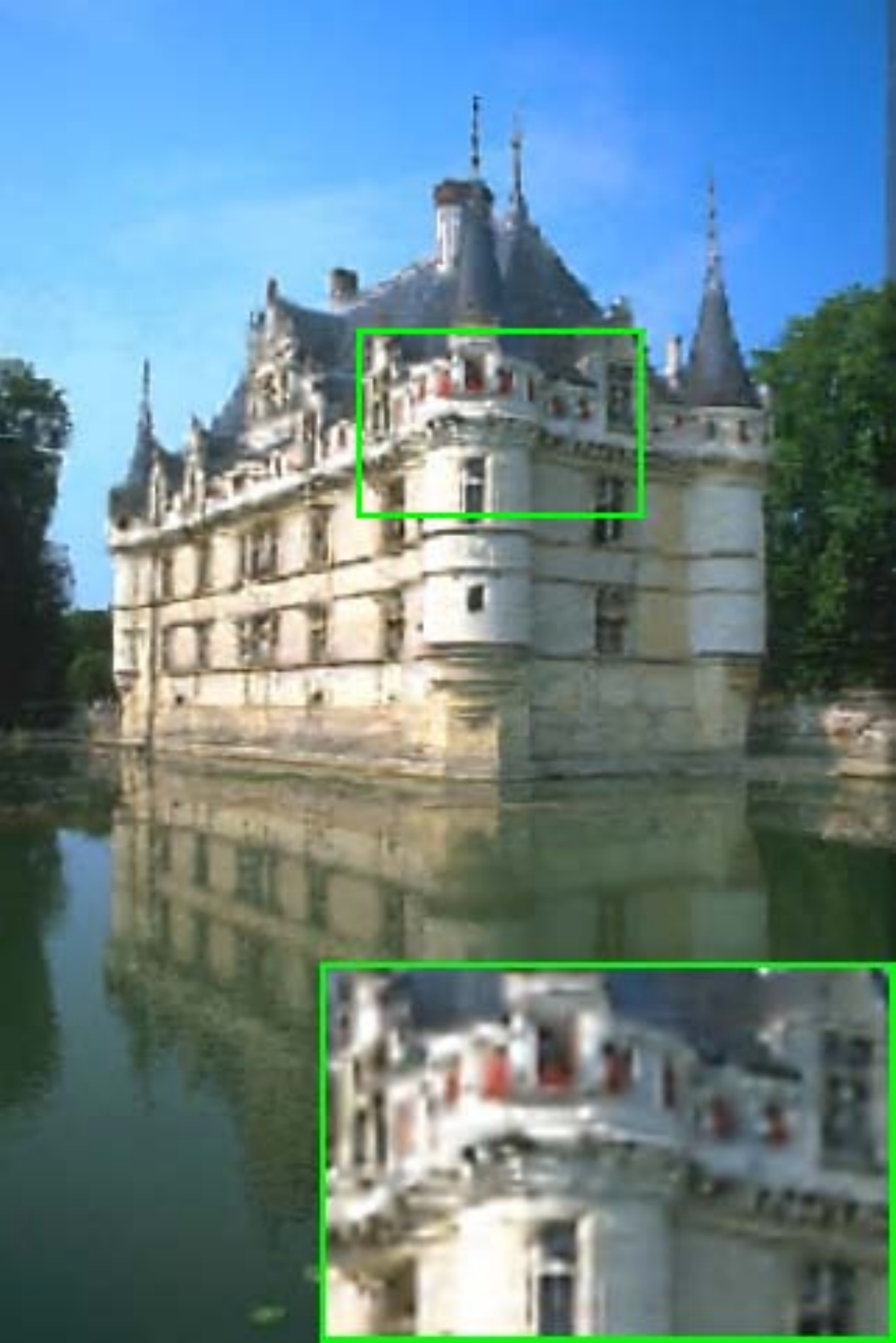}&
		\includegraphics[width=0.16\textwidth]{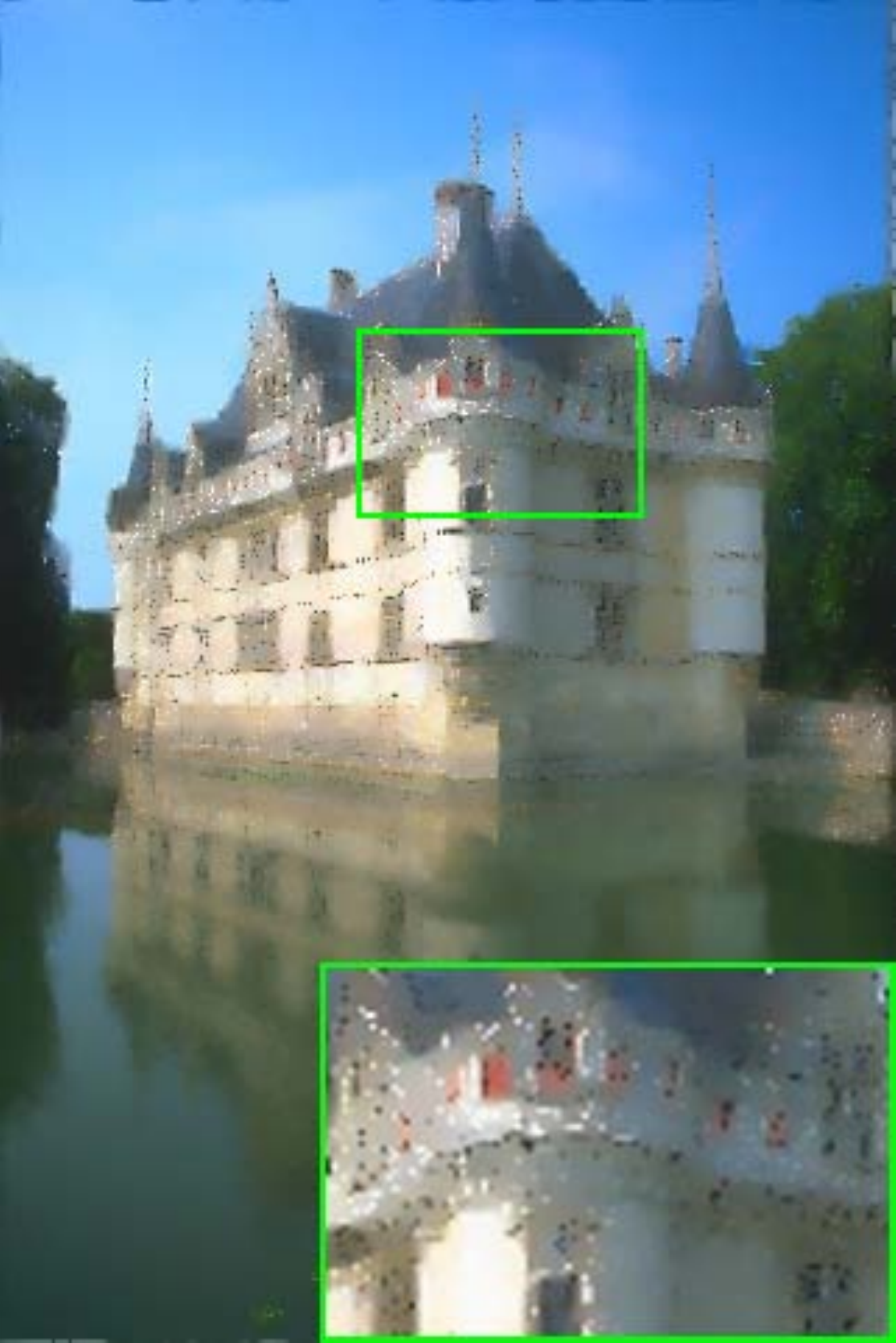}&
		\includegraphics[width=0.16\textwidth]{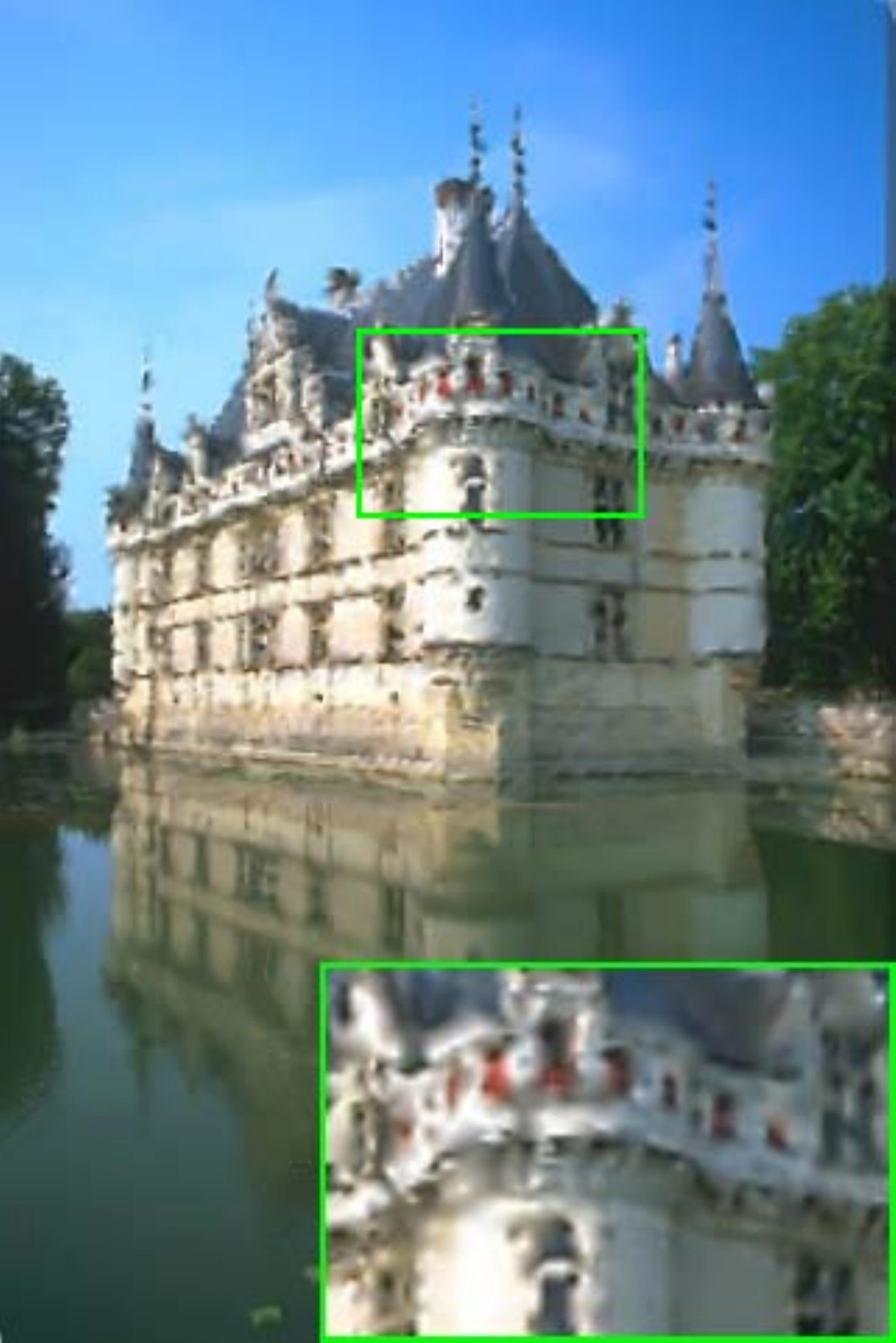}&
		\includegraphics[width=0.16\textwidth]{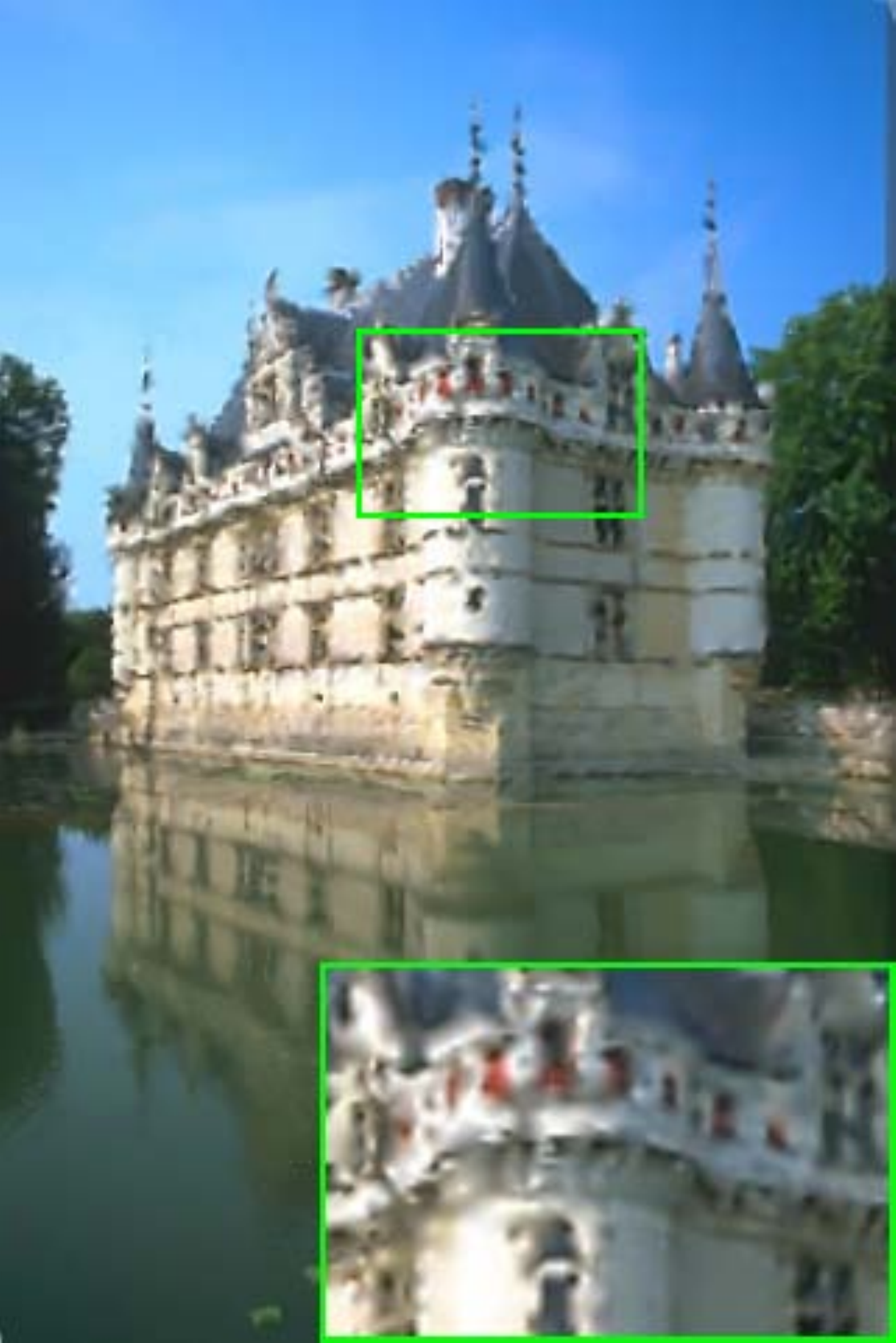}&
		\includegraphics[width=0.16\textwidth]{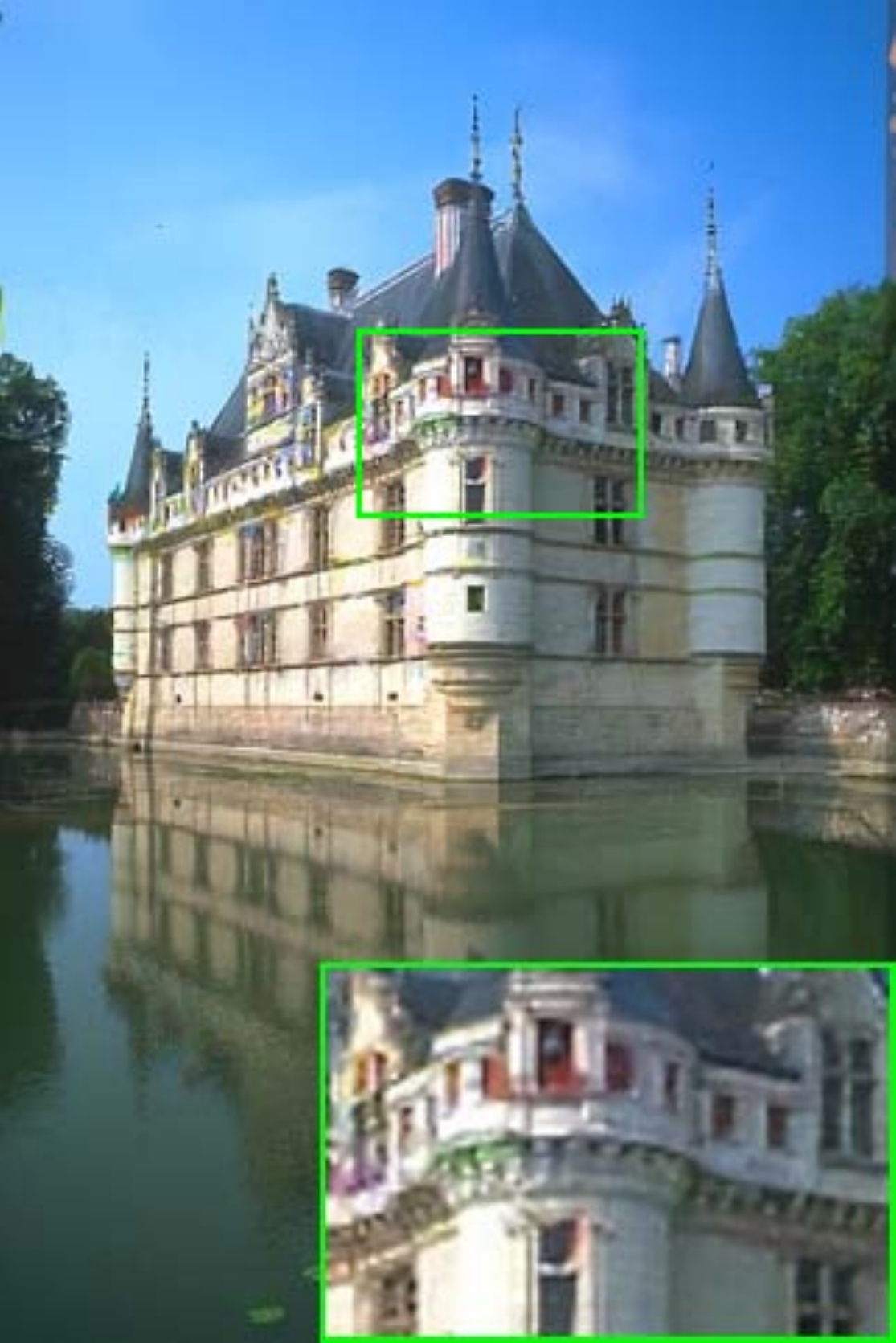}\\
		\footnotesize 24.02 / 0.79 &\footnotesize 25.57 / 0.84 &\footnotesize 23.21 / 0.76 & \footnotesize 26.00 / 0.89 & \footnotesize 25.95 / 0.85 &\footnotesize 29.02 / 0.92\\
		\footnotesize TV&\footnotesize FoE &\footnotesize ISDSB &\footnotesize WNNM &\footnotesize IRCNN  &\footnotesize Ours \\
	\end{tabular}
	\caption{Image inpainting results (with PSNR / SSIM scores) on a challenging example image with $80\%$ missing pixels.}
	\label{fig:inpainting}
\end{figure*}

\begin{table}[t]
	\renewcommand\arraystretch{1.15}
	\centering
	\caption{Averaged quantitative comparison of image deblurring on Sun \emph{et al.}'s and Levin \emph{et al.}'s benchmark.}
	\setlength{\tabcolsep}{1.8mm}{
		\begin{tabular}{c|c|c|c|c}
			\hline
			Methods & IDDBM3D     & TV    & EPLL & CSF\\
			\hline
			Levin   & 31.35/0.90   & 29.38/0.88  & 31.65/0.93& 31.55/0.87\\
			Sun  &  30.79/0.86 &30.67/0.85&	32.44/0.89&	31.55/0.88\\
			Times&	48.66 &	6.38&	721.98&	\textbf{0.50}\\
			\hline
			\hline
			Methods & MLP     & IRCNN    & FDN & Ours\\
			\hline
			Levin   & 31.32/0.90   & 32.28/0.92  & 32.04/0.93& \textbf{32.98}/\textbf{0.94}\\
			Sun  &  31.47/0.88 &32.61/0.89&	32.65/0.89&	\textbf{32.90}/\textbf{0.90}\\
			Times&	4.59 &	16.67&	2.70&	2.41\\
			\hline
	\end{tabular}}
	\label{tab:deblur}
\end{table}

\subsubsection{Convergence of TLF}
Next, we illustrated the convergent behaviors of TLF. In Fig.~\ref{fig:DiffError} (a) and (b), we plotted the iterative behaviors of variable ($\mathbf{x}^k$ and $\mathbf{x}^{k+1}$) and intermediate variables ($\mathbf{x}^k_{\mathcal{F}}$, $\mathbf{x}^k_{\mathcal{G}}$ and $\mathbf{x}^k_{\mathcal{G}_{\mu}}$). In Fig.~\ref{fig:DiffError} (a), the legends $\mathbf{x}^k\rightarrow\mathbf{x}^k_{\mathcal{G}}$,  $\mathbf{x}^k_{\mathcal{G}}\rightarrow\mathbf{x}^{k+1}$,  $\mathbf{x}^k\rightarrow\mathbf{x}^k_{\mathcal{F}}$ and $\mathbf{x}^k_{\mathcal{F}}\rightarrow\mathbf{x}^{k+1}$ prove the boundness of $\|\mathbf{x}^{k+1}-\mathbf{x}^k\|$, $\|\mathbf{x}^{k}_{\mathcal{G}}-\mathbf{x}^k\|$ and $\|\mathbf{x}^{k}_{\mathcal{F}}-\mathbf{x}^k\|$. Similarly, we plotted the corresponding curves of DTLF in Fig.~\ref{fig:DiffError} (b). To further illustrate the bounded condition (i.e., BUS) used in DTLF, Fig.~\ref{fig:DiffError} (c) showed the relationship between $\|\mathbf{x}^k_{\mathcal{G}_{\mu}}-\mathbf{x}^k\|$ and $C\|\mathbf{x}^k_{\mathcal{G}}-\mathbf{x}^k\|$. Obviously, Fig.~\ref{fig:DiffError} (c) implies that the boundedness of $\|\mathbf{x}^k_{\mathcal{G}_{\mu}}-\mathbf{x}^k\|$ is satisfied.

\begin{table}[t]
	\renewcommand\arraystretch{1.15}
	\centering
	\caption{Averaged quantitative comparison of image inpainting on CBSD68 dataset~\protect\cite{zhang2017beyond}. The first row is the proportion of masks. The first column is the Comparison methods on inpainting.}
	\setlength{\tabcolsep}{2mm}{
		\begin{tabular}{c|c|c|c|c}
			\hline
			Mask &$40\%$ &$60\%$ &$80\%$&Text\\
			\hline
			TV & 32.22/0.93&	29.20/0.86&	26.07/0.74&	35.29/0.97\\
			FoE  &  34.01/0.90 &30.81/0.81&	27.64/0.65&	37.05/0.95\\
			VNL&	27.55/0.91&	26.13/0.85&	24.23/0.75&	28.58/0.95\\
			ISDSB &	31.32/0.91&	28.23/0.83&	24.92/0.70&	34.91/0.96\\
			WNNM &	31.75/0.94&	28.71/0.89&	25.63/0.78&	34.89/0.97\\
			IRCNN	&34.92/0.95	&31.45/\textbf{0.91}	&26.44/0.79	&37.26/0.97\\
			Ours &\textbf{34.94}/\textbf{0.96}&\textbf{31.61}/\textbf{0.91}&	\textbf{27.88}/\textbf{0.81}& \textbf{37.38}/\textbf{0.98}\\
			\hline
	\end{tabular}}
	\label{tab:inpainting_comp}
\end{table}

\begin{figure*}[t]
	\centering
	\begin{tabular}{c@{\extracolsep{0.2em}}c@{\extracolsep{0.2em}}c@{\extracolsep{0.2em}}c@{\extracolsep{0.2em}}c@{\extracolsep{0.2em}}c}
		\includegraphics[width=0.16\textwidth]{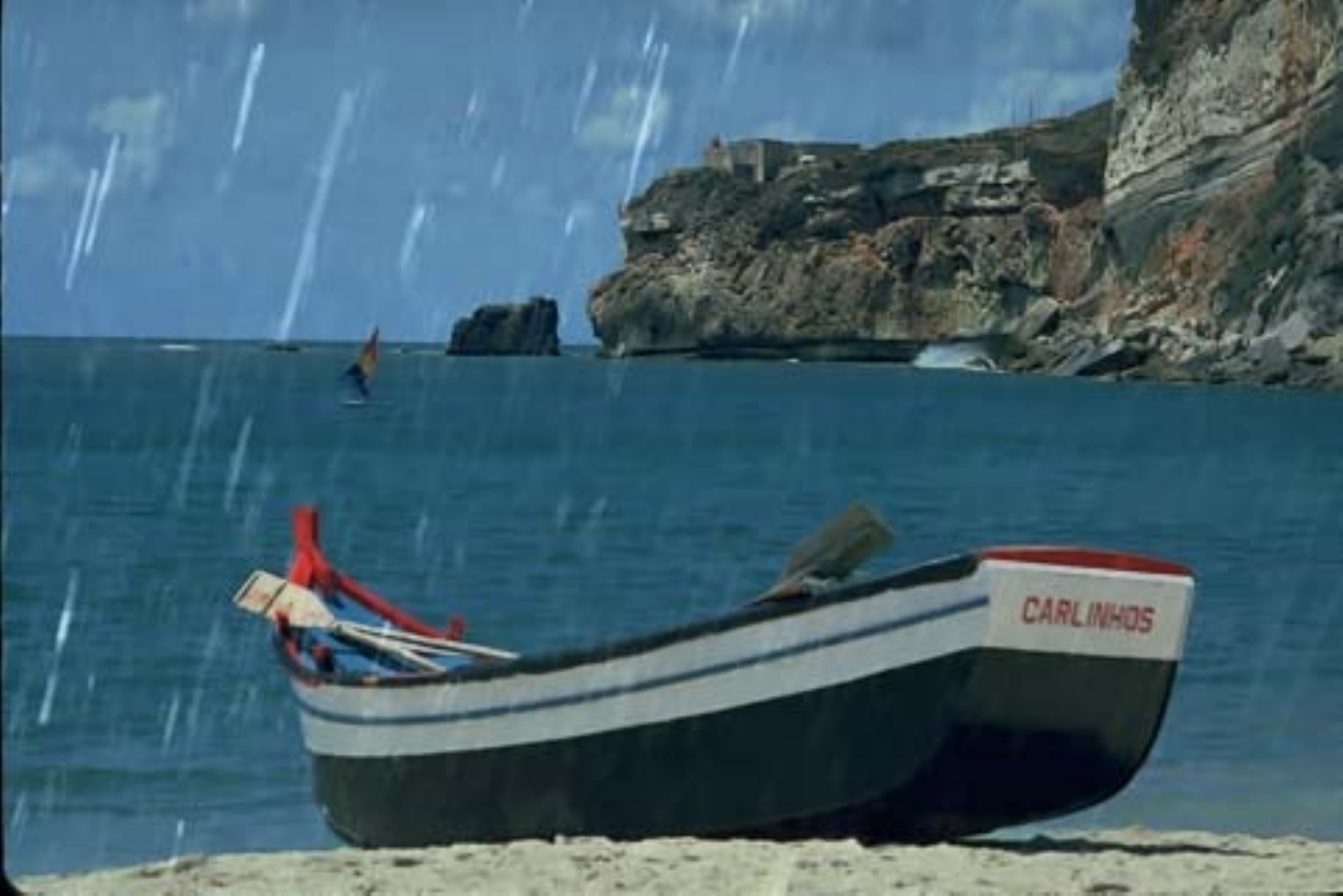}&
		\includegraphics[width=0.16\textwidth]{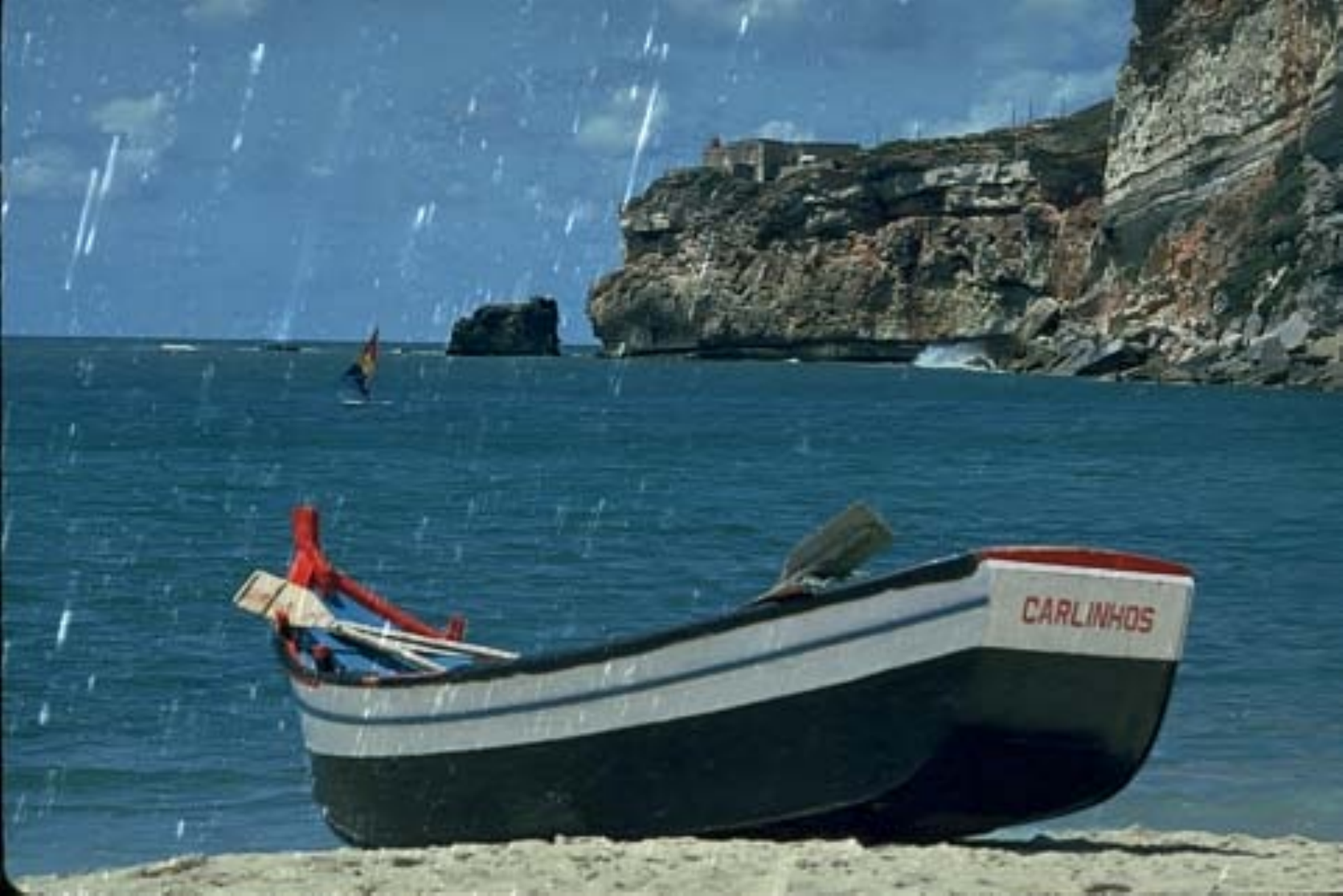}&
		\includegraphics[width=0.16\textwidth]{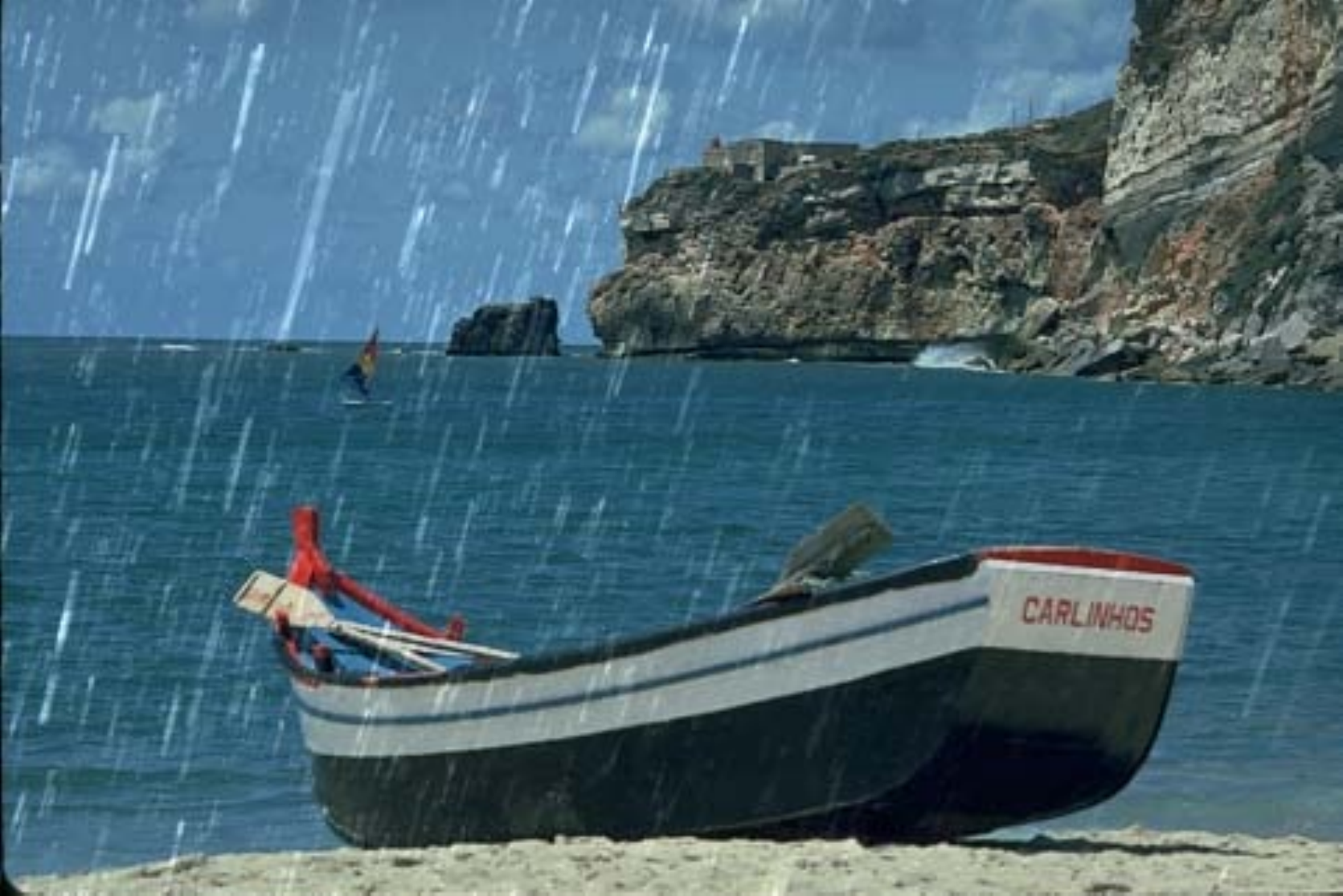}&
		\includegraphics[width=0.16\textwidth]{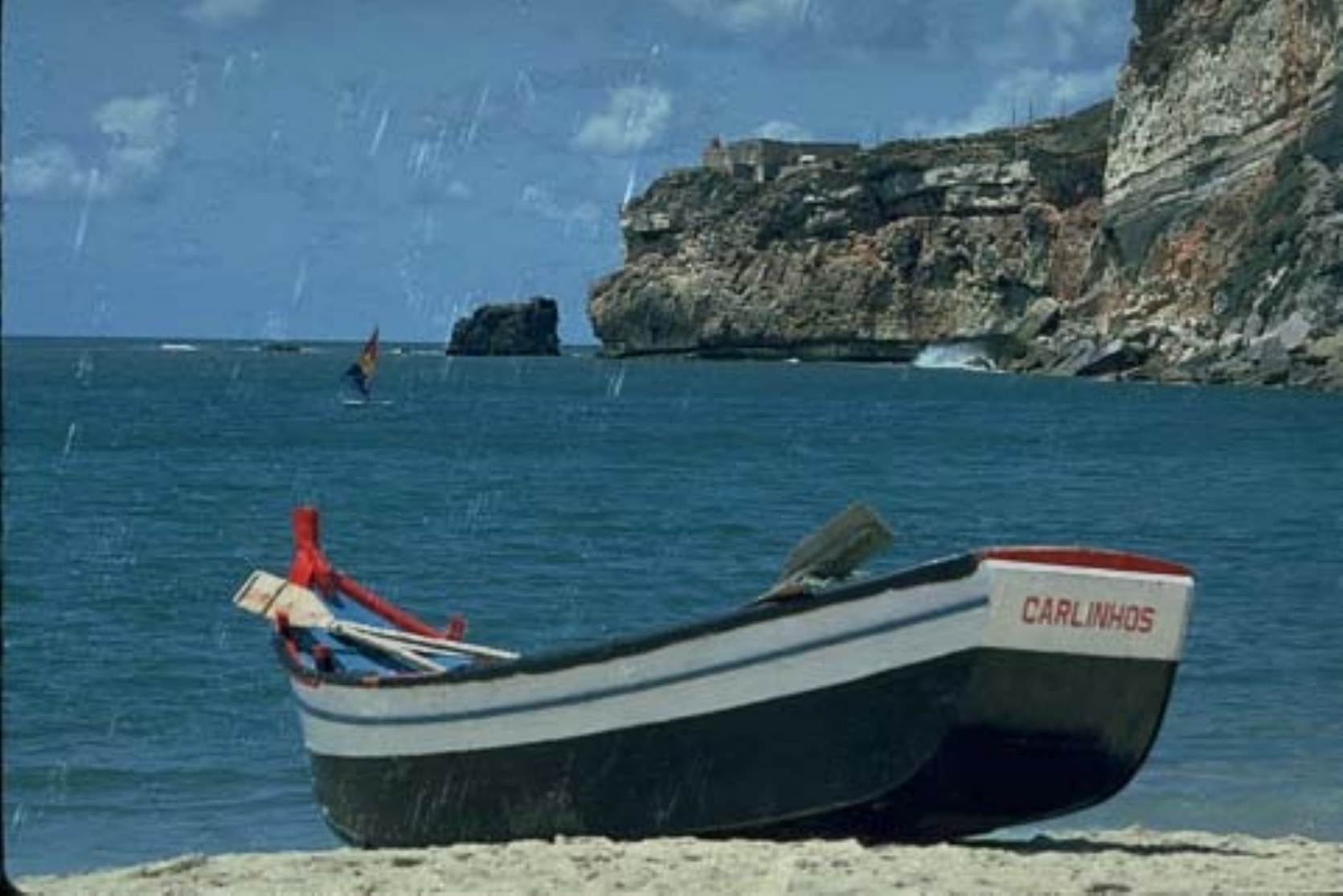}&
		\includegraphics[width=0.16\textwidth]{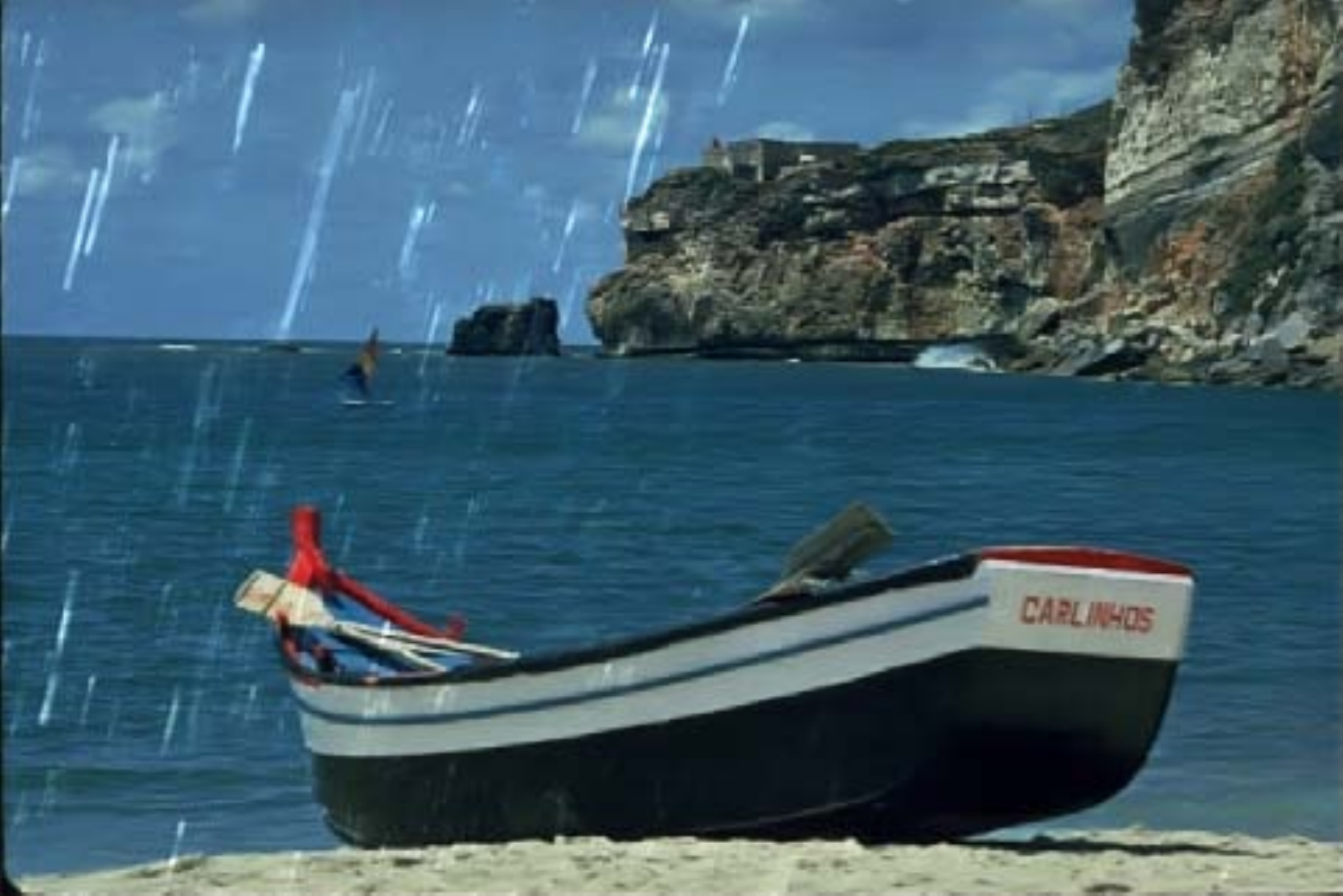}&
		\includegraphics[width=0.16\textwidth]{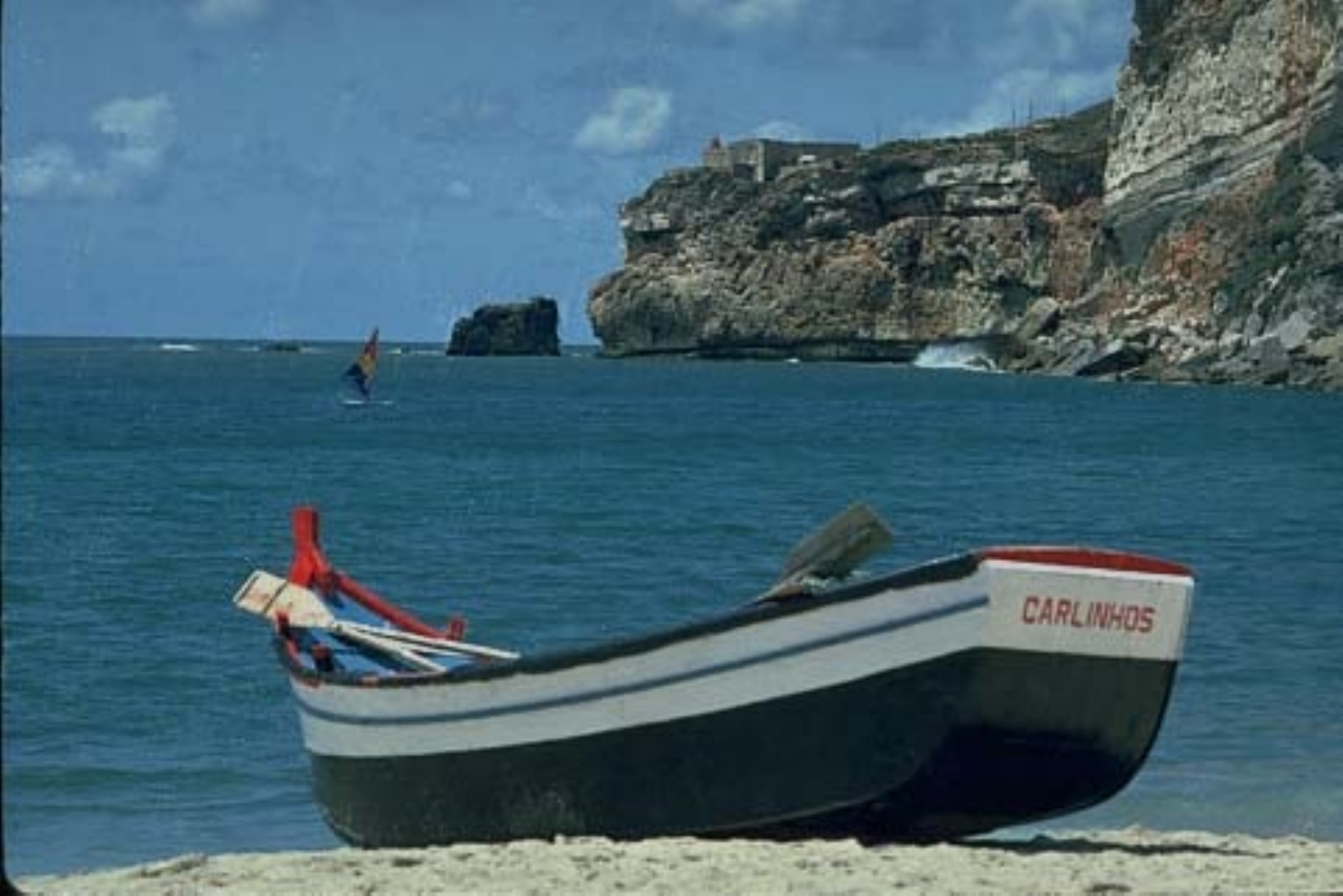}\\
		\footnotesize 32.87 / 0.91 &\footnotesize 32.12 / 0.92 &\footnotesize 29.69 / 0.86 &\footnotesize 33.40 / 0.96 &\footnotesize 28.18 / 0.89  &\footnotesize 37.10 / 0.97\\
		\includegraphics[width=0.16\textwidth]{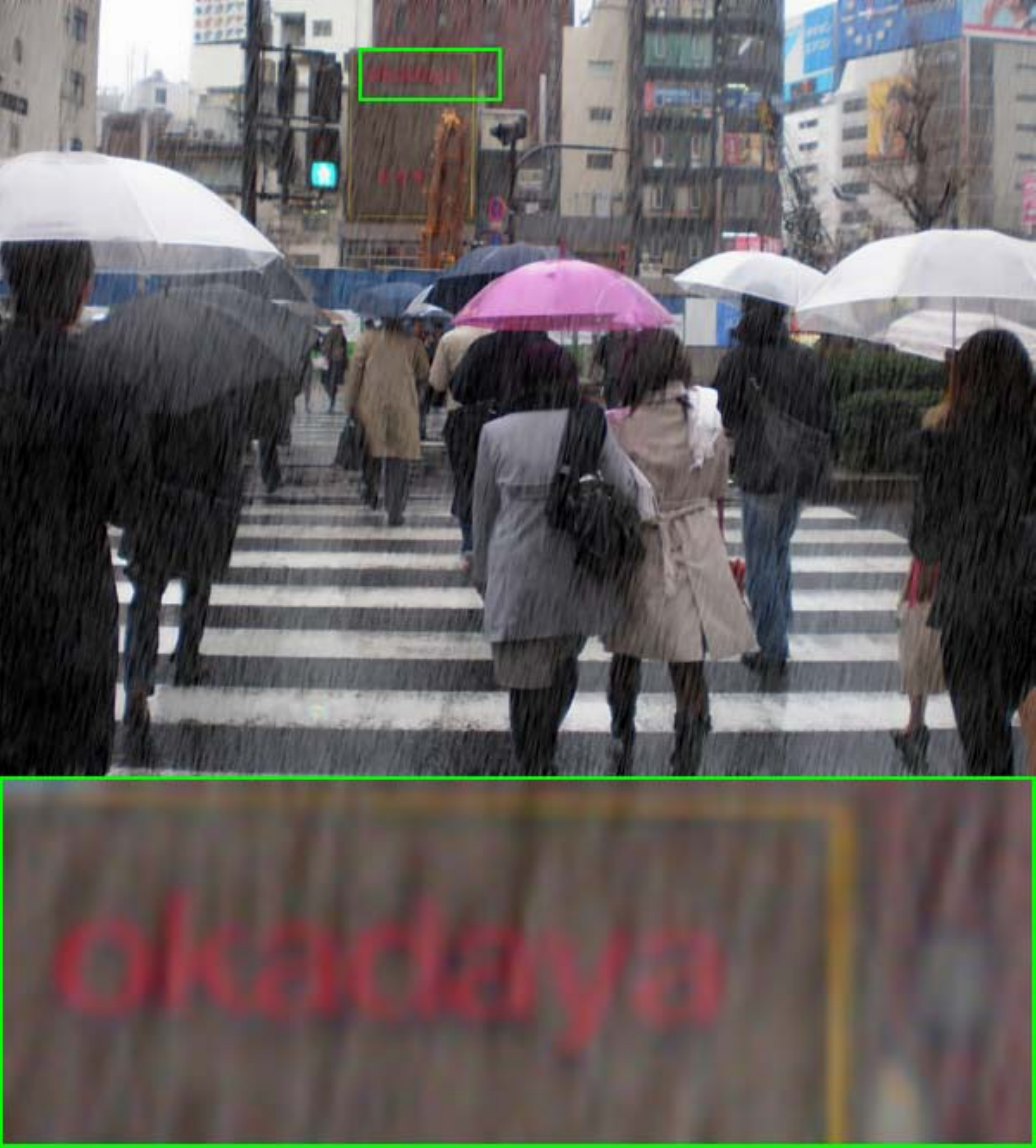}&
		\includegraphics[width=0.16\textwidth]{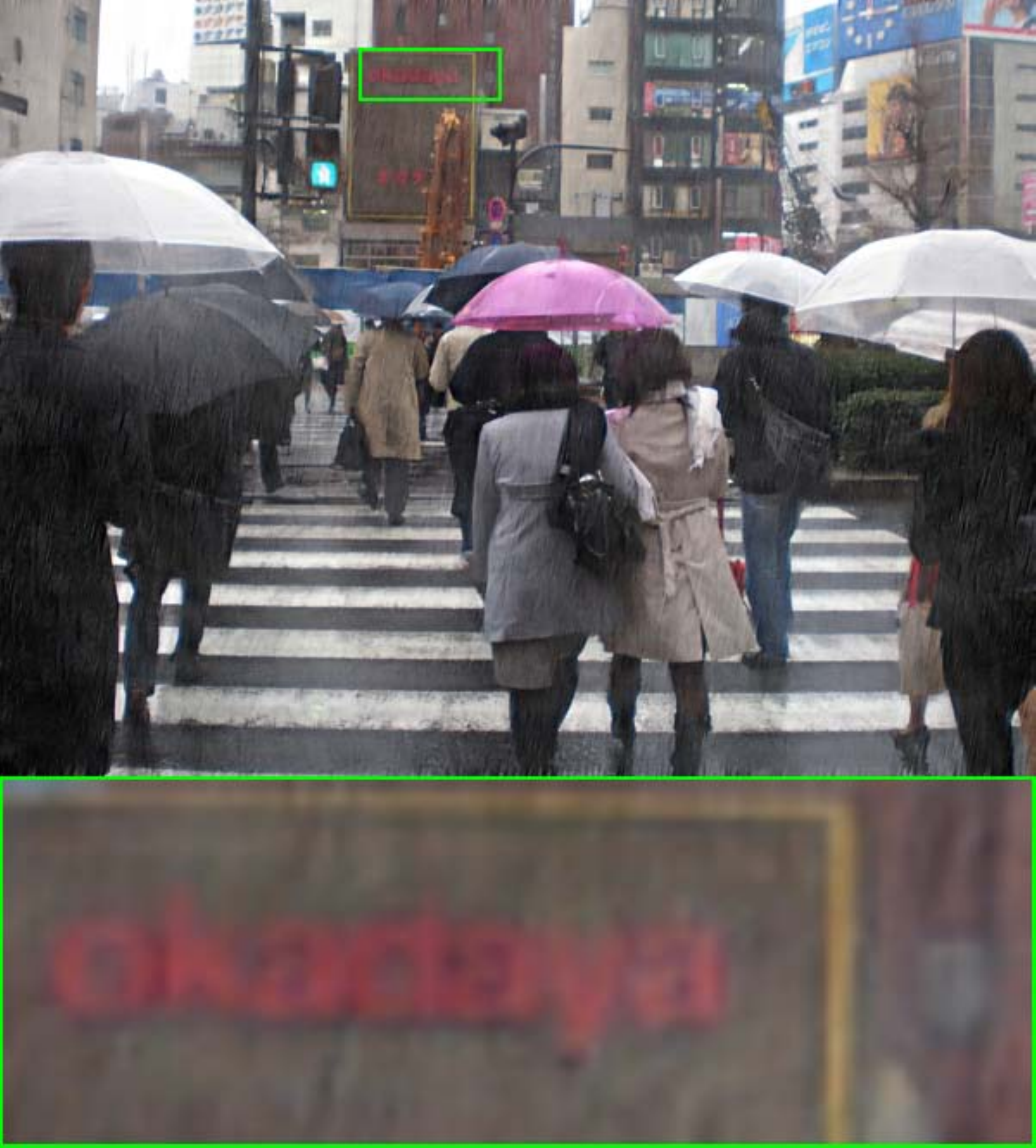}&
		\includegraphics[width=0.16\textwidth]{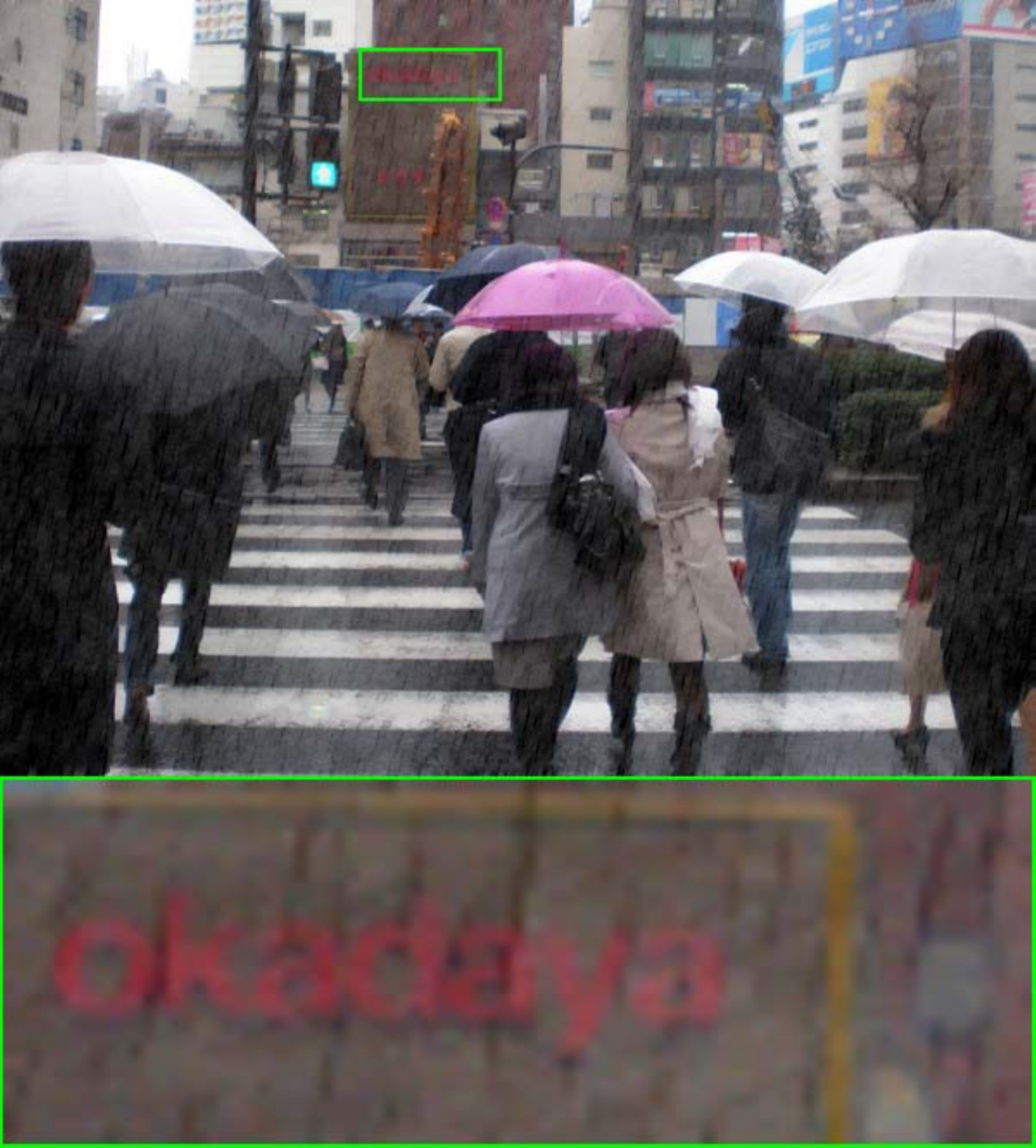}&
		\includegraphics[width=0.16\textwidth]{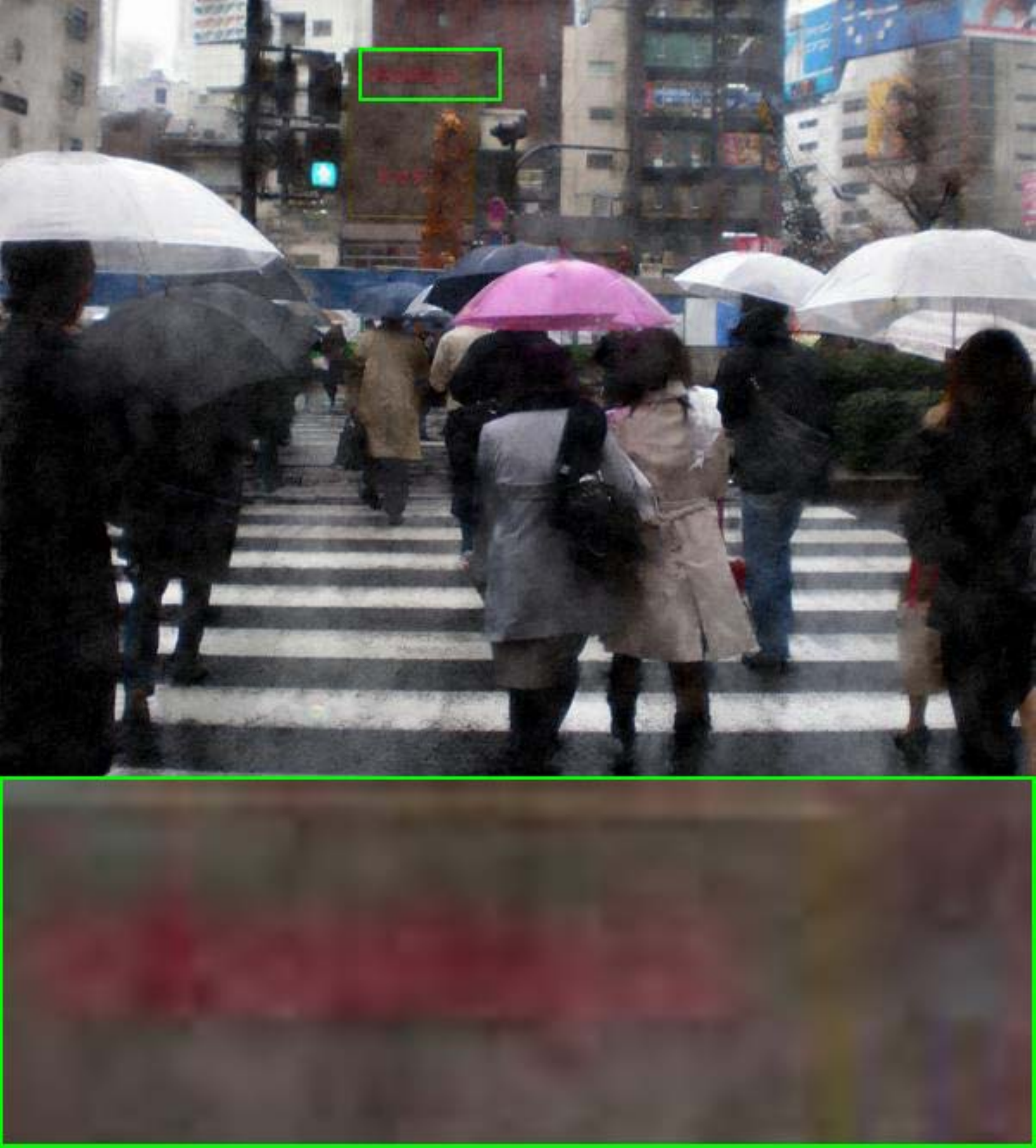}&
		\includegraphics[width=0.16\textwidth]{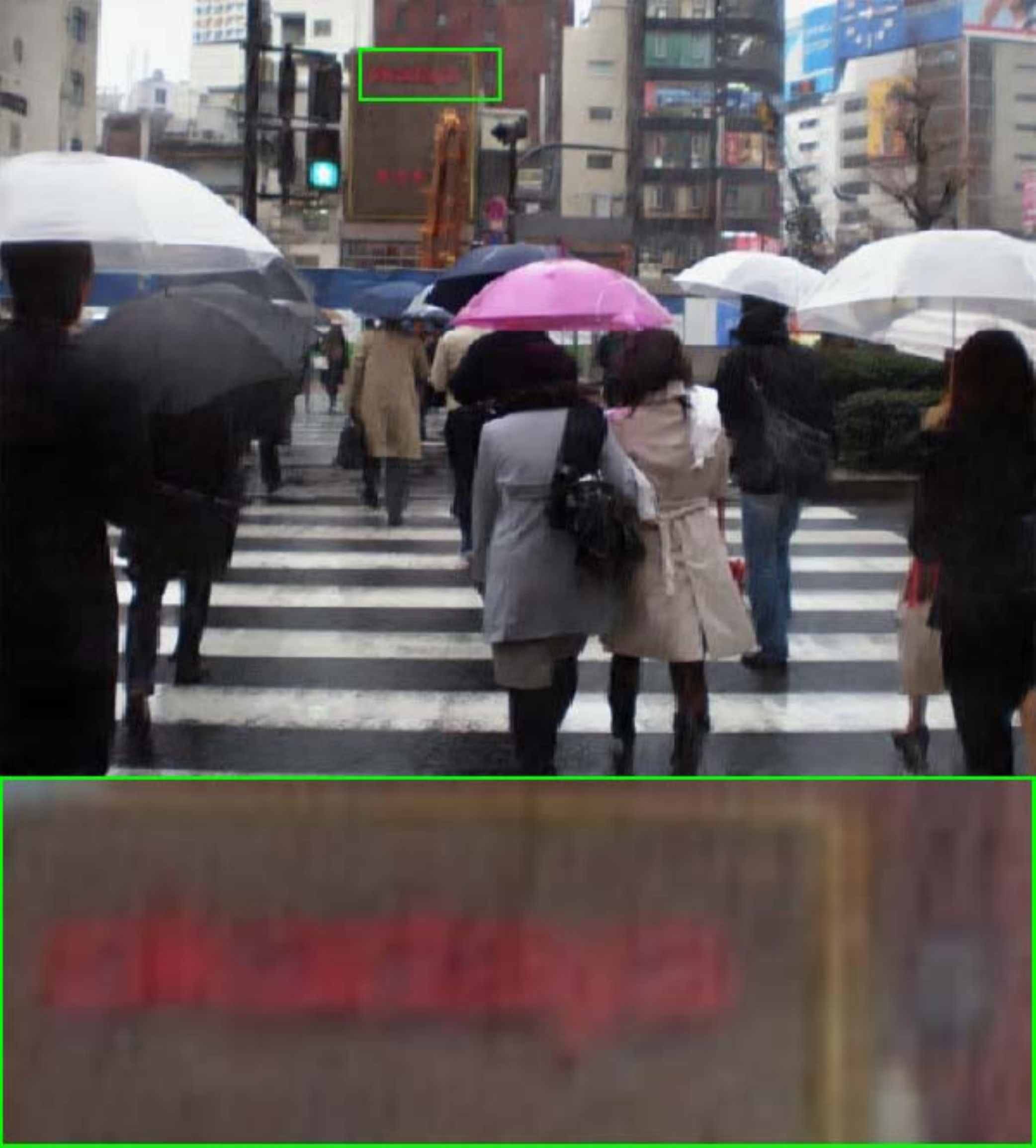}&
		\includegraphics[width=0.16\textwidth]{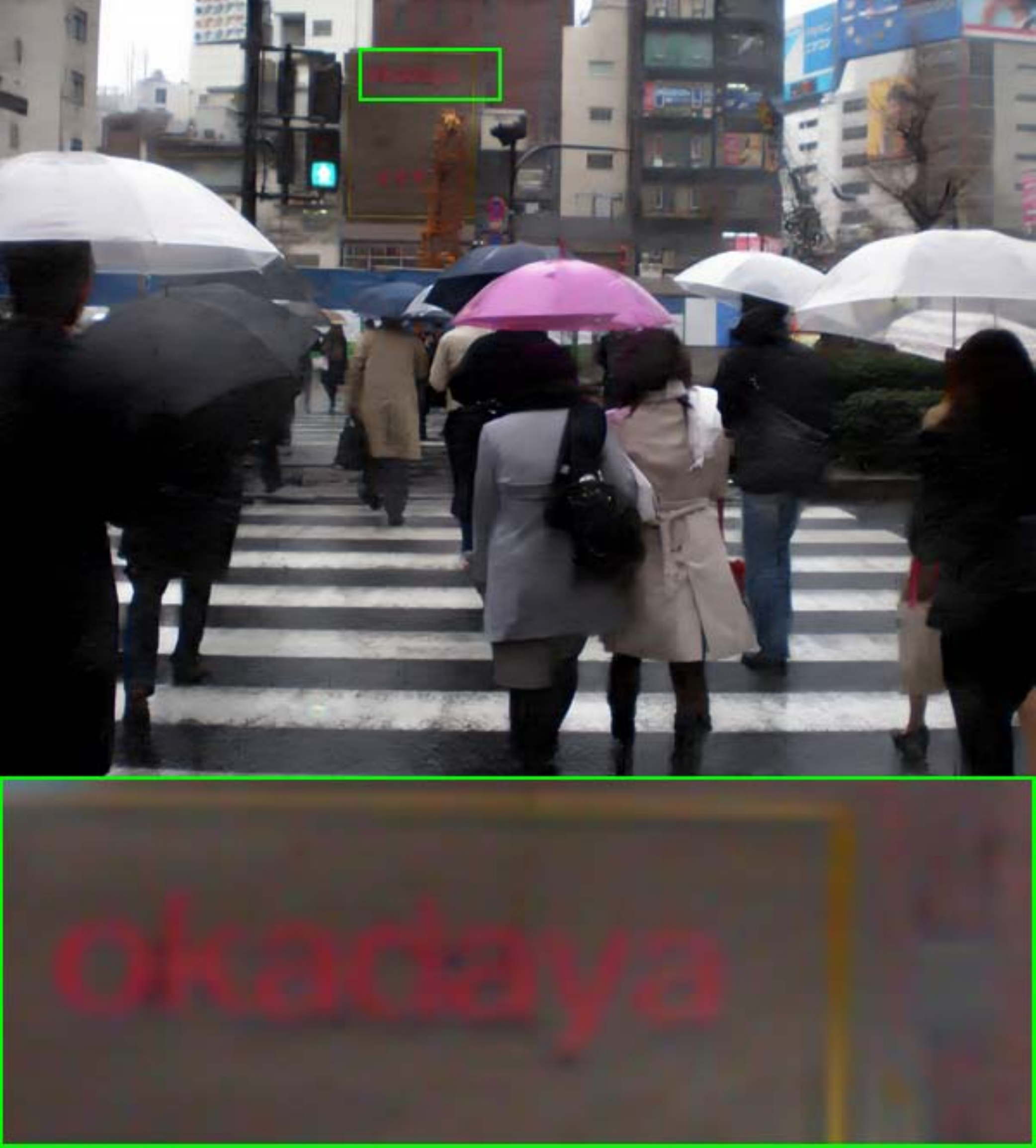}\\
		\includegraphics[width=0.16\textwidth]{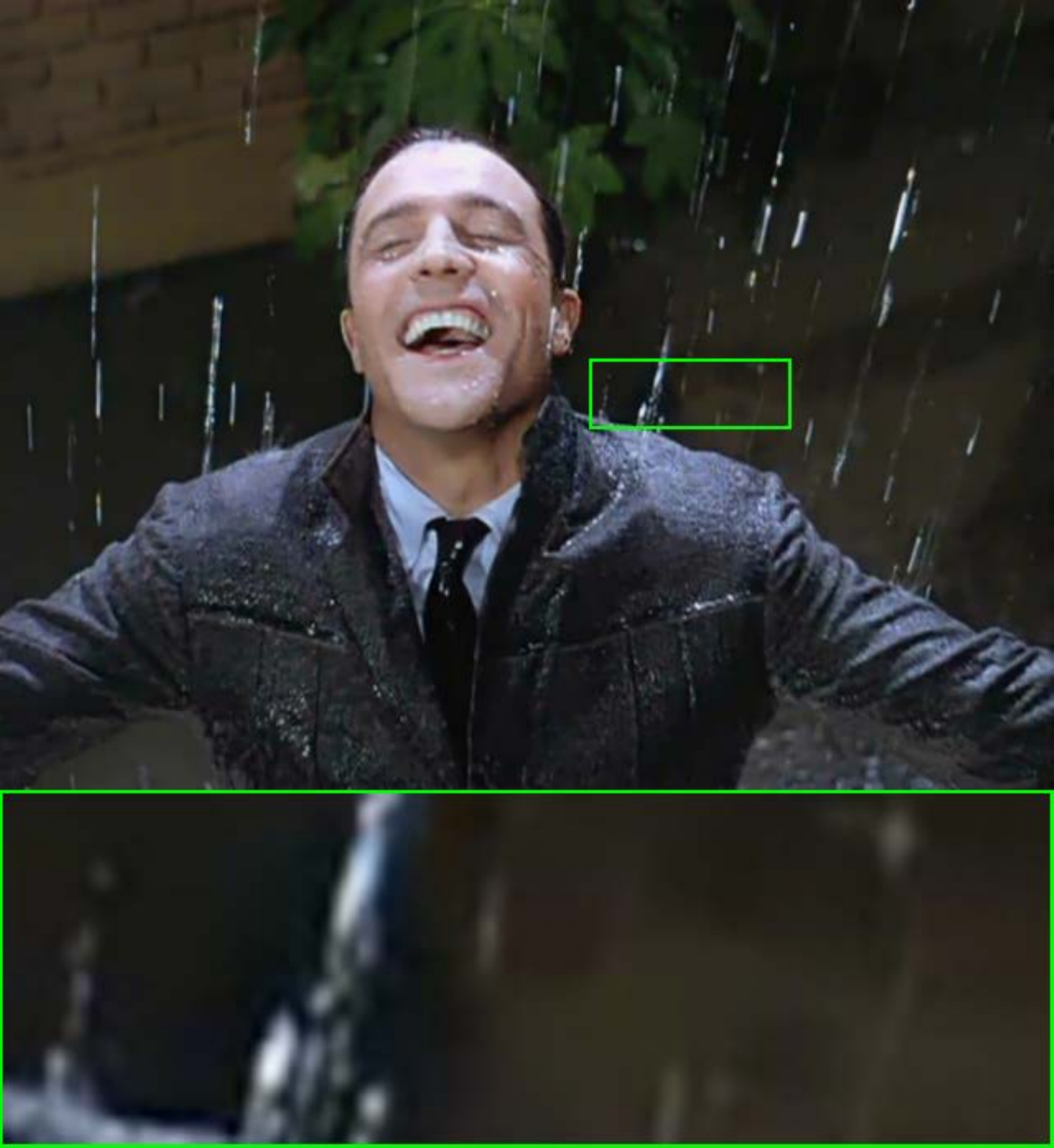}&
		\includegraphics[width=0.16\textwidth]{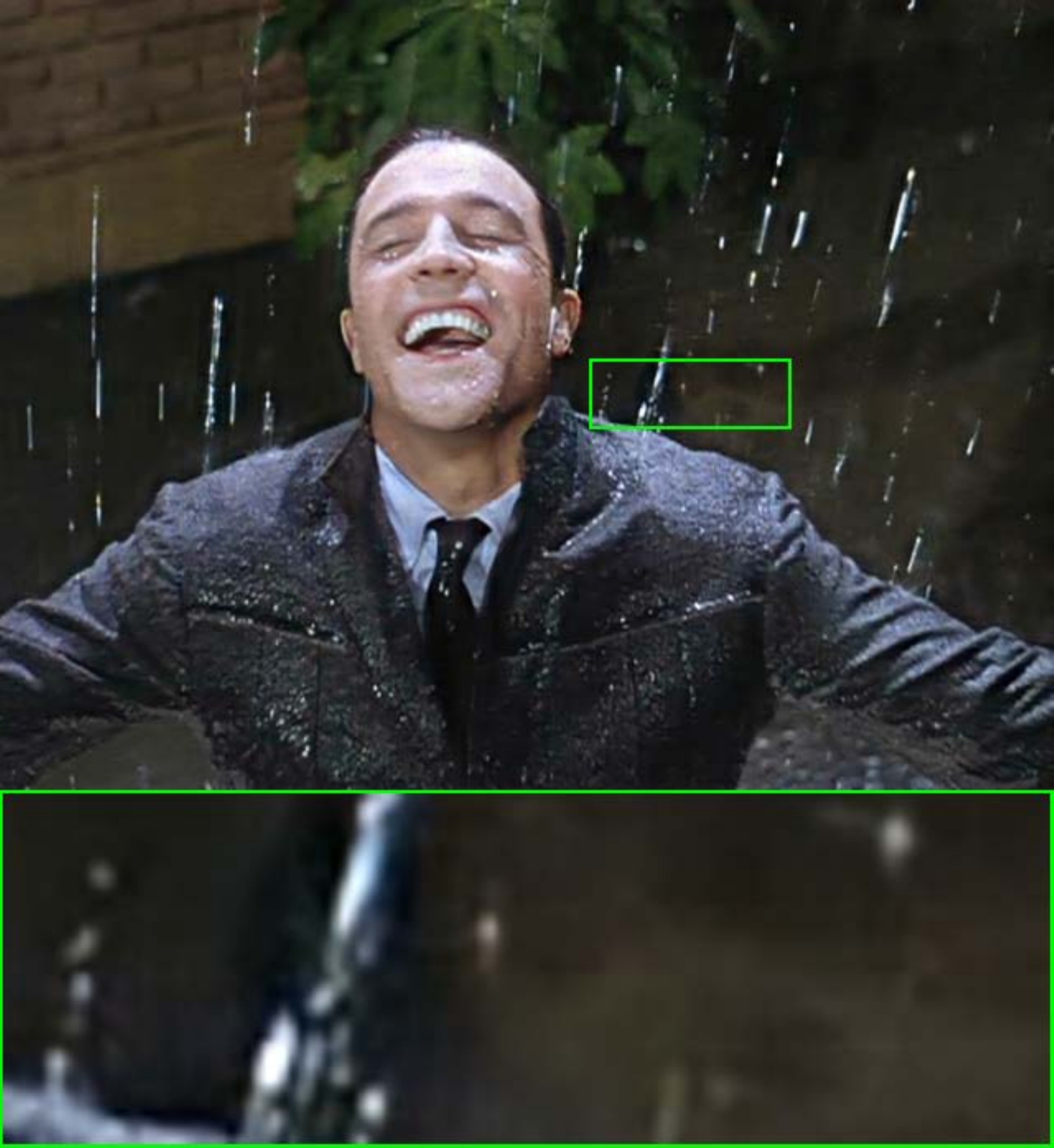}&
		\includegraphics[width=0.16\textwidth]{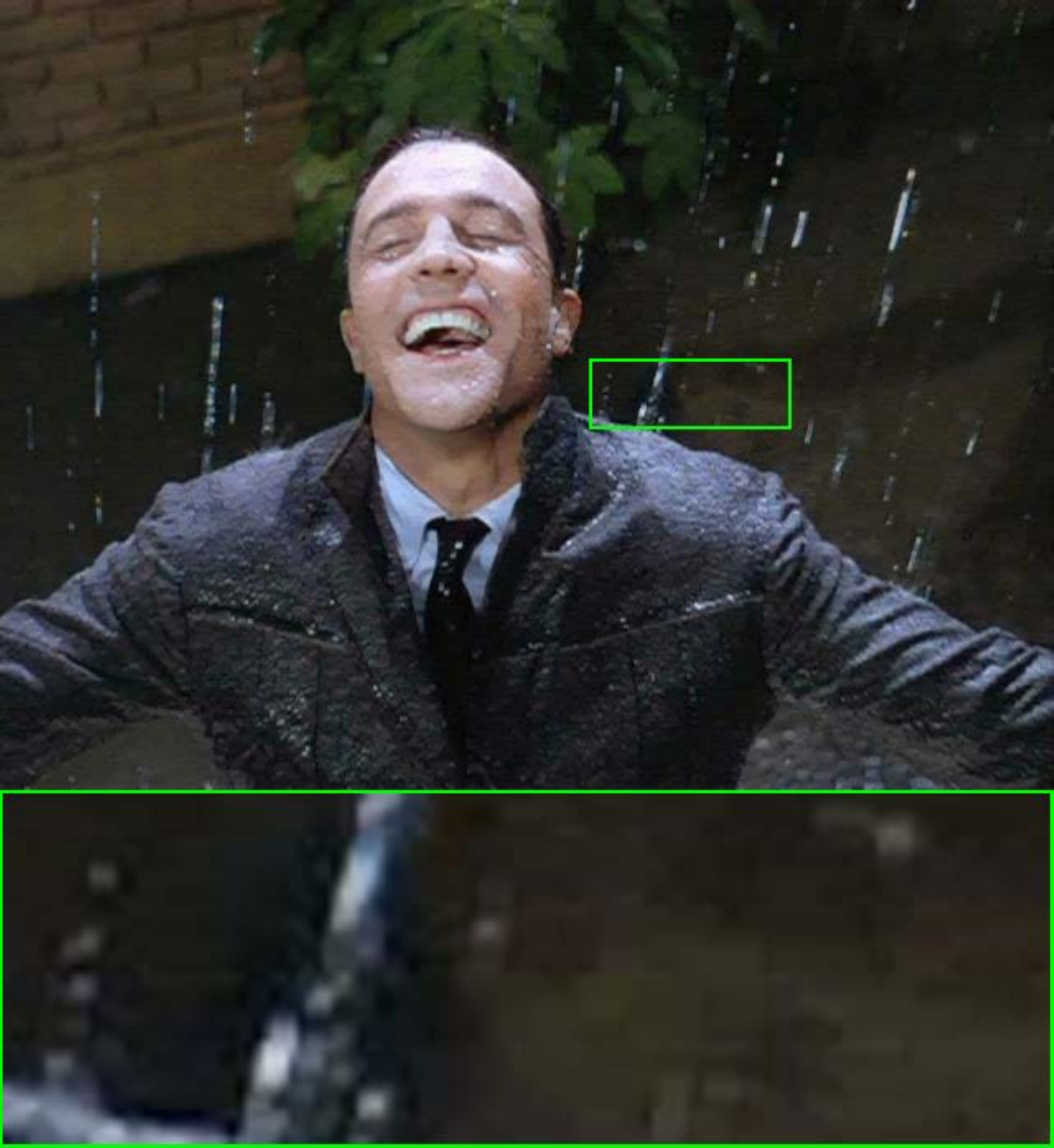}&
		\includegraphics[width=0.16\textwidth]{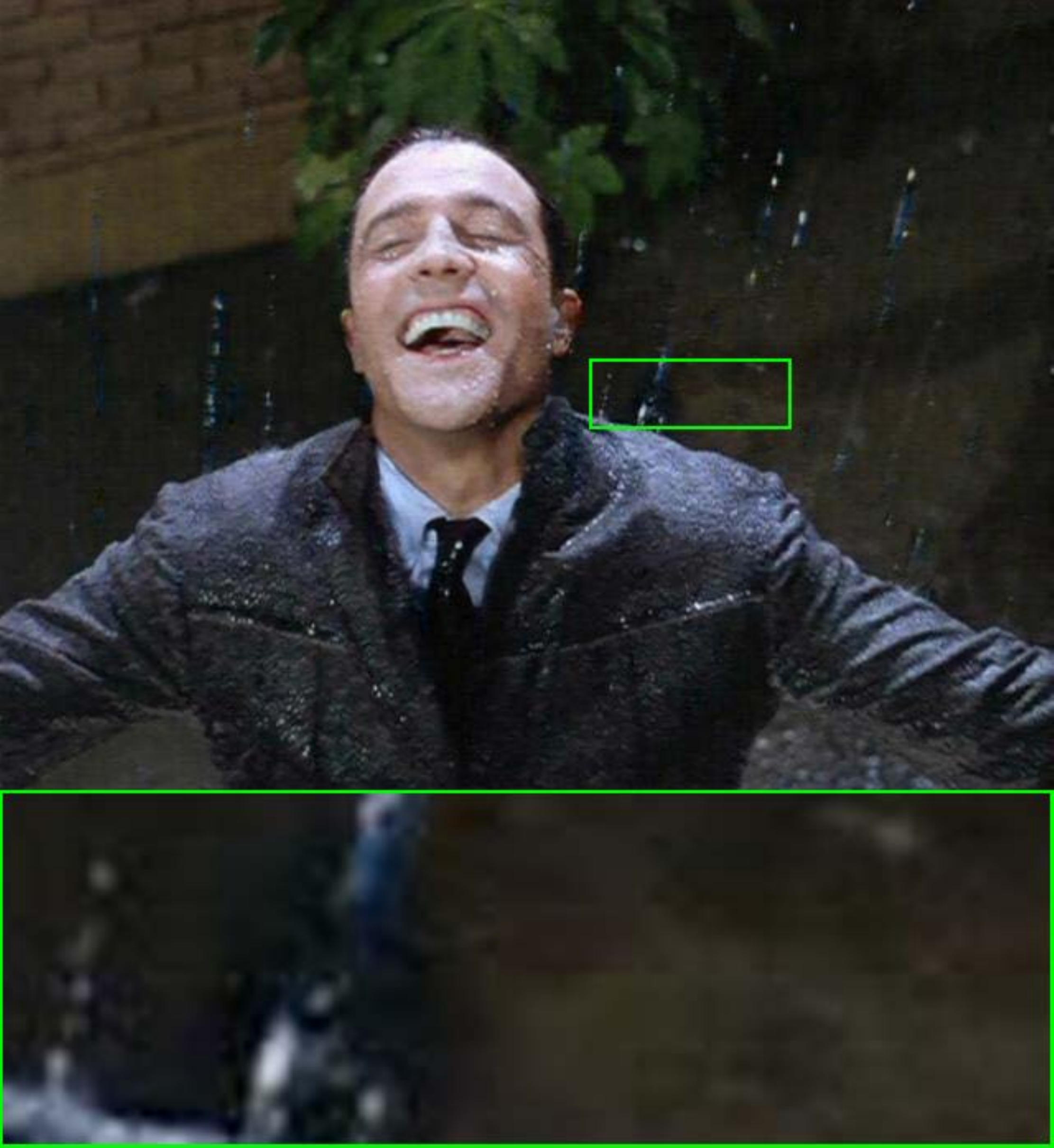}&
		\includegraphics[width=0.16\textwidth]{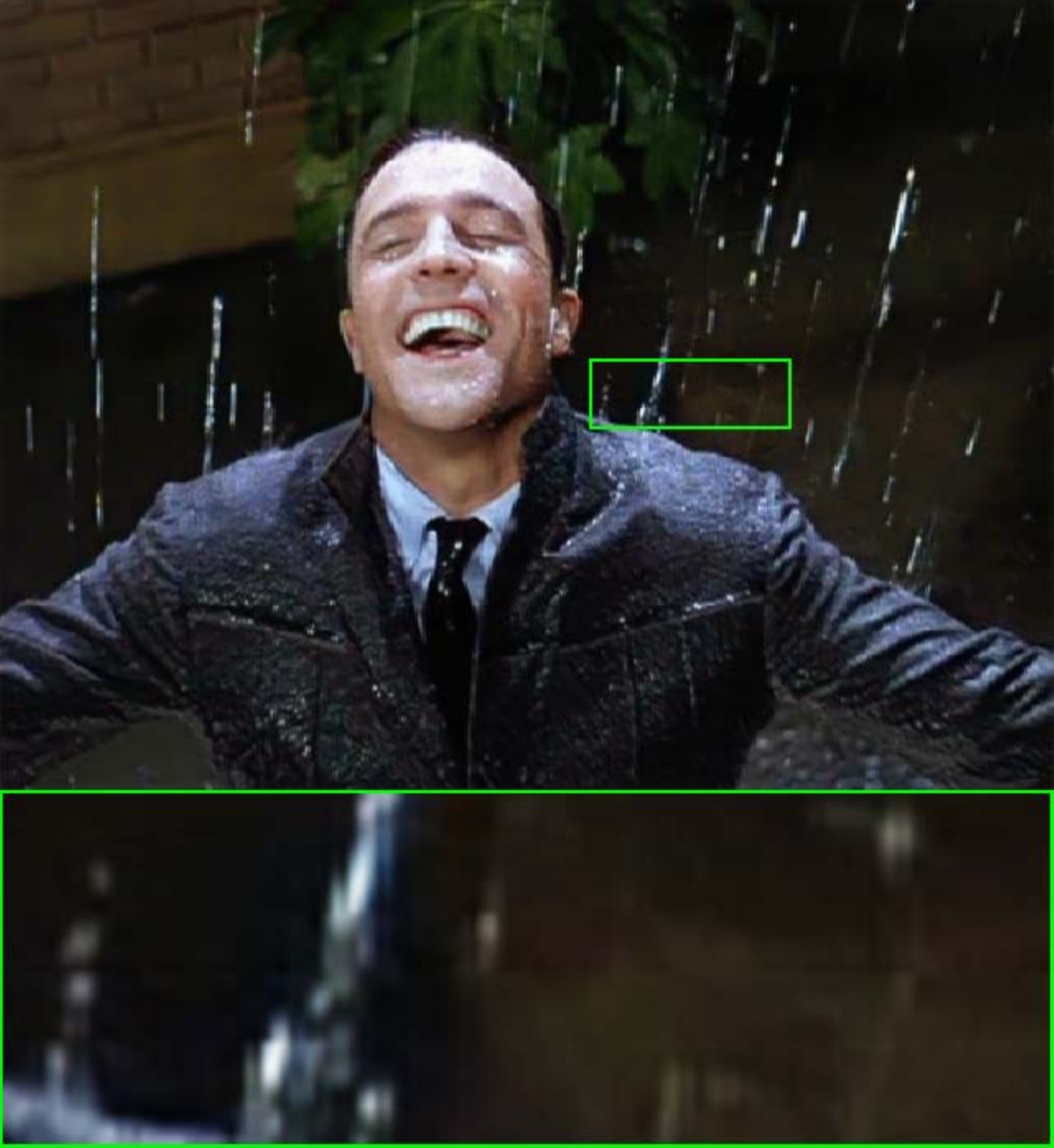}&
		\includegraphics[width=0.16\textwidth]{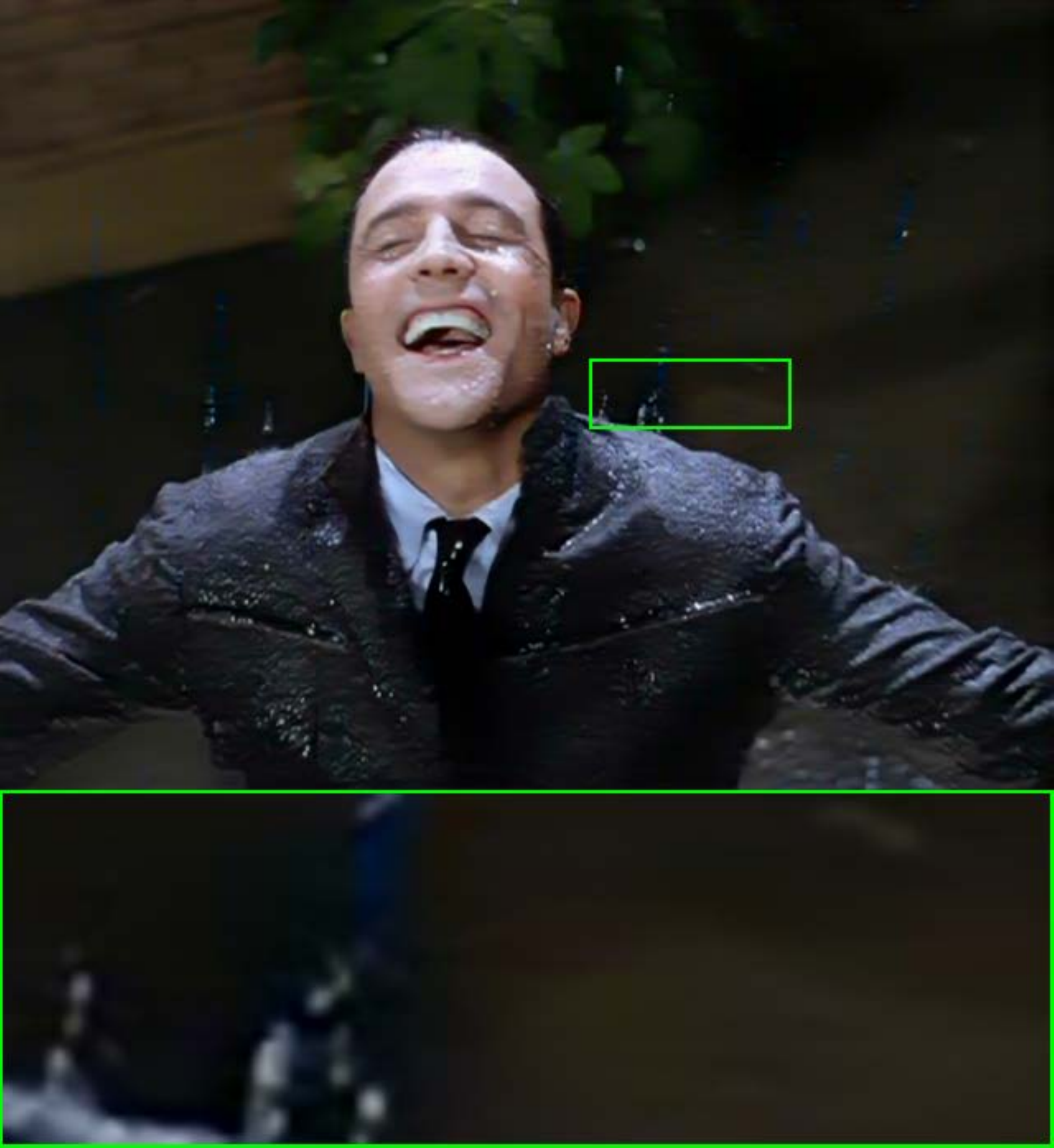}\\
		\footnotesize GMM & \footnotesize DDN & \footnotesize UGSM &\footnotesize JORDER  & \footnotesize DID-MDN &\footnotesize Ours \\
	\end{tabular}
	\caption{Rain streaks removal comparison results of synthesized image (top row) from Test1 and real-world rainy image (middle and bottom row).}
	\label{fig:derain_real}
\end{figure*}

As the proposed TLF is a proximal-based scheme, it is necessary to compare our method with the existing proximal-based first-order approaches (such as the classical PG, mAPG~\cite{li2015accelerated}, niAPG~\cite{yao2016efficient} and momentum APG for nonconvex problem (APGnc)~\cite{li2017convergence}) with additional 1\textperthousand~noise level and $27\times 27$ kernel size. The comparative results are shown in Fig.~\ref{fig:CurvesCompPGs} with relative error after log transformation ($\log(\|\mathbf{x}^{k+1}-\mathbf{x}^k\|/\|\mathbf{x}^{k+1}\|)$), reconstruction error ($\|\mathbf{x}^{k+1}-\mathbf{x}_{gt}\|/\|\mathbf{x}_{gt}\|$), functional value $F(\mathbf{x})$ and PSNR, where $\mathbf{x}_{gt}$ denotes ground truth. Here, we set stop criterion as $5e-4$. Observed that, our TLF converges faster than other PGs under the same stop condition. Moreover, DTLF performs the best both in PSNR scores and iteration steps. The corresponding visual results are shown in Fig.~\ref{fig:VisualDeblurPGs} with PSNR and SSIM (i.e., structural similarity) scores. Observed that the proposed DTLF remove more noise while keeping details.

\subsection{State-of-the-art Comparisons}\label{sec:app-LBPP}

We then evaluated our DTLF on a variety of low-level vision applications including image deblurring, image inpainting and rain streaks removal. 

\subsubsection{Image Deblurring} 
In this task, matrix $\mathbf{K}$ stated in the application part is blur kernel and $\mathbf{b}$ is a blurry image. As usual, blurry images are synthesized by applying a blur kernel and adding additive Gaussian noise. We considered the circular boundary conditions when performing the convolution. We reported the results of our DTLF on Sun \emph{et al}' challenging benchmark~\cite{Sun2013Edge} and Levin \emph{et al}' dataset~\cite{levin2009understanding}, together with other state-of-the-art methods including traditional methods (e.g., IDDBM3D~\cite{danielyan2012bm3d}, TV~\cite{wang2008new}, parameters learning-based methods (e.g.,  EPLL~\cite{zoran2011learning}, CSF~\cite{schmidt2014shrinkage}) and network-based methods (e.g., MLP~\cite{schuler2013machine}, IRCNN~\cite{zhang2017learning}, FDN~\cite{kruse2017learning}). It can be seen that in Tab.~\ref{tab:deblur}, our method obtained the best quantitative performance (i.e., PSNR and SSIM metrics) on Sun \emph{et al}' and Levin \emph{et al}' dataset. Also, it is faster than most of the compared methods. Moreover, we illustrated the visual comparisons on real image deblurring~\cite{kohler2012recording} with unknown blur kernel which is estimated roughly by Pan \emph{et al.}' method~\cite{pan2016robust}. As shown in Fig.~\ref{fig:realdeblurres}, our method reserve more details.

\subsubsection{Image Inpainting} 
In image inpainting task, matrix $\mathbf{K}$ and $\mathbf{b}$ denote mask and the missing pixels image, respectively. This task aims to recover missing pixels of observation. We compared our DTLF with TV~\cite{osher2005iterative}, FOE~\cite{Roth2009Fields}, VNL~\cite{arias2011variational},  ISDSB~\cite{He2014Iterative}, WNNM~\cite{gu2017weighted} and IRCNN~\cite{zhang2017learning} on this task. We normalized the pixel values to $[0,1]$ and then generated random masks of different levels including $40\%$, $60\%$, $80\%$ missing pixels on CBSD68 dataset~\cite{zhang2017beyond}. Moreover, we collected 12 different text masks to further evaluate the proposed methods. Tab.~\ref{tab:inpainting_comp} presents the PSNR and SSIM comparison results with different masks. Observed that our method perform better than the state-of-the-art approaches regardless the proportion of masks. Furthermore, in comparison with the visual performance of DTLF with other methods, we presented the $80\%$ missing pixels comparisons in Fig.~\ref{fig:inpainting} with top five scores (TV, FoE, ISDSB, WNNM and IRCNN). It can be observed that our approach successfully recovered the image with better visual effect, especially in the zoomed-in regions with rich details.

\subsubsection{Single-image Rain Streaks Removal} 
In this part, we evaluated our method on rain streaks removal task, in comparison with the state-of-the-art including GMM~\cite{li2016rain}, DN~\cite{fu2017clearing}, DDN~\cite{fu2017removing}, JCAS~\cite{gu2017joint}, JORDER~\cite{yang2017deep}, UGSM~\cite{Jiang2018FastDeRain}, and DID-MDN~\cite{zhang2018density}. For measuring the performance quantitatively, we employ PSNR and SSIM as the metrics. 

We first illustrated how to obtain temporary variable $\mathcal{N}_b(\mathbf{x}_b^k;\bm{\mathcal{W}}_{T,b}^k)$ and $\mathcal{N}_r(\mathbf{x}_r^k;\bm{\mathcal{W}}_{T,r}^k)$. As for $\mathcal{N}_b(\mathbf{x}_b^k;\bm{\mathcal{W}}_{T,b}^k)$, we adopt the same CNNs architecture and training strategy as in Section V-A for different noise level. As for $\mathcal{N}_r(\mathbf{x}_r^k;\bm{\mathcal{W}}_{T,r}^k)$, we train a series of $\mathcal{N}_r$ to adapt different rain streaks in each iteration. In other words, the architecture $\mathcal{N}_r(\mathbf{x}_r^k;\bm{\mathcal{W}}_{T,r}^k)$ is to estimate iteration behavior (i.e., the remaining rain streaks will be decreased with the iteration increase) of variable $\mathbf{x}_r$. The training dataset consists of 900 clean images from~\cite{yang2017deep}. We randomly selected $100 \times 100$ clean/rainy patch pairs from the synthesized rainy data as training samples. In fact, the learning rate, loss function and update strategy are setting the same with above.

We then compared the developed approach with state-of-the-art rain streaks removal methods. Tab.~\ref{tab:derain_set} reported the quantitative scores (i.e., PSNR and SSIM) on three different datasets: (1) Test1 is obtained by~\cite{li2016rain}, including 12 synthesized rain images with only one type of rain streaks rendering technique; (2) Rain100H is collected from BSD200~\cite{martin2001database} and synthesized with five streak directions; (3) Test2 consists of 7 images, using photorealistic rendering of rain streaks~\cite{tariq2007rain}. For $\{\mathcal{N}_r(\cdot,\bm{\mathcal{W}}_{T,r}^k)\}_{k=1,\cdots,N}$, two different trained results (i.e., trained by heavy and light rain streaks) are adopted during iteration. According to the quantitative results reported in Tab.~\ref{tab:derain_set}, we provided visual comparisons for five methods with relative high PSNR and SSIM scores (i.e., GMM, DDN, UGSM, JORDER, DID-MDN) in Fig.~\ref{fig:derain_real}. It can be observed that the proposed DTLF scheme can reserves more details with very few rain streaks left no matter in synthesized or real-world rainy images.

\begin{table}[t]
	\renewcommand\arraystretch{1.15}
	\centering
	\caption{Averaged PSNR and SSIM results among different rain streaks removal methods on three different rain streaks synthesized form: Test1 (Rain12)~\protect\cite{li2016rain}, Test2 (Rain7)~\protect\cite{tariq2007rain} and Rain100H~\protect\cite{martin2001database}.}
	\setlength{\tabcolsep}{1.8mm}{
		\begin{tabular}{c|cc|cc|cc}
			\hline
			\multirow{2}*{Methods} &\multicolumn{2}{c|}{Test1} &\multicolumn{2}{c|}{Test2} &\multicolumn{2}{c}{Rain100H}\\
			\cline{2-7}
			&PSNR &SSIM & PSNR & SSIM & PSNR & SSIM\\
			\hline
			JCAS  & 31.61 & 0.9183 & 28.37 & 0.9050 & 15.23 & 0.5150 \\	
			GMM & 32.33 & 0.9042 & 29.57 & 0.8878 & 14.26 & 0.4225\\
			DN    & 30.30 & 0.9151 & 27.34 & 0.9009 & 13.72 & 0.4417\\	
			DDN   & 33.41 & 0.9442 & 29.91 & 0.9433 & 17.93 & 0.5655\\
			UGSM  & 33.30 & 0.9253 & 27.07 & 0.9220 & 14.90 & 0.4674\\
			JORDER &35.93 & 0.9530 & \textbf{35.11} & 0.9732 & 23.45 & 0.7490\\
			DID-MDN & 29.08 & 0.9015 & 27.92 & 0.8695 & 17.28 & 0.6035\\
			Ours   & \textbf{36.55} & \textbf{0.9652} & 34.88 & \textbf{0.9737} & \textbf{24.51} & \textbf{0.8053} \\
			\hline	
	\end{tabular}}
	\label{tab:derain_set}
\end{table}

\begin{table}[t]
	\renewcommand\arraystretch{1.15}
	\centering
	\caption{Averaged PSNR and SSIM on Fu~\protect\emph{et al.}'~\protect\cite{fu2017removing} test set.}
	\setlength{\tabcolsep}{1.8mm}{
		\begin{tabular}{c|cc||c|cc}
			\hline
			Methods &PSNR & SSIM & Methods &PSNR & SSIM\\
			\hline
			DN  & 25.51 & 0.8885 & DID-MDN & 27.94 & 0.8696 \\	
			UGSM    & 26.38 & 0.8261 &  JORDER & 27.50  & 0.8515 \\
			DDN & 29.90 & 0.8999 & Ours & \textbf{31.18} & \textbf{0.9152}\\
			\hline
	\end{tabular}}
	\label{tab:derain_ddn}
\end{table}

\begin{figure}[htb!]
	\begin{tabular}{c@{\extracolsep{0.2em}}c}
		\includegraphics[width=0.232\textwidth]{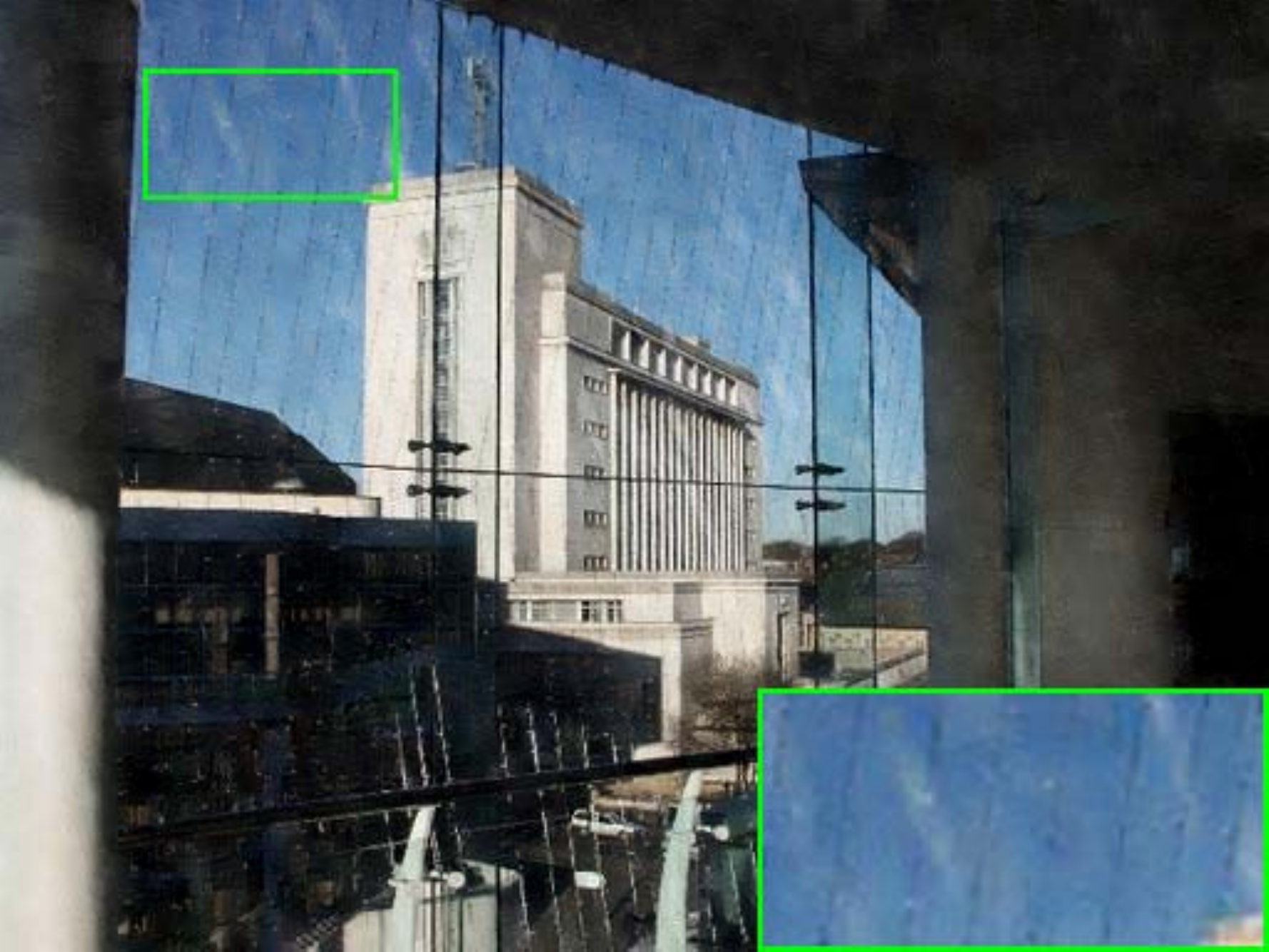}&
		\includegraphics[width=0.232\textwidth]{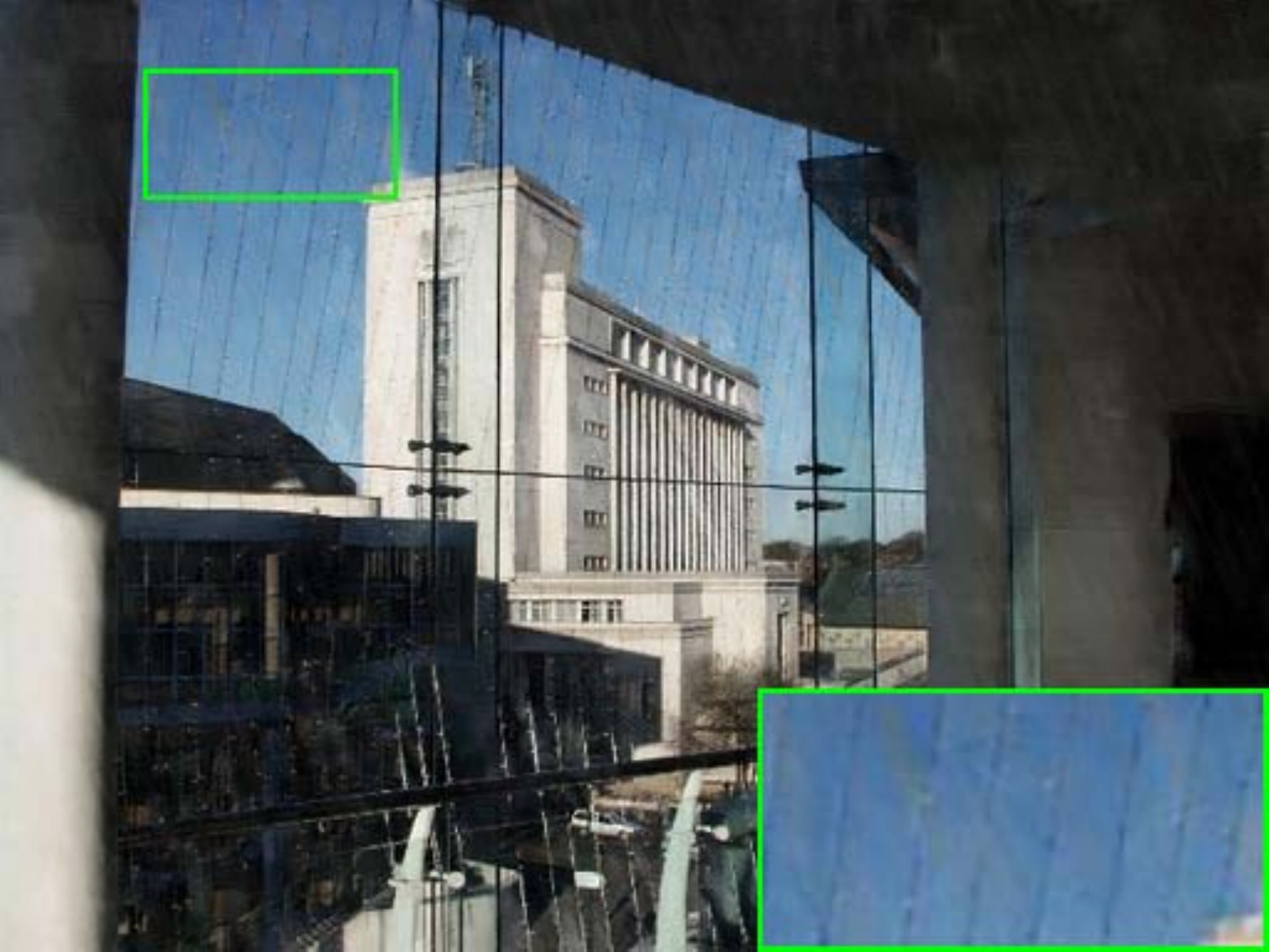}\\
		\footnotesize DDN (27.79 / 0.8371) &\footnotesize Ours (28.48 / 0.8531)
	\end{tabular}
	\caption{Image rain streaks removal results (PSNR / SSIM scores) on Fu \emph{et al.}' test set.}
	\label{fig:derainddn}
\end{figure}

Furthermore, we conducted experiments on a large scale dataset with 1,400 test images (collected by~\cite{fu2017removing}). The quantitative and qualitative results are demonstrated in Tab.~\ref{tab:derain_ddn} and Fig.~\ref{fig:derainddn}, respectively. Obviously, our method performed much better than the compared ones.

\section{Conclusions}
In this paper, we proposed a new perspective to understand and formulate constraints for MAP-based image models. By introducing an energy model as the constraint for MAP-type objective, we first established Task-driven Latent Feasibility (TLF), a simple bilevel scheme to solve the nonconvex image model formulated in Eq.~\eqref{eq:model_orginal}. We also designed DTLF to incorporate data-driven feasibility to further improve the performance of image modeling. Thanks to our specifically designed iteration control mechanisms, the convergence of TLF and DTLF can be strictly proved in theory. Extensive experiments on challenging image processing tasks also demonstrated the superiority of our method against other state-of-the-art approaches.

\ifCLASSOPTIONcaptionsoff
  \newpage
\fi




\bibliographystyle{IEEEtran}
\bibliography{reference1}

\begin{IEEEbiography}[{\includegraphics[width=1in,height=1.25in,clip,keepaspectratio]{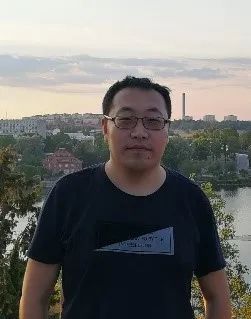}}]{Risheng Liu} received the B.S. and Ph.D. degrees both in mathematics from the Dalian University of Technology in 2007 and 2012, respectively. He was a visiting scholar in the Robotic Institute of Carnegie Mellon University from 2010 to 2012. He served as Hong Kong Scholar Research Fellow at the Hong Kong Polytechnic University from 2016 to 2017. He is currently a professor with DUT-RU International School of Information Science \& Engineering, Dalian University of Technology. He was awarded the ``Outstanding Youth Science Foundation'' of the National Natural Science Foundation of China. His research interests include machine learning, optimization, computer vision and multimedia. He was a co-recipient of the IEEE ICME Best Student Paper Award in both 2014 and 2015. His two papers were also selected as Finalist of the Best Paper Award in ICME 2017. He is a member of the IEEE and ACM. 
\end{IEEEbiography}
\vspace{-1.2cm}
\begin{IEEEbiography}[{\includegraphics[width=1in,height=1.25in,clip,keepaspectratio]{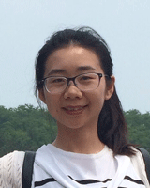}}]{Pan Mu} received the B.S. degree in Applied Mathematics from Henan University, China, in 2014, the M.S. degree in Operational Research and Cybernetics from Dalian University of Technology, China, in 2017. She is currently pursuing the PhD degree in Computational Mathematics at Dalian University of Technology, Dalian, China. She is with the Key Laboratory for Ubiquitous Network and Service Software of Liaoning Province, Dalian University of Technology, Dalian, China. Her research interests include computer vision, machine learning, optimization and control.
\end{IEEEbiography}
\vspace{-1.2cm}
\begin{IEEEbiography}[{\includegraphics[width=1in,height=1.25in,clip,keepaspectratio]{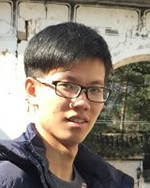}}]{Jian Chen} received the B.E. degree in Electronic Commerce from Dalian University of Technology, Dalian, China, in 2018. He is currently pursuing the master degree in software engineering at Dalian University of Technology, Dalian, China. He is with the Key Laboratory for Ubiquitous Network and Service Software of Liaoning Province, Dalian University of Technology, Dalian, China. His research interests include rain streaks removal, computer vision, deep learning.
\end{IEEEbiography}
\vspace{-1.2cm}
\begin{IEEEbiography}[{\includegraphics[width=1in,height=1.25in,clip,keepaspectratio]{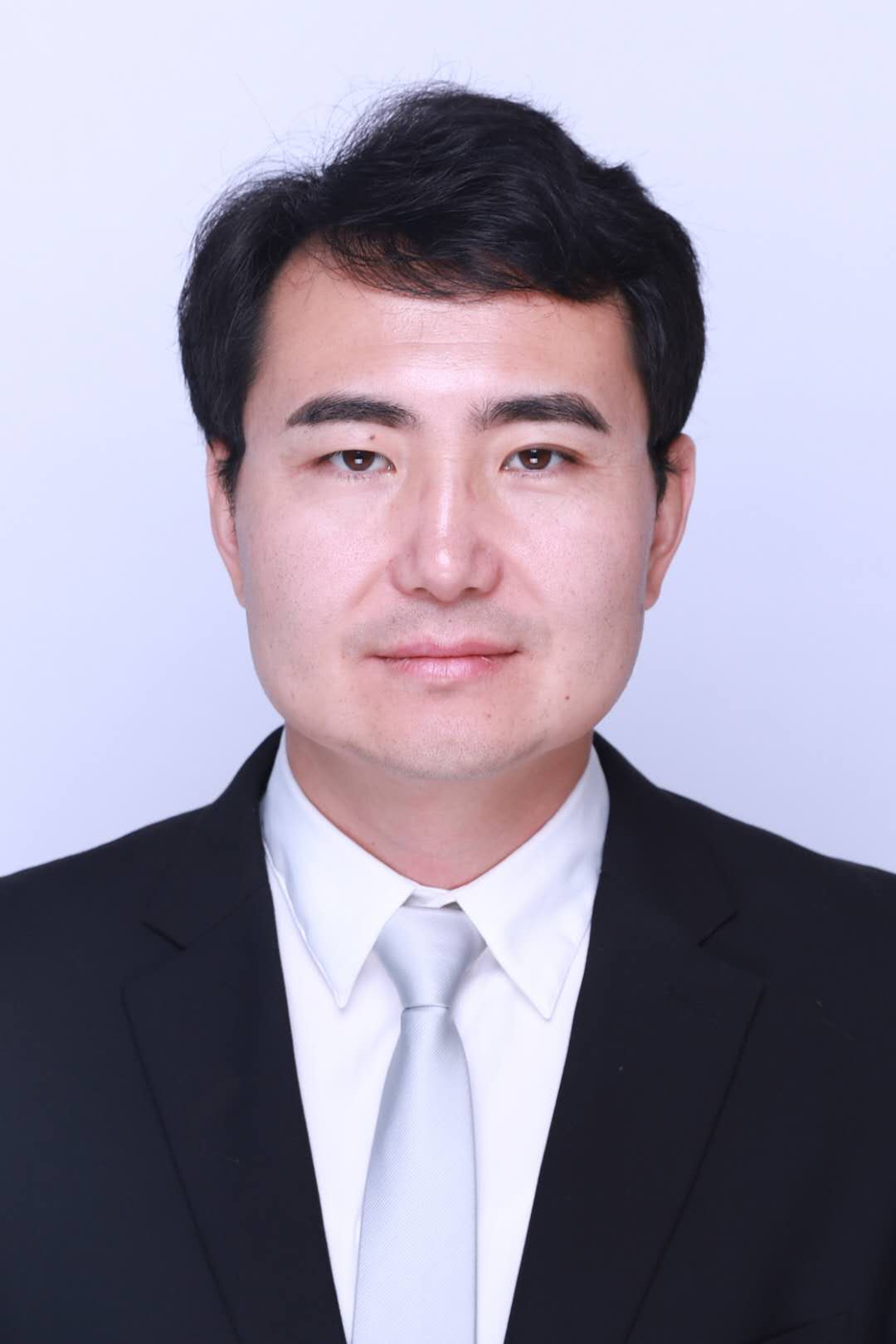}}]{Xin Fan} received the B.E. and Ph.D. degrees at Xi'an Jiaotong University, Xi'an, China, in 1998 and 2004, respectively. He was with Oklahoma State University, Stillwater, and the University of Texas Southwestern Medical Center, Dallas, from 2006 to 2009, as a post-doctoral research fellow. He joined Dalian University of Technology, Dalian, China, in 2009, where he is currently a full professor. He won the 2015 IEEE ICME Best Student Award as the corresponding author, and two papers were selected as the Finalist of the Best Paper Award at ICME 2017. His current research interests include image processing and machine vision.
\end{IEEEbiography}
\vspace{-1.2cm}
\begin{IEEEbiography}[{\includegraphics[width=1in,height=1.25in,clip,keepaspectratio]{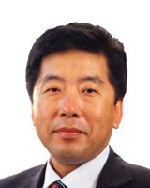}}]{Zhongxuan Luo} received the B.S. degree in Computational Mathematics from Jilin University, China, in 1985, the M.S. degree in Computational Mathematics from Jilin University in 1988, and the PhD degree in Computational Mathematics from Dalian University of Technology, China, in 1991. He has	been a full professor of the School of Mathematical Sciences at Dalian University of Technology since 1997. He is also with the Peng Cheng Laboratory, Shenzhen, China. His research interests include computational geometry and computer vision. 
\end{IEEEbiography}

\end{document}